\newtheorem{theorem}{Theorem}[section]
\newtheorem{proposition}[theorem]{Proposition}
\newtheorem{lemma}[theorem]{Lemma}
\newtheorem{assumption}[theorem]{Assumption}
\newcommand{\reals}{\mathbb{R}}
\newcommand{\E}{\mathbb{E}}
\newcommand{\sign}{\mathrm{sign}}
\newcommand{\relu}[1]{\left[ #1 \right]_+}
\newcommand{\set}[1]{\left\{#1\right\}}
\newcommand{\ba}{\mathbf{a}}
\newcommand{\be}{\mathbf{e}}
\newcommand{\bx}{\mathbf{x}}
\newcommand{\bw}{\mathbf{w}}
\newcommand{\bb}{\mathbf{b}}
\newcommand{\bu}{\mathbf{u}}
\newcommand{\bv}{\mathbf{v}}
\newcommand{\bz}{\mathbf{z}}
\newcommand{\by}{\mathbf{y}}
\newcommand{\bo}{\mathbf{o}}
\newcommand{\btheta}{\boldsymbol{\theta}}
\newcommand{\Lcal}{\mathcal{L}}
\newcommand{\Ocal}{\mathcal{O}}
\newcommand{\Dcal}{\mathcal{D}}
\newcommand{\Fcal}{\mathcal{F}}
\newcommand{\Hcal}{\mathcal{H}}
\newcommand{\Ncal}{\mathcal{N}}
\newcommand{\Pcal}{\mathcal{P}}
\newcommand{\norm}[1]{\left\|#1\right\|}
\newcommand{\inner}[1]{\left\langle#1\right\rangle}
\newcommand{\p}[1]{\left(#1\right)}
\newcommand{\pcc}[1]{\left[#1\right]}
\newcommand{\abs}[1]{\left|#1\right|}
\newcommand{\one}[1]{\mathbbm{1}\left\{#1\right\}}
\newcommand{\pr}{\mathbb{P}}
\newcommand{\crelu}[1]{\left[#1\right]_{+}^{1}}
\DeclareMathOperator{\erf}{erf}
\DeclareMathOperator{\ierf}{erf^{-1}}
\newcommand{\secref}[1]{Sec.~\ref{#1}}
\newcommand{\subsecref}[1]{Subsection~\ref{#1}}
\renewcommand{\eqref}[1]{Eq.~(\ref{#1})}
\newcommand{\lemref}[1]{Lemma~\ref{#1}}
\newcommand{\thmref}[1]{Thm.~\ref{#1}}
\newcommand{\propref}[1]{Proposition~\ref{#1}}
\newcommand{\appref}[1]{Appendix~\ref{#1}}
\def\moverlay{\mathpalette\mov@rlay}
\def\mov@rlay#1#2{\leavevmode\vtop{%
   \baselineskip\z@skip \lineskiplimit-\maxdimen
   \ialign{\hfil$\m@th#1##$\hfil\cr#2\crcr}}}
\newcommand{\charfusion}[3][\mathord]{
    #1{\ifx#1\mathop\vphantom{#2}\fi
        \mathpalette\mov@rlay{#2\cr#3}
      }
    \ifx#1\mathop\expandafter\displaylimits\fi}
\newcommand{\printfnsymbol}[1]{%
  \textsuperscript{\@fnsymbol{#1}}%
}
\title{Optimization-Based Separations for Neural Networks}
\author{Itay Safran}
\author{Jason D.\ Lee}
\affil{Princeton University}
\date{}
\setlist[itemize]{leftmargin=*}
\setlist[enumerate]{leftmargin=*}
\begin{document}

\maketitle

\begin{abstract}
    Depth separation results propose a possible theoretical explanation for the benefits of deep neural networks over shallower architectures, establishing that the former possess superior approximation capabilities. However, there are no known results in which the deeper architecture leverages this advantage into a provable optimization guarantee. We prove that when the data are generated by a distribution with radial symmetry which satisfies some mild assumptions, gradient descent can efficiently learn ball indicator functions using a depth 2 neural network with two layers of sigmoidal activations, and where the hidden layer is held fixed throughout training. By building on and refining existing techniques for approximation lower bounds of neural networks with a single layer of non-linearities, we show that there are $d$-dimensional radial distributions on the data such that ball indicators cannot be learned efficiently by any algorithm to accuracy better than $\Omega(d^{-4})$, nor by a standard gradient descent implementation to accuracy better than a constant. These results establish what is to the best of our knowledge, the first optimization-based separations where the approximation benefits of the stronger architecture provably manifest in practice. Our proof technique introduces new tools and ideas that may be of independent interest in the theoretical study of both the approximation and optimization of neural networks.
\end{abstract}

	\section{Introduction}
	In recent years, several theoretical papers have provided a possible explanation for the benefits of using deep neural networks over shallower architectures, by proving \emph{depth separation} results \citep{eldan2016power,telgarsky2016benefits,poole2016exponential,daniely2017depth,yarotsky2017error,liang2017deep,safran2017depth,poggio2017and,safran2019depth,venturi2021depth}. Simply put, these results show the existence of some function that can be approximated efficiently by a neural network with a certain depth, whereas a shallower architecture might require exponentially many more neurons to achieve the same accuracy. Such results may suggest that the reason that deeper networks perform better in practice stems from their superior \emph{approximation} capabilities. In contrast, the network configuration that is achieved in practice when a neural network is trained is strongly dictated by the learning algorithm used in its \emph{optimization}. While it is clear that the ability of a certain architecture to approximate a function efficiently is a necessary condition for being able to learn it successfully, it is not obvious to what extent - if at all - this is also a sufficient condition. Indeed, several recent works have shown that in some cases, the same `extreme' properties that give rise to the inapproximability of these hard functions when using shallower architectures may also prove detrimental for optimization using the deeper architecture \citep{malach2019deeper,malach2021connection}. This suggests that certain results used to demonstrate the benefits of depth such as the one shown in \citet{telgarsky2016benefits} are in a sense `too strong', and that arguably, one of the main incentives for studying the approximation power of neural networks (namely, its necessity for proving a positive optimization result) is absent in such cases. Moreover, if inapproximability using a shallow architecture may in fact turn out to imply the inability to optimize efficiently using the deep architecture in a more general setting, then this would suggest that the approximation capabilities of a neural network have little impact on its optimization, perhaps further demotivating the study of neural network approximation from a practical perspective.
	
	In light of the above, to theoretically explain the practical success of depth via the lens of function approximation, we would need to show that the benefits of depth will also manifest in practice. While a learning algorithm can simply deterministically return the efficient approximation of the hard function by utilizing depth for a specific problem setting, this is unsatisfactory; such a setting would not explain why deeper neural networks achieve better results in practice, since these are typically trained using first-order optimization methods such as gradient descent (GD) or its more sophisticated variants. We are now lead to the following question: 
	\begin{quote}
		\emph{Can we show an optimization-based separation result, in which a simple architecture cannot approximate efficiently 
		(implying the hardness of learning with such architecture), yet a stronger architecture can provably learn with a polynomial sample size, network size and running time, using a first-order optimization method?}
	\end{quote}
	In this paper, we answer this question in the affirmative by proving that a neural network with a single hidden layer with sigmoidal activations and an additional non-linearity in the output neuron can learn ball indicator functions efficiently (functions of the form $\bx\mapsto1$ if and only if $\norm{\bx}_2\le \lambda$ for some $\lambda>0$). In contrast, by using a reduction technique of \citet{safran2017depth}, we show that ball indicators cannot be approximated efficiently to accuracy better than $\Omega(d^{-4})$ using depth 2 neural networks when we are not allowed to use an additional non-linearity in the output neuron. Moreover, using a technique introduced in \citet{daniely2017depth}, we provide new lower bounds for approximating ball indicators, showing that getting better than constant accuracy requires that either the width or the Euclidean norm of the weights of the network grow exponentially in either $d$ or $\frac{1}{\sqrt{\epsilon}}$ ($\epsilon$ being the desired accuracy). By bounding the progress of GD in a standard setting, this leads to a similar exponential lower bound on its running time. Overall, these establish what is to the best of our knowledge, the first separation results where the benefits of the stronger class of networks for function approximation manifests in practice and facilitates efficient optimization. 
	
	The remainder of this paper is structured as follows: After presenting our contributions in more detail below, we turn to discuss related work in \subsecref{subsec:related_work}. In \secref{sec:prelims}, we first introduce relevant definitions and notation before we present our setting and problem formulation in \subsecref{subsec:notation}. Thereafter, our main results are presented in \subsecref{subsec:main_result}, followed by a formal definition of the distributions they use in \subsecref{subsec:construction}. \secref{sec:lower_bounds} contains our approximation lower bounds for ball indicator functions that our main results build upon. In \secref{sec:learning}, we turn to show a positive optimization result for learning ball indicators.

	\subsection{Our Contributions}
	\begin{itemize}
		\item
		We show that there exist a sequence $\{\Dcal_d\}_{d=2}^{\infty}$ of $d$-dimensional heavy-tailed distributions and a sequence of constants $\{\lambda_d\}_{d=2}^{\infty}$ where $\lambda_d\in[1,2]$ for all $d$, such that no neural network of depth 2 with a single layer of non-linearities and width less than $\Omega\p{\exp(\Omega(d))}$ can approximate ball indicator functions with radii $\{\lambda_d\}_{d=2}^{\infty}$ to accuracy better than $\Omega(d^{-4})$ (\thmref{thm:inapproximability}). 
		
		\item
		In a slightly different approximation setting, we show that there exist a sequence $\{\Dcal'_d\}_{d=2}^{\infty}$ of $d$-dimensional compactly-supported distributions and a sequence of constants $\{\lambda_d\}_{d=2}^{\infty}$ where $\lambda_d\in[1,2]$ for all $d$, such that any neural neural network of depth 2 with a single layer of non-linearities which achieves accuracy $\Ocal(m^{-2})$ for natural $m\ge1$ on ball indicators with radii $\{\lambda_d\}_{d=2}^{\infty}$ must have width or weight norm exponential in either $d$ or $m$ (\thmref{thm:daniely_inapproximability}). 
		
		\item
		We prove that under certain assumptions on the distribution of the data (Assumption~\ref{asm:d_dist_assumption}), a neural network with two layers of non-linear activations (see \eqref{eq:architecture} for the formal architecture) can efficiently learn data labeled by a ball indicator function with radius in $[1,2]$, and attain arbitrarily small population loss by using GD with a standard initialization (Assumption~\ref{asm:init}), where the hidden layer is held fixed throughout training (\thmref{thm:gd_convergence}). 
		
		\item
		Welding our lower bounds together with the positive optimization result, we derive our main theorems (\thmref{thm:main} and \thmref{thm:secondary}) which are optimization-based separation results. We show that no algorithm returning a depth 2 network with a single layer of non-linearities can efficiently learn ball indicators with respect to $\{\Dcal_d\}_{d=2}^{\infty}$ to accuracy better than $\Omega(d^{-4})$, whereas optimizing using a standard GD setting on a depth 2 network cannot efficiently learn ball indicators with respect to $\{\Dcal'_d\}_{d=2}^{\infty}$ to more than constant accuracy with overwhelming probability. In contrast, optimizing using a standard GD setting on our stronger architecture which employs more than a single layer of non-linear activations will succeed in learning with respect to both distributions with high probability.
	\end{itemize}
	Next, we turn to discuss and compare related work in the literature that is most relevant to ours.
	
	\subsection{Related Work}\label{subsec:related_work}
	
		\textbf{Depth Separations in Neural Networks.} For concreteness, let us focus here on works that separate depth 2 from 3 where the target function and the domain of approximation are continuous. In their seminal work, \citet{eldan2016power} provided the first such depth separation. The authors show that a certain radial function consisting of a superposition of polynomially many thin shells cannot be approximated to accuracy better than an absolute constant using a depth 2 network, unless the width is exponential in the input dimension. Building on their work, \citet{safran2017depth} provide a more natural separating target function by reducing the more complicated hard function to a ball indicator. Additionally, the authors also demonstrate empirically that depth 3 networks outperform depth 2 networks in learning ball indicators, even if the shallow network has many more trainable weights. While we build on a similar proof strategy as in the aforementioned paper, our work differs from theirs in that they show that for a ball indicator function of any radius, there exists a distribution under which it is hard to approximate using depth 2 networks, whereas our work shows that for a specific distribution there exists a ball indicator function with radius in the interval $[1,2]$ which is hard to approximate using a depth 2 network. Moreover, our main result rigorously proves the learnability of the target function used in their experiment. An additional similar separation is given by \citet{daniely2017depth}, where the approximation is in a compact domain and the target function oscillates polynomially many times in the input dimension, and where an exponential bound on the magnitude of the weights of the depth 2 network is imposed. \citet{venturi2021depth} provide separation results with respect to product distributions on the data and target functions that do not necessarily possess radial symmetry.
	
		\textbf{Unlearnability of Depth Separations.}
		As discussed earlier in the introduction, there are known cases where the same property that gives rise to a negative approximation result for shallow networks is also detrimental for optimization using the deeper architecture. \citet{malach2019deeper} have shown a depth separation result on fractal-like distributions with a natural coarse-to-fine structure that prompts the use of depth to achieve efficient approximation. On the other hand, they show that such distributions are intrinsically difficult to learn using gradient methods since the gradient will be exponentially close to zero with overwhelming probability, even if the architecture being optimized is able to achieve zero loss on the distribution. Building on a similar technique, \citet{malach2021connection} show that if a depth 3 neural network cannot approximate a target function efficiently, then a deeper architecture will also be prone to failure if optimized using gradient methods. As a specific corollary, this implies that the depth separation result shown in \citet{telgarsky2016benefits}, which exploits the ability of deep ReLU networks to compute piece-wise linear functions with a large number of segments, also results in a function that is hard to optimize over using gradient methods. In \citet{vardi2021size}, the authors show that there is a natural proof barrier for proving depth separations between networks of depth 4 and deeper architectures, when the target functions have `benign' properties, such as polynomially-bounded Lipschitz constant and being able to be computed efficiently. This, in a certain sense, suggests the opposite implication of \citep{malach2019deeper,malach2021connection}; that it might be difficult to show depth separations with functions that have some properties that are desired in practice to ensure learnability.
		
		\textbf{Learning a Single Neuron.} In our positive optimization result, by holding the weights of the hidden layer fixed throughout training, we essentially reduce our problem to that of learning with a single neuron using gradient methods. This setting has been studied in several recent works. \citet{yehudai2020learning,vardi2021learning} show the convergence of GD and some of its variants on the population loss objective, with constant or high probability and with or without a bias term, under various distributional assumptions on the data and in the realizable setting. In contrast, our work deals with the non-realizable case, assumes a ball indicator target function, and provides a convergence guarantee with high rather than constant probability for radial distributions. \citet{frei2020agnostic} provide an agnostic (in the non-realizable setting) analysis for learning with finite data while making few assumptions on the problem. However, their result does not imply linear convergence of GD as opposed to ours, and does not guarantee attaining loss which is arbitrarily close to the global minimum.
		
		\textbf{Neural Network Approximation Using Random Features.} Many works used the technique of random features to approximate various target functions, which is potentially used to derive positive optimization results for neural networks \citep{bach2017breaking,ji2019neural,bai2019beyond,allen2019learning,yehudai2019power,chen2020towards,ghorbani2021linearized}. A major difference that sets our work apart from these is that other works perform their approximation on a bounded domain, whereas we show the convergence of random features for heavy-tailed distributions that generate data with a very large norm. On the downside, we merely show the approximation of ball indicator target functions, whereas the works cited above show approximation of a family of polynomials or even more general target function classes.
		
		\textbf{Learning More than a Single Layer of Non-linearities.} The problem of learning a neural network which has more than a single layer of non-linearities has been addressed by several papers in recent years. In \citet{goel2019learning}, the authors study a similar architecture to the one we use to show a positive optimization result, where an additional non-linear activation is employed on the output neuron. In \citet{allen2019learning}, the authors prove that a certain class of depth 3 target functions can be improperly learned using depth 3 networks with a different architecture. \citet{allen2020backward} show that deep networks with quadratic activations can heirarchically learn certain concept classes. However, the learning algorithm used in \citet{goel2019learning} is different than GD, and all three papers crucially rely on the input having unit norm or some other form of boundedness, whereas our work accommodates for certain inputs whose norm distribution does not have finite moments. In contrast, our paper merely establishes convergence for the target class of ball indicator functions, and not for a broader family of target functions analyzed in the aforementioned papers. \citet{chen2020towards} show an optimization-based separation between a depth 3 neural network in a suitably-defined quadratic NTK regime and a network of arbitrary depth trained in the NTK regime, where the former learns polynomials of degree $p$ using width $d^{p/2}$ compared to $d^{p}$ of the latter. However, this separation does not exclude the existence of a different shallow learner that operates in the non-NTK regime, whereas our work excludes efficient learnability in a standard GD setting, or even more generally with any shallow model due to inapproximability, thus providing a stronger separation.

	\section{Preliminaries and Main Results}\label{sec:prelims}
	In this section, we formally define our problem setting before stating our main theorems. We will begin however with introducing some of the notation that will be used throughout the paper in the following subsection.
	
	\subsection{Notation and Terminology}\label{subsec:notation}
	We use $[n]$ as shorthand for the set $\{1,\ldots,n\}$. We denote vectors using bold-faced letters (e.g.\ $\bx$) and matrices or random variables using upper-case letters (e.g.\ $X$). Given a vector $\bx$, we let $\norm{\bx}_2$ denote its Euclidean norm, where the subscript is occasionally omitted when clear from context. Given a square matrix $A$, we denote its spectral norm by $\norm{A}_{\text{sp}}$. Given two non-zero vectors $\bw,\bv\in\reals^d$, we denote the angle between them using $\theta_{\bw,\bv} \coloneqq \arccos\p{\frac{\inner{\bw,\bv}}{\norm{\bw}\cdot\norm{\bv}}}$. We use $\one{\cdot}$ to denote the indicator function. A function $f:\reals^d\to\reals$ is \textit{radial} if for all $\bx,\bx'\in\reals^d$, $\norm{\bx}=\norm{\bx'}$ implies $f(\bx)=f(\bx')$. A function $f:\reals^d\to\reals$ is \textit{log-concave} if $f=\exp(-\varphi)$ where $\varphi:\reals^d\to\reals$ is convex. A distribution is radial (resp., log-concave) if its density function is radial (resp., log-concave). For some $L>0$, a function $f\in C^2(\reals)$ is \textit{$L$-smooth} in a domain $A\subseteq\reals^d$ if $f(\bx)-f(\bx') \le \nabla f(\bx')^{\top}(\bx-\bx') + \frac{L}{2}\norm{\bx-\bx'}^2$ for all $\bx,\bx'\in A$. When optimizing an $L$-smooth objective, a learning rate (step size) $\eta$ is \textit{stable} if $\eta<2/L$. We let $\mathbb{S}_R^{d-1}\coloneqq\set{\bx\in\reals^d:\norm{\bx}=R}$ denote the $d$-dimensional hypersphere of radius $R$ centered at the origin. We use $\Ncal(\mu,\sigma^2)$ to denote a normal random variable with mean $\mu$ and variance $\sigma^2$, and $\Ncal(\boldsymbol{\mu},\Sigma)$ to denote a multivariate normal random variable with mean $\boldsymbol{\mu}$ and covariance matrix $\Sigma$.
	
	Our shallow class of neural networks with inferior approximation capabilities is the following class of depth 2 networks with architecture defined by
	\begin{equation}\label{eq:depth2}
		\bx\mapsto\sum_{j=1}^{r} w_j\sigma\p{\inner{\bu_j,\bx}+b_j}+b_0,
	\end{equation}
	where for terseness we occasionally denote the trainable parameters of this architecture in vectorized form using $\btheta\coloneqq(\bu_1,\ldots,\bu_r,\bw,\bb,b_0)$. In the above architecture, $\sigma$ is some non-linear activation function which satisfies the following assumption which we adopt from \citet{eldan2016power}:
	\begin{assumption}[Polynomially-Bounded Activation]\label{asm:poly_bounded}
		The activation function $\sigma$ is Lebesgue measurable and satisfies
		\[
			\abs{\sigma(x)}\le C_{\sigma}\p{1+\abs{x}^{\alpha_{\sigma}}}
		\]
		for all $x\in\reals$ and for some constants $C_{\sigma},\alpha_{\sigma}>0$.
	\end{assumption}
	We remark that the above assumption is very mild and is satisfied by all standard activation functions used in the literature, which includes common examples such as ReLU and sigmoidal activations, and in particular includes the error function and clipped ReLU activations which we use in the architecture to be defined next: For our stronger class with superior approximation capabilities, we consider the following  `depth 3' neural networks with a simplified structure,
	\begin{equation}\label{eq:architecture}
		\bx\mapsto \crelu{\sum_{j=1}^{r} w_j\erf\p{\inner{\bu_j,\bx}+b_j} }.
	\end{equation}
	In this architecture, we use error function activations $z\mapsto\erf(z)$ in the hidden layer, and a clipped ReLU activation $\crelu{z}\coloneqq\min\set{\relu{z},1}$ in the output neuron, where $\relu{z}\coloneqq\max\set{0,z}$ denotes the ReLU activation. We point out that strictly speaking, this is not a depth 3 neural network since there are no weights applied after the clipped ReLU activation. However, the additional non-linearity as we will see later in this section will prove to be necessary and sufficient for approximating the target functions considered in this paper. Our particular choice of the error function activation is motivated mainly by technical requirements, since it allows us to greatly simplify certain necessary calculations (e.g.\ in the proofs of \thmref{thm:well-behaved_random_features} and \propref{prop:truncation_expectation}). Likewise, the choice of the clipped ReLU is also motivated by technical considerations, and we conjecture our result to hold for a wider class of activation functions, leaving such generalizations to future work.
	
	
	
	Since our goal is to separate different architectures according to their asymptotic performance with respect to the input dimension $d\ge2$, our assumptions formally refer to sequences of distributions and target functions rather than individual objects. We assume that we are given access to a finite i.i.d.\ data set $\set{\bx_i,y_i}_{i=1}^n$ sampled from the corresponding distribution $\Dcal_d$ in a sequence $\{\Dcal_d\}_{d=2}^{\infty}$, where the target values are determined according to a sequence of ball indicator functions, $y_i=\one{\norm{\bx_i}\le\lambda_d}$ for $\{\lambda_d\}_{d=2}^{\infty}$ where $\lambda_d\in[1,2]$. Since we will use a random features technique where the hidden layer remains fixed during training to derive our positive optimization result, we will use normal bold-faced letters ($\bx\in\reals^d$) and tilde bold-faced letters ($\hat{\bx}\in\reals^r$) to make the distinction between the $d$-dimensional data we are given and the $r$-dimensional random features produced by the output of the hidden layer in \eqref{eq:architecture}.
	
	\subsection{Main Results}\label{subsec:main_result}
	
	We now turn to state our main theorems. The first shows that there exists a sequence of radial distributions with unbounded support such that efficiently learning a ball indicator to accuracy $\Ocal(d^{-4})$ is not possible using any learning algorithm which must return a network with a weak architecture as defined in \eqref{eq:depth2}, whereas the strong architecture in \eqref{eq:architecture} can efficiently learn such ball indicators using GD.
	
	\begin{theorem}\label{thm:main}
		There exists a sequence of radial distributions $\{\Dcal_d\}_{d=2}^{\infty}$ such that the following holds:
		\begin{itemize}
			\item
			There exist universal constants $c_1,c_2,c_3,c_4>0$ and a sequence of target functions $\{f_d(\bx)\coloneqq\one{\norm{\bx}\le\lambda_d}\}_{d= c_1}^{\infty}$ where $\lambda_d\in[1,2]$ for all $d$, such that for all $d\ge c_1$, any algorithm that returns a depth 2 neural network $N$ with architecture as defined in \eqref{eq:depth2} which employs an activation function $\sigma$ satisfying Assumption~\ref{asm:poly_bounded} must satisfy
			\[
				\E_{\bx\sim\Dcal_d}\pcc{\p{N(\bx)-f_d(\bx)}^2} \ge \frac{c_2}{d^4},
			\]
			unless the algorithm runs in time at least $c_3\exp(c_4d)$.
			\item
			For all $\epsilon,\delta\in(0,1)$ and dimension $d\ge2$, there exist $r,T$ and sample size $n$ that are $\text{poly}(1/\epsilon,\ln(1/\delta),d)$ and learning rate which is $1/\text{poly}(1/\epsilon,\ln(1/\delta),d)$, such that if we run at most $T$ iterations of GD using a neural network with architecture as defined in \eqref{eq:architecture}, where the network's weights are initialized using a standard initialization scheme (see Assumption~\ref{asm:init}) and the hidden layer is held fixed during training. Then for all $\lambda\in[1,2]$, with probability at least $1-\delta$ over the randomness in the initialization of the network and the sampling of the data, the algorithm returns a neural network $N$ such that
			\[
				\E_{\bx\sim\Dcal_d}\pcc{\p{N(\bx)-\one{\norm{\bx}\le\lambda}}^2} \le \epsilon.
			\]
		\end{itemize}
	\end{theorem}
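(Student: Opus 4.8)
The plan is to obtain \thmref{thm:main} by welding together two ingredients that are already in hand: the approximation lower bound of \thmref{thm:inapproximability} for the weak architecture \eqref{eq:depth2}, and the GD convergence guarantee of \thmref{thm:gd_convergence} for the strong architecture \eqref{eq:architecture}. The one genuinely new point to check is that a \emph{single} sequence of radial distributions $\{\Dcal_d\}_{d=2}^{\infty}$ can be chosen to simultaneously (a) be hard for depth-2 approximation and (b) satisfy the distributional hypotheses of the positive result; this will be exactly the sequence constructed in \subsecref{subsec:construction}.

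For the first bullet I would use the standard observation that inapproximability implies hardness of learning: an algorithm that halts in time $t$ can only output a network with at most $t$ trainable parameters, in particular of width $r\le t$. Choosing the universal constants $c_3,c_4>0$ so that running in time less than $c_3\exp(c_4 d)$ forces $r<\exp(\Omega(d))$ (below the width threshold in \thmref{thm:inapproximability}), the returned network $N$ --- whatever polynomially-bounded activation $\sigma$ and output bias $b_0$ it uses --- must satisfy $\E_{\bx\sim\Dcal_d}[(N(\bx)-f_d(\bx))^2]\ge c_2/d^4$ by \thmref{thm:inapproximability}, with $c_2$ and the radii $\{\lambda_d\}\subset[1,2]$ inherited verbatim from that theorem and $c_1$ set to its dimension threshold.

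For the second bullet the plan is: (i) recall the explicit form of $\Dcal_d$ from \subsecref{subsec:construction} and note that it is radial; (ii) verify it meets every requirement of Assumption~\ref{asm:d_dist_assumption} --- the conditions on the density of $\norm{\bx}$ near a radius in $[1,2]$, the lower bound on the mass near the sphere $\mathbb{S}_\lambda^{d-1}$, and whatever tail/anti-concentration conditions the assumption imposes; (iii) invoke \thmref{thm:gd_convergence} as a black box, reading off the polynomial sizes $r,T,n$ and the inverse-polynomial learning rate $\eta$, and carrying over its $(1-\delta)$-probability bound on the population squared loss, for any $\lambda\in[1,2]$ (in particular $\lambda=\lambda_d$).

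The main obstacle is step (ii). The sequence $\{\Dcal_d\}$ is deliberately \emph{heavy-tailed} --- that is precisely what defeats shallow approximation of the ball indicator --- and heavy tails with no finite moments and arbitrarily large-norm samples are exactly the feature that typically breaks random-feature and GD analyses, so the real content is confirming that the construction was chosen to thread this needle and still lies inside the ``mild assumptions'' regime of Assumption~\ref{asm:d_dist_assumption}. I expect this to come down to bounding the density of $\norm{\bx}$ on $[1,2]$ from above and below and controlling $\pr(\abs{\norm{\bx}-\lambda}\le\text{poly}(\epsilon))$ directly from the closed form of the construction; the heavy tail only manifests far from the decision boundary, where $f_d$ is constant and hence contributes to the loss in a controlled way. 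A minor, purely bookkeeping point is aligning the dimension ranges and the naming of constants between the two bullets, which is handled by taking $c_1$ to be the larger of the two relevant thresholds.
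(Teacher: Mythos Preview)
Your plan is essentially the paper's own proof: the first bullet is exactly the ``width $\le$ running time'' reduction from \thmref{thm:inapproximability}, and the second bullet is exactly ``check Assumption~\ref{asm:d_dist_assumption} for the explicit mixture $\mu_d$ in \eqref{eq:mu_d_def}, then invoke \thmref{thm:gd_convergence}.''

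One technical point you should be aware of in step~(ii): the verification does \emph{not} quite go through for $\mu_d$ as written. Assumption~\ref{asm:d_dist_assumption}(2) requires the norm density $\gamma_d(x)$ to be strictly positive for all $x\ge0$, but both components of the mixture (the Bessel part $\bar{\gamma}_d$ and the scaled chi density $p_d$) vanish at $x=0$. The paper patches this not by a direct density bound but by a computational-indistinguishability argument: it observes that learnability by GD is preserved under replacing $\{\Dcal_d\}$ by any sequence indistinguishable from it by polynomial-time sampling, and then mixes in an exponentially-small-weight radial component with positive density at the origin to force $\gamma_d(0)>0$ without affecting either the lower bound or the learning guarantee. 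Everything else in Assumption~\ref{asm:d_dist_assumption} --- the constant mass on $[0,2]$, the polynomial upper bound on $\gamma_d$ over $[0.9,2.05]$, and the $O(1/x)$ tail --- is checked by direct calculation on the two components (incomplete gamma bounds for the Gaussian part, the Bessel estimates from \citet{eldan2016power} for the heavy-tailed part), as you anticipated.
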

	The proof of the above theorem appears in \appref{app:main_thm_proof}. The main property of the distributions $\{\Dcal_d\}_{d=2}^{\infty}$ used in our theorem is that they are heavy-tailed in the sense that they produce data instances with very large norm. Essentially, we will be penalized severely for not approximating data well even when it's far away from the origin. This however also results in a more challenging positive optimization result to achieve, since data with large norm is harder to learn; for example, this results in an objective with gradients having unbounded norm which significantly complicates our analysis. To partially circumvent this difficulty we use error function activations with image bounded in $[-1,+1]$ which mitigates the input's large norm to some extent.
	
	We stress that in our result, the hardness of learning with a depth 2 neural network applies regardless of whether we are given access to a sufficiently large sample or not. In fact, since the hardness stems from the inability of depth 2 neural networks to approximate the target functions $\{f_d\}_{d=2}^{\infty}$, the hardness result persists even with perfect knowledge of the distributions and of the constants $\{\lambda_d\}_{d=2}^{\infty}$. On the other hand, our result only holds in a high-accuracy regime, and does not exclude the existence of an efficient algorithm which returns an approximation of a ball indicator to accuracy slightly larger than $c_2d^{-4}$, which might be considered sufficient for certain practical applications in machine learning. The dependence on $d^{-4}$ is an artifact of our proof technique (which relies on a reduction from the main result of \citet{eldan2016power}). To complement the above theorem, we present the following result which establishes the hardness of learning ball indicators to any accuracy smaller than a constant using a standard GD setting, for data generated by a certain compactly-supported distribution.
	
	\begin{theorem}\label{thm:secondary}
		There exists a sequence of compactly-supported radial distributions $\{\Dcal'_d\}_{d=2}^{\infty}$ such that the following holds:
		\begin{itemize}
			\item
			There exist constants $c_1,c_2>0$ and a sequence of target functions $\{f_d(\bx)\coloneqq\one{\norm{\bx}\le\lambda_d}\}_{d= 4}^{\infty}$ where $\lambda_d\in[1,2]$ for all $d$, such that for all $d\ge 4$, suppose we run GD on the architecture defined in \eqref{eq:depth2} which employs an activation function $\sigma\in C^2(\reals)$ satisfying Assumption~\ref{asm:poly_bounded} with the following assumptions:
			\begin{itemize}
				\item
				The algorithm runs for $T\ge4$ iterations.
				\item
				The learning rate satisfies $\eta\le\frac2L-\frac1T$ where $L$ is the smoothness of the objective when training the weak architecture on $n$ samples labeled according to a ball indicator.
				\item
				We initialize at some $\btheta_0$ such that $\pr\pcc{\norm{\btheta_0}_2\le p(d,r)} \ge 1-\exp(-d)$, for some polynomial $p$.
			\end{itemize}
			Then with probability at least $1-\exp(-d)$, for all $t\in\{0,\ldots,T\}$, the network $N_{\btheta_t}(\cdot)$ at iteration $t$ satisfies
			\[
				\epsilon\coloneqq\E_{\bx\sim\Dcal'_d}\pcc{\p{N_{\btheta_t}(\bx)-f_d(\bx)}^2}>\frac{1}{400},
			\]
			unless the algorithm runs in time at least
			\[
				c_1\exp\p{c_2\min\set{\frac{1}{\sqrt{\epsilon}}\ln\p{d\sqrt{\epsilon} +2}, d\ln\p{\frac{1}{d\sqrt{\epsilon}} +2}}},
			\]
			where $c_1,c_2>0$ depend solely on $\sigma$.
			
			\item
			For all $\epsilon,\delta\in(0,1)$ and dimension $d\ge3$, there exist $r,T$ and sample size $n$ that are $\text{poly}(1/\epsilon,\ln(1/\delta),d)$ and learning rate which is $1/\text{poly}(1/\epsilon,\ln(1/\delta),d)$, such that if we run at most $T$ iterations of GD using a neural network with architecture as defined in \eqref{eq:architecture}, where the network's weights are initialized using a standard initialization scheme (see Assumption~\ref{asm:init}). Then for all $\lambda\in[1,2]$, with probability at least $1-\delta$ over the randomness in the initialization of the network and the sampling of the data, the algorithm returns a neural network $N$ such that
			\[
				\E_{\bx\sim\Dcal'_d}\pcc{\p{N(\bx)-\one{\norm{\bx}_2\le\lambda}}^2} \le \epsilon.
			\]
		\end{itemize}
	\end{theorem}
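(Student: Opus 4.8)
This statement welds together two essentially independent pieces: an approximation lower bound for the weak architecture (\thmref{thm:daniely_inapproximability}, following the technique of \citet{daniely2017depth}) and the positive optimization result for the strong architecture (\thmref{thm:gd_convergence}). The plan is to take $\{\Dcal'_d\}_{d=2}^{\infty}$ to be precisely the compactly-supported radial distributions constructed in \subsecref{subsec:construction}, which are designed to be simultaneously hard to approximate with a single layer of non-linearities \emph{and} to satisfy the distributional hypotheses (Assumption~\ref{asm:d_dist_assumption}) required by the positive result. The positive half of the theorem is then a direct instantiation of \thmref{thm:gd_convergence}, while the negative half comes from feeding \thmref{thm:daniely_inapproximability} into a ``GD makes slow progress'' argument, which I describe next.

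\textbf{Negative half.} The idea is that with high probability GD cannot inflate the parameter norm quickly, so after sub-exponentially many steps the weak network still has only polynomially large weights and hence, by the approximation lower bound, cannot beat the constant $1/400$. Concretely: since $\sigma\in C^2$, the population objective $F(\btheta)\coloneqq\E_{\bx\sim\Dcal'_d}\pcc{\p{N_{\btheta}(\bx)-f_d(\bx)}^2}$ is $L$-smooth, so with a stable step size the descent lemma yields $F(\btheta_{t+1})\le F(\btheta_t)$ and $\sum_{t<T}\norm{\nabla F(\btheta_t)}^2\le F(\btheta_0)/(\eta(1-\eta L/2))$; by Cauchy-Schwarz, $\norm{\btheta_T-\btheta_0}^2\le\eta^2 T\sum_{t<T}\norm{\nabla F(\btheta_t)}^2$, which combined with $\eta\le\frac2L-\frac1T$ (so that $1-\eta L/2\ge\frac{L}{2T}$) is $\Ocal\p{T^2 F(\btheta_0)/L^2}$. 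On the event $\norm{\btheta_0}\le p(d,r)$ (probability $\ge1-\exp(-d)$), compact support of $\Dcal'_d$ together with the polynomial growth of $\sigma$ (Assumption~\ref{asm:poly_bounded}) forces $F(\btheta_0)\le\text{poly}(d,r)$, so $\norm{\btheta_t}$ stays $\text{poly}(d,r)\cdot\Ocal(T/L)$ for all $t\le T$. Hence if the running time is below the claimed bound then both the width $r$ and every $\norm{\btheta_t}$ are below $\exp$ of the corresponding quantity (evaluating a width-$r$ network already costs $\Omega(r)$ time), and \thmref{thm:daniely_inapproximability} then forces the loss to exceed $1/400$ at every iteration; the precise $\epsilon$-dependent time bound drops out by substituting $m\asymp1/\sqrt{\epsilon}$ into that theorem and rearranging.

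\textbf{Positive half.} Here I would simply verify that the constructed $\Dcal'_d$ satisfy Assumption~\ref{asm:d_dist_assumption} --- which is essentially built into their definition in \subsecref{subsec:construction}, and is easier to check than for the heavy-tailed $\Dcal_d$ of \thmref{thm:main} since $\Dcal'_d$ has compact support --- and then invoke \thmref{thm:gd_convergence} for the architecture \eqref{eq:architecture} with the hidden layer held fixed and the initialization of Assumption~\ref{asm:init}. This yields, for every $\lambda\in[1,2]$, population squared loss at most $\epsilon$ with probability at least $1-\delta$ using $r,T,n=\text{poly}(1/\epsilon,\ln(1/\delta),d)$ and learning rate $1/\text{poly}(1/\epsilon,\ln(1/\delta),d)$. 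The positive half works from $d\ge3$ while the negative half needs $d\ge4$, so the combined statement holds for $d\ge4$.

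\textbf{Main obstacle.} The delicate step is the negative-half bookkeeping: one must check that the smoothness constant $L$, the bound on $F(\btheta_0)$, and the target exponential time bound fit together so that polynomially many GD iterations provably keep $\norm{\btheta_t}$ polynomial (so the approximation lower bound actually bites), and then convert ``width or weight norm exponential'' into ``running time exponential'' in every case, including when the algorithm selects an exponentially wide network. The remaining work --- the random-features approximation and linear convergence underlying \thmref{thm:gd_convergence}, and the reduction from the polynomially-oscillating hard function to a ball indicator underlying \thmref{thm:daniely_inapproximability} --- is inherited from results established elsewhere in the paper.
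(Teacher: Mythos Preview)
Your approach is essentially the paper's: for the negative half, use the descent lemma plus Cauchy--Schwarz to bound $\norm{\btheta_T-\btheta_0}$ by $\Ocal\!\p{T\sqrt{F(\btheta_0)}/L}$, bound $F(\btheta_0)$ polynomially via compact support and Assumption~\ref{asm:poly_bounded}, and then feed the resulting polynomial bound on $\norm{\btheta_t}$ into \thmref{thm:daniely_inapproximability}; for the positive half, verify Assumption~\ref{asm:d_dist_assumption} for $\Dcal'_d$ and invoke \thmref{thm:gd_convergence}.

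There is one genuine gap in your sketch. Your norm bound scales like $T/L$, so it is only polynomial if $1/L$ is polynomial in $d,r$; you never say why $L$ cannot be tiny. The theorem statement only assumes $\eta\le\frac2L-\frac1T$, so if $L$ were, say, $\exp(-d)$, the learning rate could be $\exp(d)$ and a single GD step could already produce exponential weights, killing the argument. The paper closes this by observing that $\frac{\partial^2\Lcal}{\partial b_0^2}\equiv 2$, so the Hessian always has an eigenvalue of magnitude at least $2/k$ where $k=r(d+2)+1$ is the number of parameters; hence $L\ge 2/k$ and $1/L\le\text{poly}(d,r)$. (There is also a subtlety you gloss over: the objective is not globally smooth, so the paper defines $L$ as the maximum Hessian norm over a ball of radius $cTd^{\ell}r^{\ell}$ and then verifies the iterates stay in this ball, making the definition self-consistent.) With this lower bound on $L$ in hand, the rest of your outline---including the case analysis turning $\max\{r,\norm{\btheta_T}\}\in\Fcal(d,m)$ into a running-time lower bound, and the substitution $m\asymp 1/\sqrt{\epsilon}$---matches the paper.
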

	
	
	The proof of the theorem, which appears in \appref{app:secondary_main_proof}, builds on a reduction to the technique introduced in \citet{daniely2017depth}, showing that an approximation to accuracy $\epsilon$ is only possible if either of the width of the approximating network or the Euclidean norm of its weights is exponential in either $d$ or $\frac{1}{\sqrt{\epsilon}}$. Since the progress of GD and the norm of the weights upon initialization are bounded due to the assumptions in the theorem statement, we have that GD must perform a large number of iterations or that the network must be very wide to achieve a good approximation, resulting in an exponential running time of the algorithm in either of the parameters. This result essentially combines the approximation limitations of our weak architecture and the inability of GD with reasonably-sized steps to move much in weight space to produce a negative result.
	
	Turning to discuss the assumptions made in the theorem in more detail, we first note that our assumptions are mostly mild. The assumption that $\sigma\in C^2(\reals)$ is to simplify the analysis, however since the non-smooth setting typically tends to result in worse optimization guarantees, we do not expect the running time to drastically change for the better if $\sigma$ is assumed to be a ReLU for example. The polynomial boundedness assumption of the norm upon initialization is also very mild, and holds for essentially any initialization scheme used in the literature, which includes in particular the common Xavier initialization (see \citep{glorot2010understanding}), as well as the initialization scheme we use in our positive optimization result (see Assumption~\ref{asm:init}). Lastly, the bound on the learning rate is a necessary condition for the local convergence of GD in a general setting, since there are cases where no non-trivial guarantee can be given otherwise, as we demonstrate in \appref{app:stability}.
	
	It is interesting to note that a similar trade-off between the dimension and accuracy parameters was given in \citet{hsu2021approximation}, where it is shown that a certain high-frequency function cannot be approximated efficiently using depth 2 networks unless the width incurs exponential dependence on either $d$ or $\frac{1}{\epsilon^2}$. Moreover, it can be shown that approximating a ball indicator (in the $L_{\infty}$ norm) on a compact domain can be done efficiently in $d$ if we only wish to obtain constant accuracy $\epsilon>0$.\footnote{This can be achieved by first approximating a ball indicator to accuracy $\epsilon/2$ using a $2/\epsilon$-Lipschitz approximation, and then using the main result in \citet{safran2019depth} to approximate the Lipschitz approximation to accuracy $\epsilon/2$ using width polynomial in $d$ (albeit exponential in the constant $1/\epsilon$).} This provides a partial indication that the lower bound we derive here is somewhat tight, since a lower bound with exponential dependence on both parameters (i.e.\ exponential in $\max\{d,1/\sqrt{\epsilon}\}$) is not possible.
	
	
	We now turn to provide an explicit construction of the sequences of distributions used in our main theorems in the following subsection.
	
	\subsection{The Distributions Used}\label{subsec:construction}
	
	In this subsection, we formally define the sequence of distributions used in \thmref{thm:main} and \thmref{thm:secondary}. We define a density of a $d$-dimensional distribution by
	\begin{equation}\label{eq:mu_bar_d_density}
		\bar{\mu}_{d}(\bx)\coloneqq\p{\frac{R_d}{\norm{\bx}}}^d J_{d/2}^2(2\pi\alpha\sqrt{d} R_d\norm{\bx}),
	\end{equation}
	where $R_d=\frac{1}{\sqrt{\pi}}\p{\Gamma\p{\frac{d}{2}+1}}^{1/d}$ is the radius of the unit-volume $d$-dimensional Euclidean ball, $\alpha\ge1$ is the absolute constant used in the derivation of the main result in \citet{eldan2016power}, and $J_{\nu}$ is the Bessel function of the first kind, of order $\nu$ (see the aforementioned reference for further discussion about these objects). Such similar density functions (identical up to a linear change of variables) were used to show various inapproximability results for radial functions \citep{eldan2016power,safran2017depth,safran2019depth}.
	We now define the densities of our sequence of distributions $\{\Dcal_d\}_{d=2}^{\infty}$ used in \thmref{thm:main}, which are a mixture of the distribution with density $\bar{\mu}_d$ defined in \eqref{eq:mu_bar_d_density} and a multivariate normal with covariance matrix $\frac{1}{\sqrt{d}}I_d$:
	\begin{equation}\label{eq:mu_d_def}
		\mu_d(\bx)\coloneqq 0.5\bar{\mu}_d(\bx) + 0.5\p{\frac{d}{2\pi}}^{d/2}\exp\p{-\frac{1}{2}d^d\norm{\bx}^2}.
	\end{equation}
	Turning to define the sequence of distributions $\set{\Dcal'_d}_{d=2}^{\infty}$ used in \thmref{thm:secondary}, we define the $d$-dimensional distribution $\Dcal'_d$ to be the distribution of $\bx\in\reals^d$ which is given by the sum
	\begin{equation}\label{eq:daniely_dist_def}
		\bx\coloneqq\bx_1+\bx_2,\hskip 0.2cm\text{where}\hskip 0.2cm \bx_1,\bx_2\in\mathbb{S}_1^{d-1}\hskip 0.2cm\text{are i.i.d.\ and uniformly distributed.}
	\end{equation}
	We denote the density of $\Dcal'_d$ by $\varphi_d$.
	
	Having defined the sequence of distributions used in our main results, we now turn to formally present our approximation lower bounds in the next section.
	
	\section{Lower Bounds for Approximating Ball Indicators}\label{sec:lower_bounds}
	
	In this section, we present our lower bounds for the approximation of ball indicator functions used to derive the optimization lower bounds in our main results. 
	As discussed in the related work section, the inapproximability of ball indicator functions was previously shown in \citet{safran2017depth}, and our goal here is merely to adapt and extend similar bounds to our setting and assumptions.
	
	The following theorem establishes the inapproximability of ball indicators with respect to the heavy-tailed distribution $\Dcal_d$ defined in the previous section, which is used to prove \thmref{thm:main}.
	\begin{theorem}\label{thm:inapproximability}
		The following holds for some universal constants $c_1,c_2,c_3,c_4>0$, and any network employing an activation function satisfying Assumption~\ref{asm:poly_bounded}: For all $d\ge c_1$, there exist a sequence $\{\lambda_d\}_{d=c_1}^{\infty}$ of constants $\lambda_d\in[1,2]$ and a sequence of radial distributions $\{\Dcal_d\}_{d=2}^{\infty}$ with density functions $\{\mu_d\}_{d=2}^{\infty}$ defined in \eqref{eq:mu_d_def} such that for any depth 2 neural network $N$ with architecture as defined in \eqref{eq:depth2}, we have
		\[
			\int_{\reals^d}\p{N(\bx)-\one{\norm{\bx}\le\lambda_d}}^2\mu_d(\bx)d\bx \ge \frac{c_2}{d^4},
		\]
		unless $N$ has width at least $c_3\exp(c_4d)$.
	\end{theorem}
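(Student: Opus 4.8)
The plan is to derive \thmref{thm:inapproximability} by reduction to the depth separation theorem of \citet{eldan2016power}, adapting the argument of \citet{safran2017depth} to the fixed distribution $\mu_d$. Recall that \citet{eldan2016power} construct a bounded radial ``hard function'' $g_d:\reals^d\to\reals$ — a superposition of $\mathrm{poly}(d)$ thin spherical shells with $\pm$ amplitudes — together with the density $\bar{\mu}_d$ of \eqref{eq:mu_bar_d_density}, such that every depth 2 network $N$ with an activation satisfying Assumption~\ref{asm:poly_bounded} and width at most $c\exp(cd)$ obeys $\int_{\reals^d}(N(\bx)-g_d(\bx))^2\bar{\mu}_d(\bx)\,d\bx\ge c_{\mathrm{ES}}$ for a universal $c_{\mathrm{ES}}>0$ and all $d$ past some threshold. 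The factors $R_d$ and $\alpha\sqrt d$ in \eqref{eq:mu_bar_d_density} are exactly the normalization that places the shell radii of $g_d$ inside the interval $[1,2]$; I would make this explicit by tracking the change of variables through the Eldan--Shamir construction. A first, easy reduction removes the Gaussian component of $\mu_d$: by \eqref{eq:mu_d_def} we have $\mu_d(\bx)\ge\tfrac12\bar{\mu}_d(\bx)$ pointwise, so $\norm{f}^2_{L^2(\mu_d)}\ge\tfrac12\norm{f}^2_{L^2(\bar{\mu}_d)}$, and it suffices to prove the asserted bound with $\mu_d$ replaced by $\bar{\mu}_d$ and $c_2$ replaced by $2c_2$.

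The next step decomposes $g_d$ into ball indicators. I would write $g_d=\sum_{i=1}^{N_d}a_i\,\one{\norm{\bx}\le\lambda_{d,i}}+\rho_d$, where $N_d=\mathrm{poly}(d)$ (of the order of the number of shells, $\Theta(d^2)$), each $\lambda_{d,i}\in[1,2]$, $\max_i\abs{a_i}=O(1)$, and the residual $\rho_d$ has $L^2(\bar{\mu}_d)$ norm $o(1)$. A thin shell indicator is a difference of two ball indicators; if $g_d$ is a smoothed (Lipschitz) rather than a genuine step profile, one first replaces it by a $\mathrm{poly}(d)$-piece step function and pushes the resulting error into $\rho_d$.

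The heart of the argument is then a pigeonhole over the radii $\lambda_{d,i}$. Suppose, for contradiction, that for every $i\in[N_d]$ there is a depth 2 network $N_i$ with an activation satisfying Assumption~\ref{asm:poly_bounded}, of width less than $c_3\exp(c_4 d)$, with $\int(N_i-\one{\norm{\bx}\le\lambda_{d,i}})^2\bar{\mu}_d\le 2c_2/d^4$, i.e.\ $\norm{N_i-\one{\norm{\cdot}\le\lambda_{d,i}}}_{L^2(\bar{\mu}_d)}\le\sqrt{2c_2}/d^2$. Then $\tilde N\coloneqq\sum_{i=1}^{N_d}a_iN_i$ is again a depth 2 network with the same activation, and its width is at most $N_d\cdot c_3\exp(c_4 d)$, which still lies below the Eldan--Shamir threshold once $c_4$ is chosen strictly smaller than the exponent there and $d$ is large (the polynomial blow-up in the width is harmless against an exponential bar). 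By the triangle inequality in $L^2(\bar{\mu}_d)$,
\[
\norm{\tilde N-g_d}_{L^2(\bar{\mu}_d)}\le\Big(\max_i\abs{a_i}\Big)N_d\cdot\frac{\sqrt{2c_2}}{d^2}+\norm{\rho_d}_{L^2(\bar{\mu}_d)}\le C\sqrt{c_2}+o(1)
\]
for a universal constant $C$, using $N_d=O(d^2)$. Choosing $c_2$ to be a small enough universal constant (and $c_1$ a large enough threshold) makes the right-hand side strictly below $\sqrt{c_{\mathrm{ES}}}$, so $\tilde N$ would be a depth 2 network below the Eldan--Shamir width threshold with $\int(\tilde N-g_d)^2\bar{\mu}_d<c_{\mathrm{ES}}$ — a contradiction. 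Hence some index $i^\star$ fails, and taking $\lambda_d\coloneqq\lambda_{d,i^\star}\in[1,2]$ yields the claim (after undoing the Gaussian-removal reduction).

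The delicate part is the interplay of the last two steps: one has to re-open the Eldan--Shamir / Safran--Shamir construction to confirm that $g_d$ really is a $\mathrm{poly}(d)$-term combination of ball indicators with radii in $[1,2]$ and $O(1)$ coefficients, and then check that the accumulated $L^2$ error — $\Theta(d^2)$ shells, each contributing $L^2(\bar{\mu}_d)$ error $\sqrt{2c_2}/d^2$ — stays strictly below the universal constant $c_{\mathrm{ES}}$. It is exactly this error budget that forces the squared-error rate $c_2/d^4$ and pins $c_2$ down as a small universal constant; the Gaussian-removal step, the width bookkeeping over a sum of $\mathrm{poly}(d)$ networks, and the dimension threshold $d\ge c_1$ are all routine.
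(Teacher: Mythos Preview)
Your proposal is correct and follows essentially the same route as the paper's proof: both reduce to the Eldan--Shamir hard function (a sum of $\Theta(d^2)$ ball indicators with $\pm1$ coefficients and radii in $[1,2]$ after the change of variables $\bx\mapsto\alpha\sqrt{d}\bx$), apply a pigeonhole/triangle-inequality contradiction in $L^2(\bar\mu_d)$, and then pass to $\mu_d$ via the pointwise bound $\mu_d\ge\tfrac12\bar\mu_d$. The only cosmetic differences are the order of the Gaussian-removal step and the width bookkeeping (the paper divides the per-indicator width budget by $\gamma d^2$ up front, whereas you choose a smaller $c_4$; also the paper's restatement of Eldan--Shamir already gives an exact indicator decomposition, so your residual $\rho_d$ is unnecessary).
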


	The proof of the above theorem, which appears in \appref{app:inapproximability_proof}, relies on the main result of \citet{eldan2016power}. Essentially, it is shown in the aforementioned work that a superposition of $\Ocal(d^2)$ thin shells with varying radii cannot be approximated efficiently using depth 2 neural networks, with respect to a heavy-tailed distribution with density similar to the one defined in \eqref{eq:mu_bar_d_density}. This implies that there must \emph{exist} a ball indicator function that is inapproximable with respect to the distribution with such a density, since otherwise we can efficiently construct an approximation of $\Ocal(d^2)$ thin shells and concatenate them using a depth 2 neural network, contradicting the result of \citet{eldan2016power}.
	
	The following theorem establishes our second inapproximability result of ball indicators with respect to the compactly-supported distribution $\Dcal'_d$ defined in the previous section, which is used to prove \thmref{thm:secondary}.
	
	\begin{theorem}\label{thm:daniely_inapproximability}
		The following holds for any neural network employing an activation function $\sigma$ which satisfies Assumption~\ref{asm:poly_bounded}, and constants $c_1,c_2>0$ that may depend solely on $\sigma$: For all $d\ge 4$, there exist a sequence $\{\lambda_d\}_{d=4}^{\infty}$ of constants $\lambda_d\in[1,2]$ and a sequence of radial distributions $\{\Dcal'_d\}_{d=4}^{\infty}$ defined in \eqref{eq:daniely_dist_def} such that for any natural $m\ge1$ and any depth 2 neural network $N_{\btheta}$ with architecture as defined in \eqref{eq:depth2}, we have
		\[
			\int_{\reals^d}\p{N_{\btheta}(\bx)-\one{\norm{\bx}\le\lambda_d}}^2\varphi_d(\bx)d\bx > \frac{1}{196(m+1)^2},
		\]
		unless $N$ has width $r$ or weight norm $\norm{\btheta}_2$ at least $c_1\exp\p{c_2\min\set{m\ln\p{\frac dm +2}, d\ln\p{\frac md +2}}}$.
	\end{theorem}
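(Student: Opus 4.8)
The plan is to convert an approximation lower bound for a radial indicator against $\Dcal'_d$ into a lower bound for approximating a \emph{threshold function of the inner product} $\inner{\bx_1,\bx_2}$, and then to re-run the harmonic-analytic technique of \citet{daniely2017depth} in this reformulation. The starting point is the identity: writing $\bx=\bx_1+\bx_2$ with $\bx_1,\bx_2\in\mathbb{S}_1^{d-1}$ we have $\norm{\bx}^2=2+2\inner{\bx_1,\bx_2}$, so
\[
\one{\norm{\bx}\le\lambda}=\one{\inner{\bx_1,\bx_2}\le\tau}=:g_\tau(\inner{\bx_1,\bx_2}),\qquad \tau=\tfrac{\lambda^2-2}{2},
\]
and $\tau$ ranges over $[-\tfrac12,1]$ as $\lambda$ ranges over $[1,2]$. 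Hence $\int_{\reals^d}(N_\btheta(\bx)-\one{\norm\bx\le\lambda})^2\varphi_d(\bx)\,d\bx=\E[(N_\btheta(\bx_1+\bx_2)-g_\tau(\inner{\bx_1,\bx_2}))^2]$, and since $g_\tau$ depends only on $\inner{\bx_1,\bx_2}$, this is at least $\E_t[(\bar N_\btheta(t)-g_\tau(t))^2]$, where $\bar N_\btheta(t):=\E[N_\btheta(\bx_1+\bx_2)\mid\inner{\bx_1,\bx_2}=t]$ and $t$ has density $\propto(1-t^2)^{(d-3)/2}$ on $[-1,1]$ (the inner product of two independent uniform unit vectors), concentrated in $|t|\lesssim 1/\sqrt d$. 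Crucially, $N_\btheta(\bx_1+\bx_2)=\sum_j w_j\sigma(\inner{\bu_j,\bx_1}+\inner{\bu_j,\bx_2}+b_j)+b_0$ is a depth-$2$ network in the concatenated input $(\bx_1,\bx_2)$ with tied first-layer blocks, so this is exactly the regime Daniely's technique addresses; note also that $\norm{\bx_1+\bx_2}\le2$, so Assumption~\ref{asm:poly_bounded} raises no integrability issue here.

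The core step is to reproduce, for activations satisfying only Assumption~\ref{asm:poly_bounded}, the spectral estimate behind \citet{daniely2017depth}: expand $\bar N_\btheta$ in the Gegenbauer polynomials orthogonal to the inner-product density; a single neuron contributes a profile $t\mapsto\E[\sigma(\inner{\bu,\bx_1}+\inner{\bu,\bx_2}+b)\mid\inner{\bx_1,\bx_2}=t]$ whose degree-$k$ coefficient is strongly suppressed unless $\norm{\bu}$ is large, so that the degree-$\le M$ part of $\bar N_\btheta$ lives in a subspace whose effective dimension is controlled by $r$ and $\norm{\btheta}_2$. Quantitatively, capturing a constant fraction of the energy of a target with $\Omega(1)$ mass on Gegenbauer degrees of order $M$ forces $\max\{r,\norm{\btheta}_2\}\ge c_1\exp(c_2\min\{M\ln(\tfrac dM+2),d\ln(\tfrac Md+2)\})$, the exponent being $\Theta(\ln\binom{d+M}{M})$, the log-dimension of the degree-$\le M$ spherical harmonics. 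Feeding in a $\pm1$-valued ``staircase'' $\Phi_m$, a step function of $t$ with $\Theta(m)$ sign changes spread across the bulk $|t|\lesssim1/\sqrt d$ and hence with $\Omega(1)$ energy at Gegenbauer degree $\asymp m$, this yields: no depth-$2$ network approximates $\bx\mapsto\Phi_m(\inner{\bx_1,\bx_2})$ to squared $L_2$-error below a universal constant $\epsilon_0$ unless $r$ or $\norm{\btheta}_2$ exceeds $c_1\exp(c_2\min\{m\ln(\tfrac dm+2),d\ln(\tfrac md+2)\})$.

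It remains to descend from $\Phi_m$ to a single threshold and back to a radius. Writing $\Phi_m=b+\sum_{i=1}^{\Theta(m)}(\pm2)\,g_{\tau_i}$ for its jump locations $\tau_i\in[-\tfrac12,\tfrac12]$, if every $g_{\tau_i}$ were approximable to squared $L_2$-error $\delta$ by a network of width $r_0$ and weight norm $B_0$, then concatenating these networks and applying the triangle (or Cauchy--Schwarz) inequality would approximate $\Phi_m$ to squared error $\Ocal(m^2\delta)$ with width $\Ocal(mr_0)$ and weight norm $\Ocal(\mathrm{poly}(m)\,B_0)$; taking $\delta=c/(m+1)^2$ for a suitable constant contradicts the previous paragraph once $r_0$ and $B_0$ are below $c_1'\exp(c_2'\min\{m\ln(\tfrac dm+2),d\ln(\tfrac md+2)\})$, the $\mathrm{poly}(m)$ overheads being swallowed by the constants since the exponent is $\gtrsim\ln d$. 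Hence some jump $\tau_i$, i.e.\ some radius $\lambda=\sqrt{2+2\tau_i}\in[1,2]$, is hard at resolution $m$; to obtain one $\lambda_d$ valid for all $m$ at once one makes the jump grids of the $\{\Phi_m\}_m$ nested (a dyadic refinement inside the bulk) and either lets $\lambda_d$ correspond to the grid point hard at the coarsest scale or, more robustly, shows that for each $m$ all but an $o((m+1)^{-2})$-fraction of grid points are hard at resolution $m$ and intersects over $m$. Translating back through the opening identity and the density $\varphi_d$ gives the stated bound.

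The principal obstacle is the core spectral step: Daniely's argument must be rerun for the tied-weight neuron $\sigma(\inner{\bu,\bx_1+\bx_2}+b)$ and, less routinely, extended to the full class of polynomially bounded activations rather than a single smooth $\sigma$ (handling, e.g., non-smooth activations via the Gegenbauer/Hermite transform of $\sigma$ rather than its derivatives), all while tracking constants sharply enough to land the exact exponent $\min\{m\ln(\tfrac dm+2),d\ln(\tfrac md+2)\}$ and the exact accuracy $\tfrac1{196(m+1)^2}$. A secondary difficulty is the quantifier order — producing a single $\lambda_d$ that defeats approximation simultaneously at every resolution $m$ — which is precisely what the coherent, nested choice of staircases in the last step is designed to handle.
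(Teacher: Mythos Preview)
Your proposal follows essentially the same route as the paper's proof: both use the identity $\norm{\bx}^2=2+2\inner{\bx_1,\bx_2}$ to pass between ball indicators and inner-product thresholds, both build a $\pm1$-valued staircase with $\Theta(m)$ sign changes supported in the bulk $|t|\lesssim 1/\sqrt d$, both invoke Daniely's spectral lower bound to show the staircase is hard, and both use the reduction ``if every indicator were easy then concatenating gives an easy staircase'' to extract a hard radius. The paper makes this concrete via \lemref{lem:best_degree_m} (the staircase has $\Omega(1)$ mass orthogonal to degree-$m$ polynomials under the Gegenbauer weight) and \propref{prop:daniely_lower_bound}, which cites \citet[Thm.~4]{daniely2017depth} as a black box.

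Two remarks. First, you significantly overstate the ``principal obstacle.'' Daniely's Theorem~4 is already stated for arbitrary networks of the form $\sum_j w_j\sigma_j(\cdot)$ and only requires the $L_2(\varphi_d)$ norm of each neuron as input; the tied-weight structure $\sigma(\inner{\bu,\bx_1}+\inner{\bu,\bx_2}+b)$ is a special case, and Assumption~\ref{asm:poly_bounded} together with the compact support of $\Dcal'_d$ immediately bounds these neuron norms polynomially in $\norm{\btheta}_2$ (this is exactly what the paper does in \propref{prop:daniely_lower_bound}). There is no need to rerun the harmonic analysis or to handle non-smooth $\sigma$ separately. Second, your observation about the quantifier order is well taken: the paper's argument, as written, produces for each $m$ a hard radius that may depend on $m$, whereas the theorem asserts a single $\lambda_d$ good for all $m$. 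Your nested-grid suggestion is a reasonable way to patch this; the paper does not address the point explicitly.
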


	The proof of the above theorem, which appears in \appref{app:daniely_inapprox_proof}, builds on and refines the technique developed in \citet{daniely2017depth}. We first show that approximating a radial function $f:\reals^d\to\reals$ consisting of $2m$ thin shells using a depth 2 neural network must require a large width or a large norm of the weights, since its radial component function (namely, the univariate function $g:\reals\to\reals$ such that $f(\bx)=g(\norm{\bx}_2)$) cannot be approximated well using polynomials of degree at most $m$. Then, a similar reduction argument as the one used in the proof of \thmref{thm:inapproximability} is used to derive a lower bound for a ball indicator function. 
	
	The benefit in the introduction of the parameter $m$ which is independent of $d$ is that it allows us to separate the dependence on the dimension and the accuracy parameters, leading to a hardness of approximation result that persists beyond the high-accuracy regime used in the previous theorem, and which holds even for a (sufficiently small) constant accuracy requirement. On the flip side, since we cannot rule out the existence of a good approximation using a moderately-wide network with exponential weights, this comes at the cost of a somewhat weaker algorithmic implication where the hardness of learning is shown for just the GD algorithm and not for any algorithm as is the case in \thmref{thm:main}. Nevertheless, exponential weights have limitations beyond those that we exploit to show an optimization-based lower bound in \thmref{thm:secondary}. For example, the class of depth 2 neural networks with exponentially-bounded weights has an exponential FAT-shattering dimension and is therefore not statistically learnable in polynomial time (if we consider a broader class of target distributions than ball indicators). We leave the derivation of such statistically-based lower bounds and the generalization to a broader class of optimization algorithms as important future work directions.

	\section{Learning Ball Indicators Using Random Features}\label{sec:learning}

		In this section, we prove that under certain assumptions on the distribution of the data, the family of ball indicator functions defined as $\set{\one{\norm{\bx}\le\lambda}}_{\lambda\in[1,2]}$ can be learned efficiently by GD using a neural network with architecture defined in \eqref{eq:architecture}, with a polynomial network width, sample size and number of GD iterations. Recalling that we use a tilde to distinguish a $d$-dimensional data point $\bx\in\reals^d$ and $r$-dimensional random features $\tilde{\bx}\in\reals^r$ produced by the hidden layer, we now define our empirical objective function which takes the following form
		\begin{equation}\label{eq:obj}
			\hat{F}(\bw) \coloneqq \frac1n\sum_{i=1}^{n}\p{\crelu{\bw^{\top}\tilde{\bx}_i} - y_i}^2.
		\end{equation}
		Additionally, we define the risk of a predictor $\bw$ over a distribution $\Dcal_d$ to be
		\begin{equation*}
			F(\bw) \coloneqq \E_{\bx\sim\Dcal_d}\pcc{\p{\crelu{\bw^{\top}\tilde{\bx}} - \one{\norm{\bx}\le\lambda}}^2} = \E_{\bx\sim\Dcal_d}\pcc{\hat{F}(\bw)}.
		\end{equation*}
		To facilitate our analysis, we make the following assumptions on the distributions of the data and the initialization of our neural network. Specifically, for the distribution of our data, we assume the following.
		
		\begin{assumption}[Data Distribution Assumption]\label{asm:d_dist_assumption}
			The sequence of distributions $\set{\Dcal_d}_{d=2}^{\infty}$ possesses radial symmetry and the density $\gamma_d$ of the distribution of its norm given by the random variable $X_d$ satisfy the following properties:
			\begin{enumerate}
				\item
				There exists a constant $C>0$ such that for all $d$
				\[
					\int_{0}^{2} \gamma_d(x)dx\ge C.
				\]
				\item
				$\gamma_d(x)>0$ for all $x\ge0$, and there exists a polynomial $q$ such that for all $d$ and all $x\in[0.9,2.05]$
				\[
					\gamma_d(x) \le q(d).
				\]
				\item
				There exist $c_1,c_2>0$ such that for all $x\ge c_1$ and all $d$,
				\[
					\pr\pcc{X_d\ge x}\le c_2\cdot\frac{1}{x}.
				\]
			\end{enumerate}
		\end{assumption}
		We remark that the above assumption is mild. The main non-trivial requirement is that $\set{\Dcal_d}_{d=2}^{\infty}$ possess radial symmetry, but the remaining properties merely require that $\gamma_d$ maintains a non-negligible portion of its mass within a constant distance from the origin as $d$ grows, is polynomially-bounded for any $d$, and has a tail that decays at most proportionally to its distance from the origin. Note that this includes heavy-tailed distributions such as Cauchy for example.
		
		Moving to our initialization of the network with architecture defined in \eqref{eq:architecture}, we assume the following:
		\begin{assumption}[Network Initialization]\label{asm:init}
			~\\
			\begin{enumerate}
				\item
				The weights in the hidden layer $U_j\in\reals^d$ and the bias terms $B_j$ satisfy
				\[
					U_j\sim\Ncal\p{0,\frac14\cdot I_d},\hskip 0.5cm B_j\sim\Ncal\p{0,\frac14}.
				\] 
				That is, each weight and bias term in the hidden layer is sampled from i.i.d.\ normal variables with mean zero and variance $\frac14$.
				\item
				The weights of the output neuron $W\in\reals^r$ satisfy
				\[
					W\sim\Ncal\p{0,\frac{1}{r^2}\cdot I_r}.
				\]
				That is, the weights of the output neuron are sampled from i.i.d.\ normal variables with zero mean and variance $\frac{1}{r^2}$.
			\end{enumerate}
		\end{assumption}
		We now turn to present our main result of this section. The following theorem shows that under Assumptions~\ref{asm:d_dist_assumption} and \ref{asm:init}, for any $\epsilon,\delta\in(0,1)$, if $r$ is polynomial in $d,1/\epsilon$ and $\ln(1/\delta)$, and if the sample size $n$ is cubic in $r$ up to poly-logarithmic factors, then $\Ocal(r\ln(r/\epsilon))$ iterations of GD will suffice to achieve a risk of at most $\epsilon$ with probability at least $1-\delta$.
		
		\begin{theorem}\label{thm:gd_convergence}
			Under Assumptions~\ref{asm:d_dist_assumption} and \ref{asm:init}, for all $\epsilon,\delta\in(0,1)$, suppose we run GD on the objective in \eqref{eq:obj} using an architecture as defined in \eqref{eq:architecture} where the hidden layer is held fixed, with the following parameters:
			\begin{itemize}
				\item
				Network width $r\ge \max\set{12000^4,c\epsilon^{-5}\p{q^4(d) +\ln^2(1/\delta)}}$,
				\item
				Sample size $n=\lceil cr^3\log_2^2(r)$$\rceil$,
				\item
				Fixed learning rate $\eta<\frac{\nu}{8r}$, where $\nu=\frac{C}{840\cdot9^4\cdot10^6}$.
			\end{itemize}
			until the iterate $\bw_t$ satisfies $\hat{F}(\bw_t)\le\frac{\epsilon}{2}$. Then with probability at least $1-\delta$, we have that after at most $T=2\eta^{-1}\nu^{-1}\ln\p{r/8\epsilon}$ iterations GD returns $\bw_t$ such that
			\[
				F(\bw_t)\le\epsilon,
			\]
			where $C,q,c_1,c_2$ are defined in Assumption~\ref{asm:d_dist_assumption}, and $c\ge1$ is a constant that depends solely on $C,c_1,c_2$.
		\end{theorem}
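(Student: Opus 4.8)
The plan is to condition on a single high-probability event (over the random hidden weights, the output-weight initialization, and the sample) and split the argument into an approximation part, a generalization part, and an optimization part. Conceptually, freezing the hidden layer reduces the problem to learning a single clipped-ReLU neuron $\bw\mapsto\crelu{\inner{\bw,\tilde{\bx}}}$ over the random-feature map $\tilde{\bx}=\p{\erf(\inner{\bu_j,\bx}+b_j)}_{j=1}^{r}$, so the goal is (i) to exhibit a reference weight $\bw^{\star}$ of controlled norm with tiny population risk $F(\bw^{\star})$, (ii) to show $\hat F$ tracks $F$ on the region GD visits, and (iii) to prove geometric convergence of GD on $\hat F$ down to the $\Ocal(\epsilon)$ floor. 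A pleasant feature of the architecture is that once we pass to feature space, $\tilde{\bx}_i\in[-1,1]^r$ deterministically, so $\norm{\tilde{\bx}_i}\le\sqrt r$; moreover the clipped output caps each example's loss at $1$, so by Assumption~\ref{asm:d_dist_assumption}(3) the risk outside radius $\Ocal(1/\epsilon)$ is already $\le\epsilon$, and the whole analysis can localize to a bounded region where the density is polynomially controlled.

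For step (i) I would invoke the random-features results proved earlier in this paper (\thmref{thm:well-behaved_random_features} and \propref{prop:truncation_expectation}): write the $\crelu{\cdot}$-preimage of the ball indicator of radius $\lambda$ as an $\Ocal(1/\zeta)$-Lipschitz radial ramp (equal to $1$ inside radius $\lambda-\zeta$, to $0$ outside radius $\lambda$), realize that ramp as an expectation $\E_{\bu,b}\pcc{c(\bu,b)\,\erf(\inner{\bu,\bx}+b)}$ over a bounded coefficient function $c$, and concentrate over the $r$ i.i.d.\ draws of Assumption~\ref{asm:init} to get a weight $\bw^{\star}$ with coordinates of size $\Ocal(1/r)$ — so $\norm{\bw^{\star}}_1=\Ocal(1)$ and the pre-activations $\inner{\bw^{\star},\tilde{\bx}}$ are uniformly bounded — and with $F(\bw^{\star})\le\epsilon/8$. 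This is the single place where the heavy tail of $\Dcal_d$ bites: the approximation must hold in $L^2(\Dcal_d)$, where $\norm{\bx}$ has no finite moments, and it is the bounded image of $\erf$ together with Assumption~\ref{asm:d_dist_assumption}(3) and the density bound $\gamma_d\le q(d)$ that make it go through, producing the $q^4(d)+\ln^2(1/\delta)$ term in the required width.

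For step (ii), the per-example loss $\p{\crelu{\inner{\bw,\tilde{\bx}}}-y}^2$ is bounded in $[0,1]$ and $1$-Lipschitz in $\inner{\bw,\tilde{\bx}}$, so a contraction/Rademacher bound gives $\sup_{\norm{\bw}_2\le D}\abs{\hat F(\bw)-F(\bw)}=\Ocal\p{D\sqrt{r/n}+\sqrt{\ln(1/\delta)/n}}$, which is $\le\epsilon/8$ once $n$ is cubic in $r$ up to logarithmic factors (the same sample size also making the empirical feature covariance well-conditioned, as needed below); in particular $\hat F(\bw^{\star})\le\epsilon/4$. For step (iii), the engine is a PL-type inequality $\norm{\nabla\hat F(\bw_t)}^2\ge\nu\,\p{\hat F(\bw_t)-\Ocal(\epsilon)}$ with a \emph{dimension-free} constant $\nu=\Theta(C)$: on the examples whose pre-activation lies in the active window $(0,1)$ the clipped ReLU acts as the identity and $\hat F$ looks like a least-squares objective, whose gradient norm is bounded below by $\Theta\p{\lambda_{\min}}$ of the active empirical feature covariance times the excess loss; that $\lambda_{\min}$ is $\Theta(C)$ because the covariance's diagonal entries $\E\pcc{\erf^2(\inner{\bu_j,\bx}+b_j)}$ are bounded below using the mass within radius $2$ of Assumption~\ref{asm:d_dist_assumption}(1), while its off-diagonal entries are negligible since distinct $\erf$-features are nearly orthogonal under a radial measure; the residual (``inactive'') examples are shown to contribute only $\Ocal(\epsilon)$ using the reference weight $\bw^{\star}$ and the bounded trajectory. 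Feeding this into the descent lemma with global smoothness $L=\Ocal(r)$ (from $\norm{\tilde{\bx}_i}\le\sqrt r$) and the stable step size $\eta<\nu/(8r)$ yields $\hat F(\bw_{t+1})-\Ocal(\epsilon)\le\p{1-\Theta(\eta\nu)}\p{\hat F(\bw_t)-\Ocal(\epsilon)}$, hence $\hat F(\bw_t)\le\epsilon/2$ after $T=2\eta^{-1}\nu^{-1}\ln\p{r/8\epsilon}$ iterations; a final application of step (ii) upgrades $\hat F(\bw_t)\le\epsilon/2$ to $F(\bw_t)\le\epsilon$, and a union bound over the $\Ocal(1)$ events (each of failure probability $\Ocal(\delta)$) gives the $1-\delta$ guarantee.

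The main obstacle is step (iii), and inside it the \emph{clipped} rather than plain ReLU: the neuron can be dead from above ($\inner{\bw,\tilde{\bx}}\ge1$) as well as from below ($\inner{\bw,\tilde{\bx}}\le0$), so one must rule out GD parking a non-negligible $\Dcal_d$-mass at a pre-activation where the gradient contribution vanishes but the loss is bounded away from zero. Heuristically this cannot happen — such examples are mis-ranked relative to $\bw^{\star}$, the initialization $\bw_0$ (with $\norm{\bw_0}_2=\Ocal(1/\sqrt r)$) starts every pre-activation within $\Ocal(1/\sqrt r)$ of the right regime, and the aggregate gradient from correctly-active examples (whose Gram is essentially the random-features kernel matrix, well-conditioned for large $d$) immediately pushes the stuck pre-activations into the active window — but turning this into a quantitative invariant that survives all $T$ iterations, simultaneously with the control needed for the dimension-free $\nu$, is the delicate part. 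A more routine difficulty is reconciling the several high-probability events (feature concentration, covariance conditioning, uniform convergence, initialization norm) with the claimed polynomial width and sample size.
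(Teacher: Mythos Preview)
Your high-level decomposition into approximation, generalization, and optimization is right, and steps (i)--(ii) are broadly aligned with the paper (though the paper's reference vector $\bv$ is built by an explicit truncation-of-bias argument and has $\norm{\bv}\le r^{-1/4}$, not $\Ocal(1)$). The genuine gap is in step (iii).

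You propose a PL-type inequality $\norm{\nabla\hat F(\bw_t)}^2\ge\nu\bigl(\hat F(\bw_t)-\Ocal(\epsilon)\bigr)$ with a dimension-free $\nu$, driven by a lower bound on $\lambda_{\min}$ of the empirical feature covariance restricted to the active examples. This does not close: which examples are ``active'' depends on $\bw_t$, so the very examples that dominate the loss (those clipped above or below) contribute nothing to the gradient, and your argument that they contribute only $\Ocal(\epsilon)$ to the loss presupposes $\bw_t$ is already close to $\bw^\star$ --- exactly what you are trying to establish. Your own final paragraph flags this, but the heuristic you offer (initialization keeps pre-activations near zero, correctly-active gradients push stuck ones into the window) is not an invariant: one GD step with $\eta=\Theta(\nu/r)$ moves $\bw$ by $\Ocal(1/\sqrt r)$, which can shift pre-activations by $\Ocal(1)$, so there is no obvious monotone quantity keeping the inactive mass small throughout.

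The paper does \emph{not} attempt a PL inequality on $\hat F$. Instead it proves a one-point strong-convexity inequality toward the reference $\bv$: under a technical side condition (\propref{prop:gradient_dot_product_bound}), $\inner{\nabla\hat F(\bw),\bw-\bv}\ge\nu\norm{\bw-\bv}^2$, which yields geometric decay of $\norm{\bw_t-\bv}^2$ (not of $\hat F$). The constant $\nu$ comes not from any eigenvalue of the $r\times r$ feature covariance, but from a lower bound on the \emph{two-dimensional} marginal density of $\tilde\bx$ in the plane $\mathrm{span}(\bw,\bv)$ (\thmref{thm:well-behaved_random_features}); this is how the clipping is tamed, since in that plane one can directly control the region where both $\bw^\top\tilde\bx$ and $\bv^\top\tilde\bx$ lie in $(0,1)$. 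The dead-zone issue you worry about is handled by a dichotomy: whenever the side condition fails, a separate argument (\propref{prop:low_loss_attained}) shows $\hat F(\bw)$ is already $\le\epsilon/2$, using a case analysis over the nine regions $\{\bw^\top\tilde\bx\in I'_j\}\times\{\bv^\top\tilde\bx\in I'_k\}$ together with a bound (\lemref{lem:region_expectation_inequality}) that the ``contradictory'' regions $A_{1,3}\cup A_{3,1}$ have mass $\le 1/r$. So the paper never needs a global PL constant; it tracks distance to a fixed target and handles the non-convex clipping by splitting into ``gradient is informative'' versus ``loss is already small''.
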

		In a nutshell, the proof of the above theorem which appears in \appref{app:proof_gd_convergence}, relies on a random features approach where the weights of the neurons in the hidden layer remain fixed throughout the optimization process, and then showing that the weights learned by GD in the output neuron converge with high probability to a point with risk at most $\epsilon$. We remark that we did not attempt to decrease the constants in the theorem statement, and that even when we fix the hidden layer, the resulting optimization problem is still non-convex in general and may even contain exponentially many sub-optimal local minima \citep{auer1996exponentially,safran2016quality}. In a bit more detail, the proof can be broken down into four parts. In the first part, we show that with high probability, sufficiently many random features will result in the existence of a weight of the output neuron that attains small empirical loss. In the second part, we establish that the random features produced by the initialization scheme defined in Assumption~\ref{asm:init} have well-behaved two-dimensional marginal distributions. This property is important for showing that the gradient of our objective points in the direction of the point with small empirical risk whose existence is shown in the first part. In the third part, we prove that GD converges with high probability to a point with small empirical loss. In the fourth and last part, we show generalization bounds which establish that our small empirical loss also results in a small risk on the distribution of the data.
		
		We now give a more detailed outline of the proof of \thmref{thm:gd_convergence}, focusing on each of the four parts in the following subsections, starting with the ability of random features to approximate our target functions of ball indicators.

		\subsection{Random Feature Approximation via Truncation}\label{subsec:random_features}
		
		In this subsection, we show that random features can approximate ball indicator functions having radius in $[1,2]$. The main idea is that we can truncate a particular random feature (neuron in the hidden layer) by setting its corresponding weight in the output neuron to zero. This way we can manipulate the expectation of the network's output to approximate a desired function, and then use concentration of measure to guarantee a good approximation with high probability. We now present the main theorem of this subsection:
		\begin{theorem}\label{thm:random_features}
			Given any $\delta\in(0,1)$, suppose that $r$ and $n$ satisfy
			\begin{itemize}
				\item
				$r\ge\max\set{12000^4,500^4\max\set{c_1,10}^4\ln^2(8n/\delta)}$,
				\item
				$\sqrt{\frac{\ln(8/\delta)}{2n}} \le p\coloneqq \frac{1800q(d)+500c_2\sqrt{\ln\p{8n/\delta}}}{r^{0.25}}$.
			\end{itemize}
			Then for any $\lambda\in[1,2]$, with probability at least $1-\delta$ over drawing a sample $\{\bx_i,\one{\norm{\bx_i}\le\lambda}\}_{i=1}^n$ from $\Dcal_d$ under Assumption~\ref{asm:d_dist_assumption} with parameters $c_1,c_2,q$, and over the randomness in the initialization of $\norm{\bw_0}$ and the remaining network's weights under Assumption~\ref{asm:init}, there exists $\bv\in\reals^r$ which satisfies the following properties:
			\begin{enumerate}
				\item
				$\norm{\bw_0-\bv}^2<\norm{\bv}^2\le\frac{1}{\sqrt{r}}$.
				\item
				There exists a partition $\bigcup_{k=1}^4 I_k$ of $[0,\infty)$ into disjoint intervals $I_k$ such that $\crelu{\bv^{\top}\tilde{\bx}_j}=\one{\norm{\bx_j}\le \lambda}=y_j$ for all $j$ that satisfy $\norm{\bx_j}\in I_1\cup I_3$, and $\abs{\bv^{\top}\tilde{\bx}_j} \le3$ for all $j$ that satisfy $\norm{\bx_j}\in I_2\cup I_4$. Moreover, we have that
				\[
					\abs{\set{j:\norm{\bx_j}\in I_2\cup I_4}} \le 2pn.
				\]
			\end{enumerate}
		\end{theorem}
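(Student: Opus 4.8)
The plan is to explicitly construct the target vector $\bv\in\reals^r$ by a truncation argument: most coordinates of $\bv$ are set to zero, and the surviving ones are chosen so that $\bw\mapsto\crelu{\bv^\top\tilde\bx}$ computes (something close to) a difference of two error functions whose transition regions sit just inside and just outside the sphere of radius $\lambda$. Concretely, recall that the $j$-th random feature is $\tilde x^{(j)} = \erf(\inner{\bu_j,\bx}+b_j)$, and by radial symmetry of $\Dcal_d$ the distribution of $\tilde\bx$ conditioned on $\norm{\bx}=R$ depends only on $R$. I would first identify a small "good" sub-population of feature indices $j$ whose weight vectors $\bu_j$ have norm and alignment with the (arbitrary but fixed) direction determining $\lambda$ lying in a favorable window — this is where the lower bound $r\ge 500^4\max\{c_1,10\}^4\ln^2(8n/\delta)$ and the parameter $p$ enter, since the expected fraction of good features is $\Theta(p)$ and a Chernoff/Hoeffding bound (the condition $\sqrt{\ln(8/\delta)/(2n)}\le p$) guarantees concentration of both the count of good features and the empirical mass of sample points near the critical radius. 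The scaling $\norm{\bv}^2\le r^{-1/2}$ is arranged by giving each surviving coordinate magnitude $\Theta(1/r^{0.75})$ while only $\Theta(r^{0.25})$ coordinates survive, so that $\norm{\bv}_0\cdot(1/r^{0.75})^2 = \Theta(r^{-5/4})\le r^{-1/2}$; the condition $\norm{\bw_0-\bv}^2<\norm{\bv}^2$ then follows because $W\sim\Ncal(0,r^{-2}I_r)$ makes $\norm{\bw_0}$ of order $1/r$, negligible against $r^{-1/4}$, so that $\norm{\bw_0-\bv}\le\norm{\bv}+\norm{\bw_0}$ and a short calculation closes the strict inequality.

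Second, I would set up the interval partition $[0,\infty)=I_1\cup I_2\cup I_3\cup I_4$ of the radius axis: $I_1$ is the "deep interior" $[0,\lambda-\Delta]$ where the constructed network should output $1$, $I_3$ the "far exterior" $[\lambda+\Delta,\infty)$ where it should output $0$, and $I_2,I_4$ the two thin transition annuli of width $\Theta(\Delta)$ straddling $R=\lambda$ where we only control $|\bv^\top\tilde\bx|\le 3$. For a sample point with $\norm{\bx_j}\in I_1$, the sum $\bv^\top\tilde\bx_j$ over the surviving good coordinates should concentrate (over the internal randomness of the feature directions, via a Hoeffding bound on the $r^{0.25}$-fold average of $[-1,1]$-valued $\erf$ terms) above $1$, so that the clipped ReLU returns exactly $1=y_j$; for $\norm{\bx_j}\in I_3$ it should concentrate at or below $0$, returning $0=y_j$; this is the content of the claim $\crelu{\bv^\top\tilde\bx_j}=y_j$ on $I_1\cup I_3$. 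In the transition annuli we cannot guarantee the sign, but by construction the total magnitude of the surviving contributions never exceeds $3$ — this just uses that each term is bounded by the per-coordinate magnitude times the number of surviving coordinates times $\sup|\erf|=1$, with the constant $3$ absorbing slack. Finally, the bound $|\{j:\norm{\bx_j}\in I_2\cup I_4\}|\le 2pn$ is a Hoeffding bound on the sample: under Assumption~\ref{asm:d_dist_assumption}(2) the density $\gamma_d$ is at most $q(d)$ on $[0.9,2.05]\supseteq I_2\cup I_4$, so the true mass of the transition region is $\le q(d)\cdot\Theta(\Delta)$; choosing $\Delta=\Theta(p/q(d))$ makes this $\le p$, and the second displayed hypothesis on $p$ is exactly what is needed for the empirical count to be within a factor $2$ of $pn$ with probability $1-\delta/(\text{const})$. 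A union bound over the $O(1)$ bad events (good-feature count, empirical transition-mass, and the per-sample-point concentration events, the last handled by first conditioning on the feature directions and taking a union over the $n$ points) collects everything into the stated $1-\delta$.

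The main obstacle I anticipate is the simultaneous reconciliation of the three size constraints: the approximation needs enough surviving good features ($\Theta(r^{0.25})$ of them) for the Hoeffding average over features to concentrate within the transition width $\Delta$, yet the norm budget $\norm{\bv}^2\le r^{-1/2}$ forces the per-coordinate weights down to $\Theta(r^{-0.75})$, and the transition width $\Delta$ is in turn tied to $p=\Theta((q(d)+c_2\sqrt{\ln(n/\delta)})/r^{0.25})$ through the sample-mass bound — so all three scales are coupled and the polynomial powers ($r^{0.25}$, $r^{0.75}$, $r^{-1/2}$) have to be chosen so every inequality closes with room to spare for the explicit constants ($12000$, $500$, $1800$, $3$). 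The cleanest route is to fix $\Delta$ as a function of $p$ first, then back out the required number of good features from the feature-Hoeffding bound, then verify this many good features both exist (with high probability, via the first hypothesis on $r$) and fit inside the norm budget; getting the constants to line up is tedious but routine once the powers of $r$ are pinned down. A secondary subtlety is that $\bv$ must be measurable with respect to (and only with respect to) the hidden-layer randomness and the sample — not the output-layer randomness $\bw_0$ — so I would define $\bv$ purely from $\{\bu_j,b_j\}$ and the $\{\bx_i\}$, and handle $\bw_0$ only afterward in verifying property~1.
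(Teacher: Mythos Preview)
Your construction has a fatal scaling error. If only $\Theta(r^{0.25})$ coordinates of $\bv$ survive, each of magnitude $\Theta(r^{-0.75})$, then since $|\tilde x^{(j)}|=|\erf(\cdot)|\le 1$ you get
\[
|\bv^{\top}\tilde\bx|\;\le\;\|\bv\|_0\cdot\max_j|v_j|\;=\;\Theta(r^{0.25})\cdot\Theta(r^{-0.75})\;=\;\Theta(r^{-1/2}),
\]
which can never reach $1$. Hence $\crelu{\bv^{\top}\tilde\bx_j}$ is always in $[0,r^{-1/2}]$ and can never equal $y_j=1$ on $I_1$, so property~2 is unattainable. The same scaling breaks property~1: $W\sim\Ncal(0,r^{-2}I_r)$ gives $\E\norm{\bw_0}^2=r\cdot r^{-2}=r^{-1}$, so $\norm{\bw_0}\asymp r^{-1/2}$, not $r^{-1}$ as you wrote; with your $\norm{\bv}^2=\Theta(r^{-5/4})$ you would have $\norm{\bw_0}\gg\norm{\bv}$ and $\norm{\bw_0-\bv}^2<\norm{\bv}^2$ fails. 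Also, your interval layout with $I_2,I_4$ both being thin annuli around $\lambda$ ignores the heavy tail: for $\|\bx\|\to\infty$ the features saturate and there is no mechanism forcing $\bv^{\top}\tilde\bx\le 0$, so you cannot place all of $[\lambda+\Delta,\infty)$ in $I_3$.

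The paper's construction is quite different. All $r$ neurons are split into two equal groups with signed weights $v_j=r^{-0.75}$ and $v_j=-\xi_\lambda r^{-0.75}$ respectively, and truncation is based \emph{only on the bias realisation} (group~1 keeps $b_j\ge 0$, group~2 keeps $b_j\ge 1/\sqrt{2}$) together with a sign-match between $v_j$ and $w_{j,0}$. A \emph{constant fraction} of neurons survives in each group, so $\bv^{\top}\tilde\bx$ is a sum of $\Theta(r)$ terms of size $r^{-0.75}$ and hence of order $r^{0.25}$. The conditional expectation of each $\erf$ feature given survival is computed in closed form (\propref{prop:truncation_expectation}), yielding $\bv^{\top}\tilde\bx\approx r^{0.25}f_{\xi_\lambda}(\|\bx\|)$ for an explicit univariate function $f_{\xi_\lambda}$ (an $\arctan$ minus a scaled Owen-$T$) that is positive below $\lambda$, vanishes at $\lambda$, dips negative, and tends to $0^{-}$ at infinity (\propref{prop:f_xi_properties}). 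The intervals are then preimages under $f_{\xi_\lambda}$: $I_2$ is the short transition near $\lambda$ whose measure is bounded via Assumption~\ref{asm:d_dist_assumption}(2), while $I_4$ is the \emph{far tail} where $f_{\xi_\lambda}$ is already within $O(r^{-1/4})$ of zero, and its measure is controlled by the tail bound in Assumption~\ref{asm:d_dist_assumption}(3); this is where $c_1,c_2$ enter. Finally, the sign-match makes $\inner{\bw_0,\bv}$ strictly positive, which together with $\norm{\bw_0}^2\lesssim r^{-1}\ll r^{-1/2}\asymp\norm{\bv}^2$ gives property~1.
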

		Putting it in simple words, the above theorem establishes that under some technical assumptions on the magnitude of $r$ and $n$, there exists a point $\bv\in\reals^r$ in our optimization space that classifies `most' of the data instances correctly, and thus attains an empirical loss which decays with $r$ and $n$ to zero. Moreover, the theorem also implies that this point is close enough to our initialization point $\bw_0$ to attract GD (see \propref{prop:gradient_dot_product_bound} for a formal statement).
		
		The proof of \thmref{thm:random_features} which appears in \appref{app:random_features_proof} relies on concentration of measure and on the observation that due to the symmetry in Assumption~\ref{asm:init}, a neural network that is initialized in this manner will approximate the zero function in expectation. To bias the initialization of the network to approximate non-trivial functions, we can truncate undesired random features by setting their corresponding weight in the output neuron to zero. For example, by truncating random features with a negative bias term realization, we can bias the function computed by the network to return a positive prediction on data instances with small norm. Note that by doing so we do not change the initialization scheme defined in Assumption~\ref{asm:init}, but rather show that such a manipulation performed in the optimization space of the output neuron is equivalent to initializing using a different initialization scheme, which can effectively approximate various target functions using our architecture. 
	
		\subsection{Radial Distributions Produce Well-Behaved Random Features}\label{subsec:well-behaved_random_features}
		
		Using random features and keeping the hidden layer fixed throughout the optimization essentially reduces our learning problem to that of learning with a single neuron. To facilitate the convergence of GD in such a setting, a useful property to show is that the two-dimensional marginal distribution of the data has a strictly positive density in a neighborhood around the origin \citep{yehudai2020learning,frei2020agnostic}. Such a property is desired since it allows one to show that if the target vector $\bv$ that we wish to approximate and our current GD iterate $\bw_t$ satisfy some certain properties, then the density assumption in the subspace spanned by $\bv,\bw_t$ guarantees that the gradient points in the direction of $\bv$. To establish that this property is satisfied in our setting, we would need to analyze the $r$-dimensional distribution of the random features produced by Assumptions~\ref{asm:d_dist_assumption} and \ref{asm:init}. To this end, we have the following theorem.
		
		\begin{theorem}\label{thm:well-behaved_random_features}
			Suppose that the sequence of distributions $\{\Dcal_d\}_{d=2}^{\infty}$ satisfies Assumption~\ref{asm:d_dist_assumption}(1) with a constant $C>0$, and that a neural network with an architecture defined in \eqref{eq:architecture} is initialized according to Assumption~\ref{asm:init}. Then the sequence of distributions $\{\tilde{\Dcal}_d\}_{d=2}^{\infty}$ of the random features produced by the first hidden layer satisfies the following: 
			For any vectors $\bw_1\neq\bw_2$, let $\tilde{\Dcal}_{\bw_1,\bw_2}$ denote the marginal distribution of $\tilde{\bx}$ on the subspace spanned by $\bw_1,\bw_2$ (as a distribution over $\reals^2$). Then any such distribution has a density function $p_{\bw_1,\bw_2}(\bx)$ which satisfies $\inf_{\bx:\norm{\bx}\le\alpha} p_{\bw_1,\bw_2}(\bx)\ge\beta$, where
			\[
				\alpha=\frac{1}{9},\hskip 0.5cm \beta=C\cdot10^{-6}.
			\]
		\end{theorem}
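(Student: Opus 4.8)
The plan is to reduce the lower bound on the two-dimensional marginal density to a statement about a single scalar random variable coming from the radial mass that Assumption~\ref{asm:d_dist_assumption}(1) places in the range $[0,2]$. Fix $\bw_1 \neq \bw_2$ and let $V = \mathrm{span}\{\bw_1,\bw_2\}$; write $\tilde{\bx} = (\erf(\inner{\bu_1,\bx}+b_1),\ldots,\erf(\inner{\bu_r,\bx}+b_r))$ for the random feature vector, where the $(\bu_j,b_j)$ are the fixed hidden-layer parameters drawn as in Assumption~\ref{asm:init} and $\bx \sim \Dcal_d$. The first step is to condition on $\norm{\bx}$: by radial symmetry, conditioned on $\norm{\bx}=s$, the direction $\bx/s$ is uniform on $\mathbb{S}^{d-1}_1$, so $\inner{\bu_j,\bx} = s\inner{\bu_j,\bx/s}$ is, jointly over $j$, a Gaussian-type object once we also average over the Gaussian $\bu_j$; in fact each $\inner{\bu_j,\bx}$ conditioned on $\norm{\bx}=s$ is exactly $\Ncal(0, s^2/4)$ and these are independent across $j$ (since the $\bu_j$ are independent), and independent of the $b_j \sim \Ncal(0,1/4)$. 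Hence, conditioned on $\norm{\bx}=s$, the pre-activation vector $(\inner{\bu_j,\bx}+b_j)_{j=1}^r$ is a vector of i.i.d.\ $\Ncal(0,(s^2+1)/4)$ entries, and $\tilde{\bx}$ is a fixed (coordinatewise-$\erf$) transform of that Gaussian vector. This is exactly the kind of simplification the paper says the $\erf$ activation buys us.

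The second step is to produce the density lower bound for a \emph{single} value of $s$, say any $s \in [0,2]$, and then integrate against the radial mass. For fixed $s$, the law of $\tilde{\bx}$ is the pushforward of an isotropic Gaussian on $\reals^r$ under the smooth map $\Phi_s(\bz) = (\erf(z_j))_j$ scaled appropriately; since $\erf$ is a diffeomorphism from $\reals$ onto $(-1,1)$ with $\erf'(0) = 2/\sqrt{\pi}$, near the origin $\Phi_s$ is bi-Lipschitz with controlled constants, so $\tilde{\bx}$ has a density on $\reals^r$ that is bounded below near $\mathbf{0}$ by an explicit constant depending only on $s$ (uniformly over $s$ in a compact range). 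Now I project onto the plane $V$: the marginal density $p_{\bw_1,\bw_2}$ at a point $\by \in \reals^2$ with $\norm{\by}\le\alpha$ is the integral of the $r$-dimensional density over the affine $(r-2)$-plane $\{\tilde{\bx} : P_V\tilde{\bx} = \by\}$, and restricting that integral to a small ball around $\mathbf{0}\in\reals^r$ gives a lower bound of the form (density lower bound)$\times$(volume of an $(r-2)$-dimensional slice of a fixed small $r$-ball), which is again bounded below by a constant. Combining with Assumption~\ref{asm:d_dist_assumption}(1), $p_{\bw_1,\bw_2}(\by) \ge \int_0^2 \gamma_d(s)\, (\text{conditional density lower bound at } s)\, ds \ge C \cdot (\text{const})$, and one then just has to track the constants to land on $\alpha = 1/9$ and $\beta = C\cdot 10^{-6}$.

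There is a subtlety I should handle carefully rather than glossing over: the claimed bound must be uniform over \emph{all} planes $V$, including the possibility that $\bw_1,\bw_2$ are nearly parallel, so $V$ is genuinely two-dimensional but its orthonormal basis is not canonical — however, since the conditional law of $\tilde{\bx}$ given $\norm{\bx}=s$ is rotationally invariant in $\reals^r$ only through the i.i.d.\ structure (not fully rotation-invariant, because $\erf$ acts coordinatewise), I cannot simply rotate $V$ to a coordinate plane. The clean fix is to work at the level of the Gaussian pre-activations: the pre-activation vector $\bg = (\inner{\bu_j,\bx}+b_j)_j$ given $\norm{\bx}=s$ \emph{is} isotropic Gaussian in $\reals^r$, hence genuinely rotation-invariant, so for the purpose of the near-origin slice argument I can first pass the plane through $\Phi_s^{-1}$ (valid near $\mathbf 0$) and there invoke rotation invariance of the Gaussian; the Jacobian of $\Phi_s$ near $\mathbf 0$ is within a constant factor of the identity in every direction, so bending the plane by the diffeomorphism only costs constants. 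I expect the bulk of the real work — and the main obstacle — to be precisely this quantitative bookkeeping: getting honest, $d$-independent and $r$-independent lower bounds on (i) the near-origin Gaussian density of $\bg$ after the $\erf$ transform, (ii) the $(r-2)$-dimensional volume of the slice of a small $r$-ball (this shrinks with $r$ like $(\text{small})^{r}$ in absolute terms, so one must be careful to compare it against the correspondingly small density normalization and confirm the $r$-dependence cancels), and (iii) the contribution of the $[0,2]$ radial window. Everything else is structural and follows the template of \citet{yehudai2020learning,frei2020agnostic} for lower-bounding two-dimensional marginals.
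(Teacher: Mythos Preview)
Your first step --- conditioning on $\norm{\bx}=s$ and observing that the pre-activations $(\inner{\bu_j,\bx}+b_j)_j$ become i.i.d.\ $\Ncal(0,(1+s^2)/4)$ --- is exactly what the paper does. The divergence, and the real gap, is in the second step.

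The ``restrict to a small $r$-ball and slice'' argument cannot yield an $r$-independent lower bound, and the concern you flag in item (ii) is fatal rather than merely bookkeeping. Near the origin the $r$-dimensional density of $\tilde{\bx}$ is a product of $r$ bounded one-dimensional densities, so it is of order $c^r$ for some $c<1$; the $(r-2)$-volume of the slice of a fixed small ball is of order $\rho^{r-2}$. Their product goes to zero exponentially in $r$, because by restricting the fiber integral to a small ball you discard almost all of the $(r-2)$-dimensional mass that actually makes the marginal density order one. Your proposed fix via $\Phi_s^{-1}$ does not rescue this: the pullback of a linear $2$-plane under the coordinatewise $\erf$ map is a curved $2$-dimensional surface in pre-activation space, not a linear subspace, so rotation invariance of the isotropic Gaussian does not hand you a marginal, and straightening the surface by a Jacobian argument reintroduces a product of $r$ factors.

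What the paper does instead is replace the direct slice computation by log-concave geometry. The key observation you are missing is that at the specific value $s=1$ the conditional density of each $\tilde{X}_j$ is \emph{exactly} uniform on $[-1,1]$ (the exponent $1-2/(1+s^2)$ vanishes), so conditionally $\tilde{\bx}$ is a product of i.i.d.\ log-concave variables. A short monotonicity lemma lets the paper lower-bound the conditional density for $s$ in the window $[0,2]$ by this uniform density near the origin, after which Assumption~\ref{asm:d_dist_assumption}(1) supplies the factor $C$. The dimension-free step is then a black-box appeal to \citet{lovasz2007geometry}: any two-dimensional marginal of an isotropic log-concave distribution in $\reals^r$ has density bounded below by an absolute constant on a ball of absolute-constant radius, uniformly in $r$ and in the choice of plane. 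That theorem is precisely the mechanism that makes the $r$-dependence disappear, and it is absent from your outline.
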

		The proof of the above theorem, which appears in \appref{app:well-behaved_random_features_proof}, relies for the most part on the analysis of log-concave distributions performed in \citet{lovasz2007geometry}. The main challenge in the proof is that we cannot guarantee that the distributions $\{\tilde{\Dcal}_d\}_{d=2}^{\infty}$ are log-concave. Instead, we show that the density of these distributions can be lower bounded by a log-concave density, from which the theorem follows. With Thms.~\ref{thm:random_features} and \ref{thm:well-behaved_random_features} at hand, we can turn to show a convergence result for GD in the next subsection.
		
		\subsection{Convergence of Gradient Descent}\label{subsec:optimization}
		
		In this subsection, we establish the technical tools that facilitate the convergence of GD. We begin with the following proposition, which establishes that as long as some technical inequality is satisfied, then the gradient of the objective in \eqref{eq:obj} points us in the direction of the vector $\bv$ which achieves small empirical loss and whose existence is established by \thmref{thm:random_features}. More formally, we have the following:
		\begin{proposition}\label{prop:gradient_dot_product_bound}
			Under Assumptions~\ref{asm:d_dist_assumption} and \ref{asm:init}, there exist constants $\alpha,\beta>0$ such that for all $\delta\in(0,1)$, with probability at least $1-\delta$ over sampling $n$ data instances from $\Dcal_d$ and over the randomness in the initialization of the network, for all $\bw,\bv\in\reals^r$ that satisfy $\norm{\bw-\bv}\le\norm{\bv}\le0.5$ and the following technical inequality
			\begin{align}
				& \frac1n\sum_{i=1}^n\p{\crelu{\bw^{\top}\tilde{\bx}_i} - \crelu{\bv^{\top}\tilde{\bx}_i} }\cdot \one{\bw^{\top}\tilde{\bx}_i\in(0,1)} \cdot\p{\bw^{\top}\tilde{\bx}_i-\bv^{\top}\tilde{\bx}_i}\nonumber\\
				&\hskip 1cm \ge \frac2n\sum_{i=1}^n\p{\crelu{\bv^{\top}\tilde{\bx}_i} - y_i} \cdot \one{\bw^{\top}\tilde{\bx}_i\in(0,1)} \cdot\p{\bw^{\top}\tilde{\bx}_i-\bv^{\top}\tilde{\bx}_i},\label{eq:small_noise}
			\end{align}
			we have that
			\[
				\inner{\nabla F(\bw),\bw-\bv} \ge \frac12\norm{\bw-\bv}^2\p{\frac{\alpha^4\beta}{210} - 4r\sqrt{\frac{(8r+8)\log_2(n)}{n}} - \sqrt{\frac{2\ln\p{2/\delta}}{n}}}.
			\]
		\end{proposition}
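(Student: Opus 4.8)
The plan is to differentiate the clipped-ReLU squared-loss risk in closed form, decompose the gradient correlation $\inner{\nabla F(\bw),\bw-\bv}$ into a ``signal'' part and a ``label-noise'' part, lower-bound the signal by a quadratic in $\norm{\bw-\bv}$ via the two-dimensional marginal-density estimate of \thmref{thm:well-behaved_random_features}, and use the hypothesis \eqref{eq:small_noise} to prevent the noise part from cancelling it; what remains is bookkeeping between empirical and population averages. Concretely, since $z\mapsto\crelu z$ is nondecreasing, $1$-Lipschitz, and differentiable off $\{0,1\}$ with derivative $\one{\cdot\in(0,1)}$, we have $\nabla F(\bw)=2\,\E_{\bx\sim\Dcal_d}\pcc{\p{\crelu{\bw^\top\tilde\bx}-y}\one{\bw^\top\tilde\bx\in(0,1)}\tilde\bx}$, hence $\inner{\nabla F(\bw),\bw-\bv}=2\,\E\pcc{\p{\crelu{\bw^\top\tilde\bx}-y}\one{\bw^\top\tilde\bx\in(0,1)}(\bw-\bv)^\top\tilde\bx}$. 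Writing $\crelu{\bw^\top\tilde\bx}-y=\p{\crelu{\bw^\top\tilde\bx}-\crelu{\bv^\top\tilde\bx}}+\p{\crelu{\bv^\top\tilde\bx}-y}$ splits this into a signal term and a noise term. On $\{\bw^\top\tilde\bx\in(0,1)\}$ one has $\crelu{\bw^\top\tilde\bx}=\bw^\top\tilde\bx$, and monotonicity of the clipped ReLU makes $\crelu{\bw^\top\tilde\bx}-\crelu{\bv^\top\tilde\bx}$ agree in sign with $(\bw-\bv)^\top\tilde\bx$; thus the signal integrand is pointwise nonnegative, and on the further event $\{\bv^\top\tilde\bx\in(0,1)\}$ it equals exactly $\p{(\bw-\bv)^\top\tilde\bx}^2$. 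The inequality \eqref{eq:small_noise} is precisely the assertion that, on the sample, the signal sum dominates twice the noise sum, so combined with nonnegativity it reduces the gradient correlation to a positive multiple of the signal.

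The heart of the proof is lower-bounding $\E\pcc{\one E\p{(\bw-\bv)^\top\tilde\bx}^2}$ for a suitable event $E\subseteq\{\bw^\top\tilde\bx,\bv^\top\tilde\bx\in(0,1)\}$. The constraint $\norm{\bw-\bv}\le\norm{\bv}\le\tfrac12$ forces $\norm{\bw},\norm{\bv}\le1$, so on the disk of radius $\alpha=\tfrac19$ in the plane $V$ spanned by $\bw,\bv$ we automatically have $|\bw^\top\tilde\bx|,|\bv^\top\tilde\bx|\le\tfrac19<1$. Choosing $E$ to be an appropriate cone (or half-disk) inside this disk makes $\bw^\top\tilde\bx$ and $\bv^\top\tilde\bx$ strictly positive while keeping $\p{(\bw-\bv)^\top\tilde\bx}^2$ comparable to $\norm{\bw-\bv}^2$ times the squared coordinate along $\bw-\bv$. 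By \thmref{thm:well-behaved_random_features} the marginal density of $\tilde\bx$ on $V$ is at least $\beta$ throughout that disk, so the expectation is bounded below by $\beta$ times an elementary planar integral, which evaluates to $\Omega(\alpha^4\norm{\bw-\bv}^2)$; tracking the explicit constants gives $2\,\E[\text{signal}]\ge\tfrac{\alpha^4\beta}{210}\norm{\bw-\bv}^2$.

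Since \eqref{eq:small_noise} and the reduction above are phrased with empirical averages whereas the conclusion concerns $\nabla F$, the last step is a uniform concentration argument over the admissible set $\set{(\bw,\bv):\norm{\bw-\bv}\le\norm{\bv}\le\tfrac12}$. After normalizing by $\norm{\bw-\bv}^2$, the signal functions $\tilde\bx\mapsto\p{\crelu{\bw^\top\tilde\bx}-\crelu{\bv^\top\tilde\bx}}\one{\bw^\top\tilde\bx\in(0,1)}(\bw-\bv)^\top\tilde\bx/\norm{\bw-\bv}^2$ are uniformly bounded by $\Ocal(r)$ — here the $1$-Lipschitzness of the clipped ReLU is what turns $(\bw-\bv)^\top\tilde\bx$ into the controllable quantity $\p{(\bw-\bv)^\top\tilde\bx}^2\le r\norm{\bw-\bv}^2$, using $\norm{\tilde\bx}\le\sqrt r$ — and they form a class of pseudo-dimension $\Ocal(r)$, so a standard pseudo-dimension/Rademacher bound yields a uniform empirical-to-population deviation of order $r\sqrt{(8r+8)\log_2(n)/n}$, which is the second term in the stated bound; a single Hoeffding inequality on a $[-1,1]$-valued quantity supplies the remaining $\sqrt{2\ln(2/\delta)/n}$ term. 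Assembling the three estimates gives the claimed lower bound, uniformly over all admissible $(\bw,\bv)$ satisfying \eqref{eq:small_noise}, with probability at least $1-\delta$.

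The main obstacle I anticipate is the two-dimensional geometric argument behind the signal bound: the event $E$ has to be chosen so that \emph{simultaneously} (i) both $\bw^\top\tilde\bx$ and $\bv^\top\tilde\bx$ fall in $(0,1)$, using only $\norm{\bw-\bv}\le\norm{\bv}\le\tfrac12$ and $\alpha=\tfrac19$, (ii) $\p{(\bw-\bv)^\top\tilde\bx}^2$ is non-degenerate on $E$, and (iii) $E$ lies inside the radius-$\alpha$ ball on which \thmref{thm:well-behaved_random_features} is informative — and one then has to extract the concrete constant $\tfrac1{210}$. A secondary difficulty is that the admissible parameter set is not bounded in $r$, so the uniform concentration must be run on the $\norm{\bw-\bv}^2$-normalized functions, whose boundedness leans essentially on $\crelu{\cdot}$ being $1$-Lipschitz rather than on any crude bound for $(\bw-\bv)^\top\tilde\bx$.
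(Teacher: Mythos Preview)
Your plan is essentially the paper's proof: the same signal/noise decomposition of the gradient correlation, the same use of \eqref{eq:small_noise} to absorb the noise into half the signal, the same restriction to $\{\bw^\top\tilde\bx,\bv^\top\tilde\bx\in(0,1)\}$ via monotonicity of $\crelu{\cdot}$, the same appeal to \thmref{thm:well-behaved_random_features} on the radius-$\alpha$ disk in $\mathrm{span}\{\bw,\bv\}$, and a uniform Rademacher/VC concentration over $\bw$ (the paper's \lemref{lem:rademacher_uniform_convergence} on the class $\Fcal_2$). Two small corrections: first, despite the notation, the paper actually works with the \emph{empirical} gradient throughout this proof (the first display is $\frac1n\sum_i\ldots$), so \eqref{eq:small_noise} plugs in directly and the concentration step runs from the empirical restricted signal to its expectation, not the reverse as you wrote; second, for the planar integral you flagged as the main obstacle, the paper does not compute it by hand but quotes \cite[Lemma~B.1]{yehudai2020learning}, which together with the observation $\theta_{\bw,\bv}\le\pi/2$ (from $\norm{\bw-\bv}\le\norm{\bv}$) gives the constant $\tfrac{\alpha^4\beta}{8\sqrt2}\sin^3(\pi/8)\ge\tfrac{\alpha^4\beta}{210}$.
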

		The proof of the above proposition, which appears in \appref{app:proof_of_gradient_dot_product_bound}, relies on lower bounding the dot product with the gradient using the assumed inequality, and then using the marginal density property from the previous subsection to guarantee that the gradient of the objective is correlated with the direction leading to $\bv$. In a sense, this result can be viewed as a finite data analog in the non-realizable setting of Thm.~4.2 in \citet{yehudai2020learning}. The main difficulty that arises in the finite data case is that in the application of the proposition, $\bw$ is an iterate returned by GD and thus we need to derive the above result uniformly for all the $\bw$'s that satisfy the above condition.
		
		The following proposition establishes that when the technical inequality assumption in \eqref{eq:small_noise} is violated then we have already achieved a small empirical loss.
		\begin{proposition}\label{prop:low_loss_attained}
			Under Assumptions~\ref{asm:d_dist_assumption} and \ref{asm:init}, for all $\delta\in(0,1)$, with probability at least $1-\delta$ over sampling $n$ data instances from $\Dcal_d$ and over the randomness in the initialization of the network, for all $\bw,\bv\in\reals^r$ that satisfy $\norm{\bw-\bv}\le\norm{\bv}\le r^{-0.25}$ and the following technical inequality
			\begin{align}
				& \frac1n\sum_{i=1}^n\p{\crelu{\bw^{\top}\tilde{\bx}_i} - \crelu{\bv^{\top}\tilde{\bx}_i} }\cdot \one{\bw^{\top}\tilde{\bx}_i\in(0,1)} \cdot\p{\bw^{\top}\tilde{\bx}_i-\bv^{\top}\tilde{\bx}_i}\nonumber\\
				&\hskip 1cm < \frac 2n\sum_{i=1}^n \p{\crelu{\bv^{\top}\tilde{\bx}_i} - y_i} \cdot \one{\bw^{\top}\tilde{\bx}_i\in(0,1)} \cdot\p{\bw^{\top}\tilde{\bx}_i-\bv^{\top}\tilde{\bx}_i}\nonumber,
			\end{align}
			if $r$ and $n$ satisfy the following inequalities
			\begin{itemize}
				\item
				$r\ge\max\set{12000^4,500^4\max\set{c_1,10}^4\ln^2(16n/\delta)}$,
				\item
				$\sqrt{\frac{\ln(16/\delta)}{2n}} \le p'\coloneqq \frac{1800q(d)+500c_2\sqrt{\ln\p{16n/\delta}}}{r^{0.25}}$.
			\end{itemize}
			Then we have that
			\begin{align*}
				&\hat{F}(\bw) \le 20p'+ \frac2r + 8\sqrt{\frac{(8r+8)\log_2(n)}{n}} + 2\sqrt{\frac{2\ln\p{4/\delta}}{n}}.
			\end{align*}
		\end{proposition}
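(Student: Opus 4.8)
\emph{Proof strategy.} The plan is to show that when \eqref{eq:small_noise} fails, the ``noise'' that $\bv$ incurs in fitting the labels on the region where the clipped ReLU is active dominates the ``signal'' $\p{\crelu{\bw^\top\tilde{\bx}_i}-\crelu{\bv^\top\tilde{\bx}_i}}^2$ on that region; since the noise is small by the random-features construction of \thmref{thm:random_features}, so is $\hat F(\bw)$. A crude Lipschitz comparison $\hat F(\bw)\le\hat F(\bv)+2\norm{\bw-\bv}\sqrt r$ is useless here (its slack is of order $r^{1/4}$), so this more delicate route is forced on us. First I would invoke \thmref{thm:random_features} with confidence $\delta/2$ (so its $8n/\delta$ becomes $16n/\delta$, matching the hypotheses here), obtaining with probability $\ge1-\delta/2$ a vector $\bv$ and a partition $\bigcup_{k=1}^4 I_k$ of $[0,\infty)$ such that $\crelu{\bv^\top\tilde{\bx}_i}=y_i$ whenever $\norm{\bx_i}\in I_1\cup I_3$, $\abs{\bv^\top\tilde{\bx}_i}\le3$ whenever $\norm{\bx_i}\in I_2\cup I_4$, and $\abs{\set{i:\norm{\bx_i}\in I_2\cup I_4}}\le2p'n$. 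Splitting $\hat F(\bw)$ over these two groups, the $I_2\cup I_4$ summands contribute at most $2p'$ in total (each squared error is $\le1$), and on $I_1\cup I_3$ we may replace $y_i$ by $\crelu{\bv^\top\tilde{\bx}_i}$, so it remains to bound $\frac1n\sum_{i:\norm{\bx_i}\in I_1\cup I_3}\p{\crelu{\bw^\top\tilde{\bx}_i}-\crelu{\bv^\top\tilde{\bx}_i}}^2$.

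I would then split this further by whether $\bw^\top\tilde{\bx}_i\in(0,1)$. On the active part, $\crelu{\bw^\top\tilde{\bx}_i}=\bw^\top\tilde{\bx}_i$, and the elementary pointwise inequality $\p{\crelu{s}-\crelu{t}}^2\le\p{\crelu{s}-\crelu{t}}\p{s-t}$ for $s\in(0,1)$ — which after cancellation reads $\p{\crelu{s}-\crelu{t}}\p{\crelu{t}-t}\ge0$ and is checked by the cases $t\le0$, $t\in[0,1]$, $t\ge1$ — bounds each active summand by $\p{\crelu{\bw^\top\tilde{\bx}_i}-\crelu{\bv^\top\tilde{\bx}_i}}\p{\bw^\top\tilde{\bx}_i-\bv^\top\tilde{\bx}_i}$. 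Since all those terms are nonnegative, enlarging the index set to all $i$ shows the active part is at most the left-hand side of \eqref{eq:small_noise}, hence, by the assumed violation of \eqref{eq:small_noise}, strictly less than $\frac2n\sum_i\p{\crelu{\bv^\top\tilde{\bx}_i}-y_i}\one{\bw^\top\tilde{\bx}_i\in(0,1)}\p{\bw^\top\tilde{\bx}_i-\bv^\top\tilde{\bx}_i}$. In this last sum the summand vanishes off $I_2\cup I_4$, and there $\abs{\crelu{\bv^\top\tilde{\bx}_i}-y_i}\le1$ while $\bw^\top\tilde{\bx}_i\in(0,1)$ and $\abs{\bv^\top\tilde{\bx}_i}\le3$ give $\abs{\bw^\top\tilde{\bx}_i-\bv^\top\tilde{\bx}_i}\le4$; with $\abs{\set{i:\norm{\bx_i}\in I_2\cup I_4}}\le2p'n$ this is $\le16p'$.

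The saturated part — $i$ with $\norm{\bx_i}\in I_1\cup I_3$ and $\bw^\top\tilde{\bx}_i\notin(0,1)$ — is the crux. Here both $\crelu{\bw^\top\tilde{\bx}_i}$ and $\crelu{\bv^\top\tilde{\bx}_i}=y_i$ lie in $\set{0,1}$, so the squared error is $\one{\crelu{\bw^\top\tilde{\bx}_i}\ne\crelu{\bv^\top\tilde{\bx}_i}}$, and a one-line check shows that when they differ, $\bw^\top\tilde{\bx}_i$ and $\bv^\top\tilde{\bx}_i$ lie on opposite sides of $[0,1]$, forcing $\abs{(\bw-\bv)^\top\tilde{\bx}_i}\ge1$. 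Hence the saturated part is $\le\frac1n\abs{\set{i:\abs{(\bw-\bv)^\top\tilde{\bx}_i}\ge1}}\le\frac1n\sum_i\p{(\bw-\bv)^\top\tilde{\bx}_i}^2=(\bw-\bv)^\top\widehat{\Sigma}(\bw-\bv)$ with $\widehat{\Sigma}=\frac1n\sum_i\tilde{\bx}_i\tilde{\bx}_i^\top$. Since the hidden units are sign-symmetric under Assumption~\ref{asm:init}, $\E\pcc{\erf(\inner{\bu_j,\bx}+b_j)}=0$, so distinct random-feature coordinates are uncorrelated and $\E\pcc{\tilde{\bx}\tilde{\bx}^\top}$ is a multiple of the identity of operator norm $\le1$; together with the deterministic bound $\norm{\tilde{\bx}_i}^2\le r$, a matrix Bernstein estimate gives $\norm{\widehat{\Sigma}}_{\text{sp}}\le2$ on a further $1-\delta/2$ event, so with $\norm{\bw-\bv}^2\le\norm{\bv}^2\le r^{-1/2}$ and the margin that the construction of \thmref{thm:random_features} gives $\bv$ on $I_1\cup I_3$, the saturated part is $\le\frac2r$. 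Summing the three contributions yields $\hat F(\bw)\le18p'+\frac2r$, and carrying along the concentration slack incurred when the empirical sums above are compared, uniformly over all admissible $\bw,\bv$, to their population counterparts — a covering/Rademacher term of order $\sqrt{(8r+8)\log_2(n)/n}$ for the $O(r)$-parameter function classes and a Hoeffding term of order $\sqrt{\ln(4/\delta)/n}$ — upgrades this to the stated bound. The main obstacle is exactly the saturated part: $\abs{(\bw-\bv)^\top\tilde{\bx}_i}\ge1$ is worthless without controlling $\frac1n\sum_i\p{(\bw-\bv)^\top\tilde{\bx}_i}^2$, and because the heavy tails of $\Dcal_d$ allow $\norm{\tilde{\bx}_i}$ to be as large as $\sqrt r$, this control cannot come from crude norm bounds but only from the near-isometry of $\widehat{\Sigma}$ — which is what forces a large power of $r$ in the usable sample size.
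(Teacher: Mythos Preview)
Your decomposition into $I_1\cup I_3$ versus $I_2\cup I_4$, and then active versus saturated on the former, is sound up to the point where you handle the saturated part, but that step has a genuine gap. Your claim that ``distinct random-feature coordinates are uncorrelated and $\E[\tilde{\bx}\tilde{\bx}^\top]$ is a multiple of the identity'' is true only over the \emph{joint} randomness of $(U,B,\bx)$. But the summands $\tilde{\bx}_i\tilde{\bx}_i^\top$ in $\widehat{\Sigma}$ all share the same realization of $(U,B)$; they are i.i.d.\ only conditionally on the hidden layer, and the matrix Bernstein you invoke concentrates $\widehat{\Sigma}$ around the \emph{conditional} covariance $\Sigma(U,B)=\E_{\bx}[\tilde{\bx}\tilde{\bx}^\top\mid U,B]$. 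That matrix is neither diagonal nor obviously $O(1)$ in operator norm: its $(j,k)$ entry is $\E_{\bx}[\erf(\inner{u_j,\bx}+b_j)\erf(\inner{u_k,\bx}+b_k)]$, which depends on the particular draw of $(u_j,b_j,u_k,b_k)$ and need not vanish. Establishing $\norm{\Sigma(U,B)}_{\text{sp}}=O(1)$ with high probability over the initialization would require a separate random-features isometry argument you have not supplied. Even granting the operator-norm bound, $\norm{\bw-\bv}^2\le r^{-1/2}$ yields a saturated contribution of $2r^{-1/2}$, not $2/r$; and your final sentence about ``carrying along the concentration slack'' is unmotivated in your own framework, since your argument is purely empirical and never passes to a population quantity.

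The paper takes a different route for exactly this piece. It partitions according to the nine regions $A_{j_1,j_2}=I'_{j_1}\times I'_{j_2}$ with $I'_1=(-\infty,0]$, $I'_2=(0,1)$, $I'_3=[1,\infty)$, shows the diagonal and the $j_1=2$ row match the surrogate, handles $A_{1,2}\cup A_{3,2}$ via \thmref{thm:random_features} (as you do), and then for the critical corners $A_{1,3}\cup A_{3,1}$ --- your saturated part --- it first applies a Rademacher uniform-convergence lemma (over the class $\Fcal_1$ of indicators of intersections of halfspaces) to replace the empirical count by an expectation plus the $\sqrt{(8r+8)\log_2(n)/n}$ and $\sqrt{\ln(4/\delta)/n}$ terms, and then bounds that expectation by $1/r$ via an angle argument: since $\norm{\bw-\bv}\le\norm{\bv}\le r^{-1/4}$, the two-dimensional projection of $\tilde{\bx}$ onto $\mathrm{span}(\bw,\bv)$ must have norm at least of order $r^{1/4}/\theta_{\bw,\bv}$ for the pair $(\bw^\top\tilde{\bx},\bv^\top\tilde{\bx})$ to straddle $[0,1]$, and a Hoeffding bound on the two projected coordinates (which are sums of bounded, mean-zero variables) makes this exponentially unlikely. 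This is exactly where the $2/r$ and the two concentration terms in the statement come from, and it sidesteps any need to control the full $r\times r$ covariance.
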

		The proof of the above proposition which appears in \appref{app:proof_of_low_loss_attained} is technical, and is based on lower bounding the smaller term in the proposition assumption by the empirical loss. This is achieved by analyzing several different cases depending on the values attained by the dot products of $\bv$ and $\bw$ with the data instances. We then upper bound the larger term in the proposition assumption by using \thmref{thm:random_features}. Combining the two propositions stated in this subsection, the convergence of GD follows from the fact that we converge to $\bv$ at a linear rate by \propref{prop:gradient_dot_product_bound} as long as the technical inquality holds, and if it is violated then \propref{prop:low_loss_attained} guarantees that the empirical loss we achieved is sufficiently small.
		
		The final component in our proof of \thmref{thm:gd_convergence} is to show that the small empirical loss achieved using the above propositions translates to a generalization bound. To this end, we have the following theorem:
		\begin{theorem}\label{thm:generalization}
			For all $\delta\in(0,1)$, with probability at least $1-\delta$ over drawing a sample of size $n$ from any distribution $\tilde{\Dcal}_d$, we have for all $\bw$ satisfying $\norm{\bw}\le1$ that
			\[
				F(\bw) \le \hat{F}(\bw) + 4\sqrt{\frac rn} + 4\sqrt{\frac{2\ln(4/\delta)}{n}}.
			\]
		\end{theorem}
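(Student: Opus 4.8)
The plan is to prove \thmref{thm:generalization} by a standard uniform-convergence (Rademacher complexity) argument, the key point being that although the data distribution $\Dcal_d$ may be heavy-tailed, the \emph{random features} $\tilde{\bx}\in\reals^r$ produced by the hidden layer always lie in the cube $[-1,1]^r$, since every coordinate has the form $\erf(\cdot)\in(-1,1)$. Hence $\norm{\tilde{\bx}}\le\sqrt{r}$ almost surely, regardless of which distribution $\tilde{\Dcal}_d$ in the family we draw from (in particular, regardless of the realization of the hidden-layer weights under Assumption~\ref{asm:init}), which is exactly why the bound is uniform over $\tilde{\Dcal}_d$ and depends only on $r$ and $n$. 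Moreover, for any $\bw$ with $\norm{\bw}\le1$ and any label $y\in\set{0,1}$, the per-example loss $\p{\crelu{\bw^{\top}\tilde{\bx}}-y}^2$ lies in $[0,1]$ because $\crelu{\cdot}\in[0,1]$. These two facts are all the structure the argument uses.

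First I would set $g(S)\coloneqq\sup_{\norm{\bw}\le1}\p{F(\bw)-\hat{F}(\bw)}$ as a function of the sample $S=\set{\bx_i,y_i}_{i=1}^n$; the claim is precisely $g(S)\le 4\sqrt{r/n}+4\sqrt{2\ln(4/\delta)/n}$ with probability $\ge1-\delta$. Since each summand of $\hat{F}$ lies in $[0,1]$, changing a single sample point perturbs $g(S)$ by at most $1/n$, so McDiarmid's bounded-differences inequality gives $g(S)\le\E[g(S)]+\sqrt{\ln(1/\delta)/(2n)}$ with probability $\ge1-\delta$ (applying it also to the complementary quantity $\hat{F}-F$, at the cost of a union bound, accounts for the $\ln(4/\delta)$ and the inflated constant in the statement). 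Then the standard symmetrization inequality gives $\E[g(S)]\le2\,\Rcal_n(\Lcal)$, where $\Lcal\coloneqq\set{(\bx,y)\mapsto\p{\crelu{\bw^{\top}\tilde{\bx}}-y}^2:\norm{\bw}\le1}$ is the composed loss class and $\Rcal_n$ is its Rademacher complexity over $n$ samples.

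It remains to bound $\Rcal_n(\Lcal)$. For fixed $y\in\set{0,1}$, the map $z\mapsto\p{\crelu{z}-y}^2$ is $2$-Lipschitz on $\reals$: it is the composition of the $1$-Lipschitz map $\crelu{\cdot}$, whose range is $[0,1]$, with $t\mapsto(t-y)^2$, whose derivative on $[0,1]$ is bounded in absolute value by $2$. Talagrand's contraction lemma therefore yields $\Rcal_n(\Lcal)\le 2\,\Rcal_n\p{\set{\tilde{\bx}\mapsto\bw^{\top}\tilde{\bx}:\norm{\bw}\le1}}$. For the linear class, the empirical Rademacher complexity is $\frac1n\E_{\epsilon}\sup_{\norm{\bw}\le1}\sum_{i}\epsilon_i\bw^{\top}\tilde{\bx}_i=\frac1n\E_{\epsilon}\norm{\sum_i\epsilon_i\tilde{\bx}_i}\le\frac1n\sqrt{\sum_i\norm{\tilde{\bx}_i}^2}\le\sqrt{r/n}$, using Jensen's inequality, the vanishing of the Rademacher cross-terms, and $\norm{\tilde{\bx}_i}\le\sqrt{r}$. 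Chaining: $\E[g(S)]\le 2\cdot 2\cdot\sqrt{r/n}=4\sqrt{r/n}$, and combining with the McDiarmid step (absorbing constants to match the stated form) gives $F(\bw)\le\hat{F}(\bw)+4\sqrt{r/n}+4\sqrt{2\ln(4/\delta)/n}$ simultaneously for all $\norm{\bw}\le1$.

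The only real obstacle here is conceptual rather than technical: one has to observe that the generalization difficulty created by the unbounded-norm data distribution $\Dcal_d$ evaporates once we pass to the random-feature space, precisely because the $\erf$ activation confines every feature vector to $[-1,1]^r$; this is one of the technical reasons the architecture in \eqref{eq:architecture} uses error-function activations in the hidden layer. Beyond this, the proof is a routine application of Rademacher bounds for bounded, Lipschitz losses composed with a norm-bounded linear class, and the only care needed is in bookkeeping the numerical constants --- the factor $2$ from contraction, the factor $2$ from symmetrization, and the two-sided McDiarmid application --- so that they collapse into the $4$ and $4\sqrt{2}$ appearing in the statement.
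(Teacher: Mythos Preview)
Your proposal is correct and follows essentially the same approach as the paper: the paper's proof also invokes standard Rademacher complexity bounds, uses the contraction lemma with the $2$-Lipschitz loss $(y,y')\mapsto\p{\crelu{y}-\crelu{y'}}^2$, and bounds the linear class via $\max_i\norm{\tilde{\bx}_i}\le\sqrt{r}$. The only cosmetic difference is that the paper cites the packaged generalization bound from \citet[Thm.~26.5]{shalev2014understanding} rather than unpacking McDiarmid plus symmetrization explicitly, but the underlying argument is the same.
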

		The proof of the above theorem, which appears in \appref{app:gen_proof}, is simple and relies on standard Rademacher complexity arguments.
		
		
		
		\bibliographystyle{plainnat}
		\bibliography{citations}

		\appendix
		
		\section{Proofs from \secref{sec:prelims} -- Main Results}
			\subsection{Proof of \thmref{thm:main}}\label{app:main_thm_proof}
			
			\begin{proof}
				Starting with the lower bound in the first item in the theorem statement, we have that it follows from \thmref{thm:inapproximability}, since if an algorithm has returned a depth 2 neural network $N$ satisfying
				\[
					\E_{\bx\sim\Dcal_d}\pcc{\p{N(\bx)-f(\bx)}^2} < \frac{c_2}{d^4},
				\]
				then it must hold that the width of the network is at least $c_3\exp(c_4d)$. Thus for the algorithm to return such an output it must run in time at least $c_3\exp(c_4d)$.\footnote{Formally, we have that a Turing machine returning a network of width at least $c_3\exp(c_4d)$ will require exponential time to output the network's weights on its tape.}
				
				Turning to show the second item, we first argue that if learning is computationally tractable for a certain sequence of distributions, then it is also tractable for any other sequence of distributions which is indistinguishable by polynomial-time sampling (see \citep[Def.~3.2.4]{goldreich2003foundations}). This is true since otherwise we would be able to distinguish the two sequences using the learning algorithm which would succeed on the first sequence but would fail on the second.
				
				Next, we will show that a sequence which is indistinguishable by polynomial-time sampling from the sequence defined by the densities in \eqref{eq:mu_d_def}, satisfies Assumption~\ref{asm:d_dist_assumption}. 
				Consider a random variable $X_d$ which follows a distribution with density given in \eqref{eq:mu_d_def}. Since this is a mixture of radial distributions, it is clearly radial. We will now show that the density of $\norm{X_d}$ is given by
				\[
					\gamma_d(x)\coloneqq0.5\bar{\gamma}_d(x) + 	0.5p_d(x),
				\]
				where
				\begin{equation}\label{eq:density_components}
					\bar{\gamma}_d(x) \coloneqq \frac{d}{x}J_{d/2}^2(2\pi\alpha\sqrt{d}R_dx),\hskip 0.5cm p_d(x) \coloneqq \frac{2(d/2)^{d/2}}{\Gamma(d/2)}x^{d-1}\exp\p{-0.5dx^2}.
				\end{equation}
				For the multivariate normal component, we have that it follows a scaled chi-distribution with $d$ degrees of freedom, given by $\frac{1}{\sqrt{d}}\chi_d$. Since a $\chi_d$ distribution is a special case of the generalized gamma distribution $GG(\sqrt{2},d,2)$, and since scaling a generalized gamma random variable by a constant $\frac{1}{\sqrt{d}}$ results in a generalized gamma random variable with parameters $GG(\sqrt{2/d},d,2)$, we have that its cumulative distribution function is given by
				\[
					P\p{\frac d2,\frac{dx^2}{2}},
				\]
				where $P(\cdot,\cdot)$ denotes the regularized (lower) incomplete gamma function. Its density is therefore given by $p_d(x)$.
				
				Moving to the density of the norm of the $\bar{\mu}_d$ component, we let $A_d \coloneqq \frac{d\pi^{d/2}}{\Gamma(\frac{d}{2}+1)}$ denote the volume of the unit hypersphere in $\reals^d$ and we let $Y$ denote a random variable with density $\bar{\mu}_d$. Changing to polar coordinates, we have
				\begin{align*}
					\pr\pcc{\norm{Y}\le x}&=\int_{\set{\bx:\norm{\bx}\le x}}\bar{\mu}_d(\bx)d\bx =
					\int_{\set{\bx:\norm{\bx}\le x}}\p{\frac{R_d}{\norm{\bx}}}^d J_{d/2}^2(2\pi\alpha\sqrt{d} R_d\norm{\bx})d\bx \\
					&= \int_{0}^{x}\int_{\by\in\mathbb{S}_t^{d-1}} \p{\frac{R_d}{t}}^d J_{d/2}^2\p{2\pi\alpha\sqrt{d} R_dt}d\by dt \\
					&=\int_{0}^{x}A_dt^{d-1}\p{\frac{R_d}{t}}^d J_{d/2}^2\p{2\pi\alpha\sqrt{d} R_dt} dt
					=\int_{0}^{x}\frac{d}{t}J_{d/2}^2(2\pi\alpha\sqrt{d}R_dt)dt.
				\end{align*}
				Thus, the density of the norm of a random variable with density given in \eqref{eq:mu_bar_d_density} is given by $\bar{\gamma}_d(\cdot)$.
				
				We now prove each of the required properties.
				\begin{enumerate}
					\item
					It will suffice to lower bound the norm of the multivariate normal component. Using a standard bound on $P(\cdot,\cdot)$ (see Eq.~(8.10.13) in \citet{NIST:DLMF}) we have
					\[
						P\p{\frac d2,\frac{d}{2}} > \frac12,
					\]
					and thus
					\[
						\pr\pcc{X_d\le2} \ge \pr\pcc{X_d\le1} \ge \frac14.
					\]
					\item
					$p(x)>0$ for all $x>0$ follows immediately from the definition of $p_d(\cdot)$ in \eqref{eq:density_components}. Note that even though $p(0)=0$, we can mix the multivariate normal distribution with another radial $d$-dimensional distribution that has positive density at the origin (e.g.\ the density on the closed ball $B_r(\mathbf{0})=\{\bx:\norm{\bx}\le r\}$ whose pushforward via the function $\bx\mapsto x_1$ onto $[-r,r]$ is the uniform measure). By taking $r$ exponentially small in $d$ and using an appropriately sufficiently small weight, we can make the two distributions statistically indistinguishable by polynomial-time sampling, and therefore we may assume without loss of generality that $p(0)>0$.
					
					To upper bound the density $\gamma_d$, we begin with upper bounding $0.5p_d(x)$. The mode of a generalized gamma random variable $GG(\sqrt{2/d},d,2)$ where $d>1$ is attained at $x=\sqrt{1-1/d}$, and we thus have
					\[
						\sup_x0.5p_d(x) = 	\frac{(d/2)^{d/2}}{\Gamma(d/2)}\p{1-\frac1d}^{(d-1)/2}\exp\p{-\frac d2+0.5}.
					\]
					Plugging $d=2$ in the above yields an upper bound of $1$. Using a standard lower bound on the gamma function we have for all $d\ge3$
					\[
						\Gamma(d/2) > \sqrt{2\pi}\p{\frac d2-1}^{\frac d2-\frac12}\exp\p{-\frac d2+1},
					\]
					which when plugged in the above and simplified a bit yields 
					\[
						\sup_x0.5p_d(x) < 	\frac{1}{\sqrt{2\pi}}\exp(-0.5)\sqrt{\frac d2}\p{1+\frac{1}{d-2}}^{(d-1)/2} \le \sqrt{\frac{d}{2}},
					\]
					where the second inequality is due to $(1+1/(d-2))^{(d-1)/2}\le2$ for all $d\ge3$ and $\frac{1}{\sqrt{2\pi}}\exp(-0.5)\le0.5$. Upper bounding $0.5\bar{\gamma}_d(x)$ we have from Eq.~(10.14.4) in \citet{NIST:DLMF} that
					\[
						\gamma_d(x) = \frac dx J_{d/2}^2(\pi\alpha\sqrt{d}R_dx) \le \frac 	dx\cdot\frac{(\pi\alpha\sqrt{d}R_dx)^d}{\Gamma^2(d/2+1)}.
					\]
					Assuming $x\le(\pi\alpha\sqrt{d}R_d)^{-2}<1$ and recalling that $d\ge2$, the above is at most
					\[
						\frac 	dx\cdot\frac{x^{d/2}}{\Gamma^2(d/2+1)} \le d.
					\]
					Otherwise, if $x>(\pi\alpha\sqrt{d}R_d)^{-2}$, we use Eq.~(10.14.1) in \citet{NIST:DLMF} and \citet[Lemma~1]{eldan2016power} to deduce that
					\[
						\gamma_d(x) = \frac dx 	J_{d/2}^2(\pi\alpha\sqrt{d}R_dx) \le \frac dx < d(\pi\alpha\sqrt{d}R_d)^2 \le \frac{\pi^2\alpha^2}{4}d^3.
					\]
					We conclude that there exists a polynomial $q$ of degree $3$ which satisfies Item 2 in the assumption.
					\item
					By \citet[Lemma~14]{eldan2016power} we have for all $x\ge1$ that
					\begin{equation*}
						\int_{x}^{\infty}\gamma_d(t)dt = 	\int_{x}^{\infty}\frac dt J_{d/2}^2(\pi\alpha\sqrt{d}R_dt)dt \le \int_{x}^{\infty}\frac dt\cdot\frac{1.3}{t\alpha d}dt = \frac{1.3}{\alpha}\pcc{-t^{-1}}_{x}^{\infty} = \frac{1.3}{\alpha x}.
					\end{equation*}
					For the multivariate component, we compute for any $x\ge2$
					\[
						\pr\pcc{\frac{1}{\sqrt{d}}\chi_d\ge x} 	= \pr\pcc{\chi_d^2\ge x^2d} \le (x^2\exp(1-x^2))^{d/2}\le (x^2\exp(1-x^2)) \le \frac1{2x},
					\]
					where the first inequality is a standard tail bound on chi-squared random variables, the second inequality is due to $\sup_x x^2\exp(1-x^2)\le1$ and $d\ge2$, and the last inequality holds for all $x\ge2$. Combining the last two displayed inequalities we have for all $x\ge2$ that
					\[
						\pr\pcc{X_d\ge x} \le 	\frac{1}{2}\p{\frac{1.3}{\alpha x} + \frac1{2x}} \le \frac1x,
					\]
					where we used the fact that $\alpha\ge1$.
					
				\end{enumerate}
			\end{proof}

			\subsection{Proof of \thmref{thm:secondary}}\label{app:secondary_main_proof}
			Before we prove the theorem, we will first state and prove the following lemmas. Throughout this subsection of the appendix, we use $\Lcal(\btheta)$ to denote the objective function obtained by making predictions with an architecture as defined in \eqref{eq:depth2}, over $n$ data instances sampled from the distribution $\Dcal'_d$ defined in \eqref{eq:daniely_dist_def}, and using the squared loss. We also define the set
			\begin{align}
				&\Fcal\p{d,m}\coloneqq \left\{f:\reals_{+}^2\to\reals\mid \exists c_1,c_2>0\hskip 0.2cm\text{s.t.}\hskip 0.2cm\forall d,m>0,\right.\nonumber\\
				&\left.
				\hskip 2.5cm f(d,m)\ge c_1\exp\p{c_2\min\set{m\ln\p{\frac dm +2}, d\ln\p{\frac md +2}}}\right\}, \label{eq:F_d_m}
			\end{align}
			which represents the set of functions that grow exponentially fast in at least one of the parameters $d$ or $m$. The lemma below establishes some useful properties of the function class $\Fcal(d,m)$.
			\begin{lemma}\label{lem:F_d_m_properties}
				Suppose that $f\in\Fcal\p{d,m}$ where $\Fcal\p{d,m}$ is defined in \eqref{eq:F_d_m}. Then
				\begin{enumerate}
					\item
					For any $c,k>0$,
					\[
						c\sqrt[k]{f(d,m)}\in\Fcal\p{d,m}.
					\]
					\item
					For any $c>0$,
					\[
						f(c\cdot d,m)\in\Fcal(d,m)\hskip 0.3cm\text{and}\hskip 0.3cm f(d,c\cdot m)\in\Fcal(d,m).
					\]
					\item
					For any polynomial $p$ in $d,m$, we have
					\[
						\frac{1}{p(d,m)}f(d,m)\in\Fcal\p{d,m}.
					\]
				\end{enumerate}
			\end{lemma}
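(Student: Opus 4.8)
Throughout write $g(d,m)\coloneqq\min\set{m\ln\p{d/m+2},\,d\ln\p{m/d+2}}$, so that $f\in\Fcal(d,m)$ means exactly that $f(d,m)\ge c_1\exp\p{c_2\,g(d,m)}$ for all $d,m>0$, for some $c_1,c_2>0$. The plan is to reduce all three items to a handful of elementary structural facts about $g$ alone, established first: (i) $g>0$ on $\reals_+^2$, since each branch of the $\min$ is a positive number times a logarithm of something $\ge2$; (ii) $g$ is positively homogeneous of degree $1$, i.e.\ $g(td,tm)=t\,g(d,m)$ for $t>0$, which is immediate from the formula because $td/(tm)=d/m$; (iii) $g$ is nondecreasing in $d$ and in $m$ separately — it suffices that each branch is, which is a one-line derivative check: $\partial_d\big[m\ln(d/m+2)\big]=(d/m+2)^{-1}>0$, and $\partial_m\big[m\ln(d/m+2)\big]=\psi(d/m)$ where $\psi(s)\coloneqq\ln(s+2)-\tfrac{s}{s+2}$ satisfies $\psi(0)=\ln2$ and $\psi'(s)=\tfrac{s}{(s+2)^2}\ge0$, so $\psi\ge\ln2>0$; the second branch is handled by the symmetric computation. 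I also record that $g(d,m)=g(m,d)$ (the two branches of the $\min$ are interchanged), and, for item 3, the crude growth estimate $g(d,m)\ge\kappa\ln(2+d+m)$ valid on $\set{\min\set{d,m}\ge1}$ for a universal $\kappa>0$ — obtained by distinguishing $\max\set{d,m}\le(\min\set{d,m})^2$ from the complementary case and using $g(d,m)=\min\set{d,m}\cdot\ln\p{\max\set{d,m}/\min\set{d,m}+2}$ (a consequence of the derivative/comparison argument above).

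Item 1 is then immediate: since $f(d,m)\ge c_1 e^{c_2 g(d,m)}>0$ and $x\mapsto x^{1/k}$ is increasing on $(0,\infty)$, we get $c\sqrt[k]{f(d,m)}\ge c\,c_1^{1/k}\,e^{(c_2/k)g(d,m)}$, so $c\sqrt[k]{f(d,m)}\in\Fcal(d,m)$ with constants $c\,c_1^{1/k}>0$ and $c_2/k>0$. For item 2, fix $c>0$ and apply the defining inequality of $f\in\Fcal$ at the point $(cd,m)$: $f(cd,m)\ge c_1 e^{c_2 g(cd,m)}$, so it remains to lower bound $g(cd,m)$ by a fixed multiple of $g(d,m)$. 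If $c\ge1$, monotonicity in the first argument gives $g(cd,m)\ge g(d,m)$; if $0<c<1$, monotonicity in the second argument (as $cm<m$) followed by homogeneity gives $g(cd,m)\ge g(cd,cm)=c\,g(d,m)$. Either way $g(cd,m)\ge\min\set{1,c}\,g(d,m)$, so $f(c\cdot d,m)\ge c_1\exp\p{c_2\min\set{1,c}\,g(d,m)}\in\Fcal(d,m)$; the claim for $f(d,c\cdot m)$ follows identically, using monotonicity in the second argument (or the symmetry $g(d,m)=g(m,d)$).

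Item 3 carries the actual content. Bounding $p(d,m)\le A(2+d+m)^{D}$ with $D$ the degree of $p$ and $A$ a coefficient bound (and $p>0$ on the range of interest so that $1/p$ is meaningful), the growth estimate yields $p(d,m)\le A\exp\p{(D/\kappa)\,g(d,m)}$, hence
\[
\frac{f(d,m)}{p(d,m)}\ \ge\ \frac{c_1}{A}\,\exp\!\big((c_2-D/\kappa)\,g(d,m)\big),
\]
which exhibits $\tfrac1p f$ as a member of $\Fcal(d,m)$ whenever the exponent constant $c_2-D/\kappa$ is positive — as it is in the intended applications, where $c_2$ is a fixed universal constant and $p$ has small degree. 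More robustly, one keeps a fixed fraction of the exponent, replacing $f\ge c_1 e^{c_2 g}$ by $f\ge c_1 e^{(c_2/2)g}\cdot e^{(c_2/2)g}$, absorbs $p$ into one factor once $2+d+m$ exceeds a threshold (using $g=\Omega(\log(2+d+m))$), and covers the bounded remaining region by shrinking the leading constant $c_1'$, since there $g$ is bounded below by a positive constant and $p$ above. The one genuinely non-routine step in the whole lemma — and the main obstacle — is establishing the quantitative estimate $g(d,m)\ge\kappa\ln(2+d+m)$ on $\set{\min\set{d,m}\ge1}$, i.e.\ ruling out that the "small" branch of the $\min$ is much smaller than logarithmic; this in turn reduces, via the branch identification above, to the two clean inequalities $g(d,m)\gtrsim\min\set{d,m}$ and $g(d,m)\gtrsim\ln(\max\set{d,m})$, each of which follows from a short elementary argument.
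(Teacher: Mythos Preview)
Your treatment of Items 1 and 2 is correct; in fact your Item 2, via monotonicity and degree-$1$ homogeneity of $g$, is cleaner than the paper's branch-by-branch inequalities and reaches the same conclusion.

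Item 3, however, has a genuine gap. Your plan is to bound $p(d,m)\le A(2+d+m)^{D}$ and absorb it using $e^{(c_2/2)g}\ge (2+d+m)^{c_2\kappa/2}$, obtained from $g\ge\kappa\ln(2+d+m)$. This only succeeds when $c_2\kappa/2\ge D$; but $c_2$ is determined by $f$ and may be arbitrarily small, while $D=\deg p$ may be arbitrarily large, so no threshold in $d+m$ rescues the argument. Concretely, along the ray $m=1$, $d\to\infty$ one has $g(d,1)=\ln(d+2)$, hence $e^{(c_2/2)g(d,1)}=(d+2)^{c_2/2}$, which never dominates $d^{D}$ once $D>c_2/2$. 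Your ``more robust'' patch---split the exponent, absorb $p$ beyond a threshold, adjust $c_1'$ on the bounded remainder---hits the same obstruction on this ray, because past any threshold $e^{(c_2/2)g(d,1)}/p(d,1)\to 0$.

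The paper's argument for Item 3 is different in kind and avoids this. It first reduces to the single factor $p(d,m)=m$ (and by symmetry $d$), iterating for higher powers. Then, in the regime $m>d$ where $g(d,m)=d\ln(m/d+2)$, it writes $e^{c_2 g}=(m/d+2)^{c_2 d}$, splits the exponent as $(m/d+1)^{c_2 d/2}\cdot(m/d+1)^{c_2 d/2}$, and applies Bernoulli to one factor: $(1+m/d)^{c_2 d/2}\ge 1+\tfrac{c_2}{2}m$. The crucial point is that this factor is \emph{linear} in $m$---because the exponent $c_2 d/2$ scales with $d$---so it cancels the $1/m$ no matter how small $c_2$ is; the other factor retains an exponent $\tfrac{c_2}{2}g$. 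Each such step replaces $c_2$ by (roughly) $c_2/4$ but keeps it positive, which is precisely what membership in $\Fcal$ demands. Your logarithmic lower bound on $g$ discards exactly the feature---the exponent growing with the \emph{other} variable---that makes this cancellation possible.
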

			
			\begin{proof}~
				\begin{enumerate}
					\item
					This follows immediate from the definition of $\Fcal\p{d,m}$ by taking the constants $c\sqrt[k]{c_1}>0$ and $c\cdot c_2/k>0$ instead of $c_1$ and $c_2$.
					\item
					From symmetry, it suffices to show $f(c\cdot d,m)\in\Fcal(d,m)$.
					First suppose that $c\ge1$, then we have
					\[
						\frac {cd}{m} +2 \ge \frac {d}{m} +2
					\]
					and
					\[
						cd\ln\p{\frac {m}{cd} +2} \ge d\ln\p{\frac {m}{d} +2},
					\]
					since $x\mapsto x\ln\p{a/x+2}$ is increasing in $x$ for all $a,x>0$. We therefore have
					\[
						f(c\cdot d,m)\ge c_1\exp\p{c_2\min\set{m\ln\p{\frac {d}{m} +2}, d\ln\p{\frac {m}{d} +2}}}\in \Fcal\p{d,m}.
					\]
					Suppose that $c<1$. Then
					\[
						m\ln\p{\frac {cd}{m} +2} \ge cm\ln\p{\frac {cd}{cm} +2} = cm\ln\p{\frac {d}{m} +2},
					\]
					where again we used the fact that $x\mapsto x\ln\p{a/x+2}$ is increasing in $x$ for all $a,x>0$. Furthermore, we have
					\[
						cd\ln\p{\frac {m}{cd} +2} \ge cd\ln\p{\frac {cm}{cd} +2} = cd\ln\p{\frac {m}{d} +2}.
					\]
					Combining these two inequalities we obtain
					\[
						f(c\cdot d,m)\ge c_1\exp\p{c\cdot c_2\min\set{m\ln\p{\frac {d}{m} +2}, d\ln\p{\frac {m}{d} +2}}}\in \Fcal\p{d,m}.
					\]
					\item
					Since any polynomial in $d,n$ can be upper bound by $cd^km^k$ for sufficiently large $c>0$ and natural $k$, it suffices to show this item for $p(d,m)=d^km^k$. By symmetry, showing this for $d^k$ and $m^k$ is equivalent, and by applying this rule iteratively for $\frac1m$ (or equivalently for $\frac1d$), it suffices to prove this item for just the case $p(d,m)=\frac1m$. We have by assumption that there exist $c_1,c_2>0$ such that for all $d,m>0$ we have
					\[
						\frac1mf(d,m) \ge \frac1m c_1\exp\p{c_2\min\set{m\ln\p{\frac dm +2}, d\ln\p{\frac md +2}}}.
					\]
					We first handle the case where $m>d$, in which the above is at least
					\begin{align*}
						\frac1m c_1\exp\p{c_2d\ln\p{\frac md +2}} &= \frac1m c_1\p{\frac md +2}^{c_2d} \ge \frac1m c_1\p{\frac md +1}^{c_2d}\\
						&= \frac1m c_1\p{\frac md +1}^{c_2d/2}\p{\frac md +1}^{c_2d/2}\\
						&\ge \frac1mc_1\p{\frac{c_2m}{2}+1}\p{\frac md +1}^{c_2d/2} \ge \frac12c_1c_2\p{\frac md +1}^{c_2d/2}\\
						&=\frac12c_1c_2\exp\p{c_2\frac{d}{2}\ln\p{\frac md+1}}\\
						&\ge \frac12c_1c_2\exp\p{c_2\frac{d}{4}\ln\p{\frac md+2}}\\
						&= \frac12c_1c_2\exp\p{\frac{c_2}{4}\min\set{d\ln\p{\frac md+2},m\ln\p{\frac dm+2}}} \\ &\in \Fcal\p{d,m}.
					\end{align*}
					In the above, the second inequality follows from Bernoulli's inequality and the fourth inequality follows from the inequality $0.5\ln(2+x)/\ln(1+x)\le1$ which holds for all $x\ge1$ and since we assume $m>d$. Turning to the case $m\le d$, we have
					\[
						\frac1mf(d,m) \ge \frac1df(d,m),
					\]
					where by symmetry we have that the proof follows from the same reasoning as in the previous case, where the roles of $d$ and $m$ are switched.
				\end{enumerate}
			\end{proof}
			
			The following lemma bounds the loss of the objective function in terms of the Euclidean norm of the weights and the width of the network.
			\begin{lemma}\label{lem:bounded_loss}
				Suppose that $\sigma$ satisfies Assumption~\ref{asm:poly_bounded}. Then there exists a polynomial $p$ with coefficients and degree that depend solely on $\sigma$ such that
				\[
					\Lcal(\btheta)\le r^2p\p{\norm{\btheta}_2}.
				\]
			\end{lemma}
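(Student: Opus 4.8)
The plan is to exploit the fact that the distribution $\Dcal'_d$ defined in \eqref{eq:daniely_dist_def} is supported on the Euclidean ball of radius $2$: since $\bx=\bx_1+\bx_2$ with $\bx_1,\bx_2\in\mathbb{S}_1^{d-1}$, every data instance satisfies $\norm{\bx}_2\le\norm{\bx_1}_2+\norm{\bx_2}_2=2$. It therefore suffices to bound $\abs{N_{\btheta}(\bx)}$ uniformly over $\set{\bx:\norm{\bx}_2\le2}$ by $\sqrt{r}$ times a polynomial in $\norm{\btheta}_2$ whose degree and coefficients depend only on $\sigma$, and then to pass from this pointwise bound to the empirical squared loss using that the labels lie in $\set{0,1}$; the $1/n$ averaging will remove any dependence on $n$.

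For the pointwise bound, write $\btheta=(\bu_1,\ldots,\bu_r,\bw,\bb,b_0)$ and note that each $\bu_j$, each coordinate $b_j$ of $\bb$, and $b_0$ is a sub-vector (respectively coordinate) of $\btheta$, so $\norm{\bu_j}\le\norm{\btheta}_2$, $\abs{b_j}\le\norm{\btheta}_2$, and $\abs{b_0}\le\norm{\btheta}_2$. By Cauchy--Schwarz, for $\norm{\bx}_2\le2$ we get $\abs{\inner{\bu_j,\bx}+b_j}\le2\norm{\bu_j}+\abs{b_j}\le3\norm{\btheta}_2$, and Assumption~\ref{asm:poly_bounded} then gives $\abs{\sigma(\inner{\bu_j,\bx}+b_j)}\le C_{\sigma}\p{1+(3\norm{\btheta}_2)^{\alpha_{\sigma}}}$. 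A second application of Cauchy--Schwarz in the output layer yields $\sum_{j=1}^r\abs{w_j}\le\sqrt{r}\,\norm{\bw}_2\le\sqrt{r}\,\norm{\btheta}_2$, so, combining,
\[
  \abs{N_{\btheta}(\bx)}\le\sqrt{r}\,\norm{\btheta}_2\,C_{\sigma}\p{1+(3\norm{\btheta}_2)^{\alpha_{\sigma}}}+\norm{\btheta}_2\le\sqrt{r}\,\tilde{p}\p{\norm{\btheta}_2},
\]
where $\tilde{p}$ collects the powers of $\norm{\btheta}_2$ (using $\sqrt{r}\ge1$). To make $\tilde{p}$ a genuine polynomial when $\alpha_{\sigma}$ is not an integer, I would absorb the inequality $t^{\alpha_{\sigma}}\le1+t^{\lceil\alpha_{\sigma}\rceil}$ valid for $t\ge0$, which only alters the degree and coefficients by amounts depending on $\sigma$ alone.

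Finally, since $y_i\in\set{0,1}$ we have $\p{N_{\btheta}(\bx_i)-y_i}^2\le2\abs{N_{\btheta}(\bx_i)}^2+2$, and averaging over the $n$ samples gives $\Lcal(\btheta)\le2r\,\tilde{p}\p{\norm{\btheta}_2}^2+2\le r^2\p{2\,\tilde{p}\p{\norm{\btheta}_2}^2+2}$, using $r\ge1$; taking $p(t)\coloneqq2\,\tilde{p}(t)^2+2$, whose degree and coefficients depend only on $C_{\sigma}$ and $\alpha_{\sigma}$, proves the claim. There is no substantive obstacle: the argument is a short chain of triangle and Cauchy--Schwarz inequalities, and the only points requiring any attention are checking that the constructed polynomial genuinely depends on $\sigma$ alone (not on $d$, $r$, or $n$) and the routine fix for non-integer $\alpha_{\sigma}$.
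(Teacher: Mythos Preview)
Your proposal is correct and follows essentially the same approach as the paper: bound $\abs{\inner{\bu_j,\bx}+b_j}\le 3\norm{\btheta}_2$ using that $\Dcal'_d$ is supported in the ball of radius $2$, apply Assumption~\ref{asm:poly_bounded}, then use $(a+b)^2\le 2a^2+2b^2$ to separate the network output from $y_i$. The only minor differences are that the paper uses the cruder bound $\sum_j\abs{w_j}\le r\norm{\btheta}_2$ rather than your $\sqrt{r}\,\norm{\btheta}_2$ (which is irrelevant since both land inside $r^2p(\norm{\btheta}_2)$), and the paper does not explicitly address the non-integer $\alpha_\sigma$ issue you handle via $t^{\alpha_\sigma}\le 1+t^{\lceil\alpha_\sigma\rceil}$---your treatment is in fact slightly more careful on this point.
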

			
			\begin{proof}
				We begin with bounding the output of the hidden layer. For any data point $\bx_i$ we have from Cauchy-Schwartz that $\inner{\bu_j,\bx_i}\le\norm{\bu_j}_2\norm{\bx_i}_2\le2\norm{\bu_j}$, where we used the fact that $\Dcal'_d$ is supported on a ball of radius $2$. By Assumption~\ref{asm:poly_bounded} we have constants $C_{\sigma},\alpha_{\sigma}>0$ such that
				\[
					\sigma\p{\inner{\bu_j,\bx_i}+b_j} \le C_{\sigma}\p{1+\abs{2\norm{\bu_j}_2+\abs{b_j}}^{\alpha_{\sigma}}} \le C_{\sigma}\p{1+\abs{3\norm{\btheta}_2}^{\alpha_{\sigma}}}.
				\]
				Using the above, we get
				\begin{align*}
					\Lcal(\btheta)&=
					\frac1n\sum_{i=1}^{n}\p{\sum_{j=1}^{r}w_j\sigma\p{\inner{\bu_j,\bx_i}+b_j}+b_0-y_i}^2\\ &\le
					\frac2n\sum_{i=1}^{n}\p{\sum_{j=1}^{r}w_j\sigma\p{\inner{\bu_j,\bx_i}+b_j}+b_0}^2+2\\
					&\le \frac2n\sum_{i=1}^{n}\p{rC_{\sigma}\norm{\btheta}_2 \p{1+\abs{3\norm{\btheta}_2}^{\alpha_{\sigma}}}+\norm{\btheta}_2}^2 + 2\\
					&\le 2r^2\p{\p{C_{\sigma}\norm{\btheta}_2 \p{1+\abs{3\norm{\btheta}_2}^{\alpha_{\sigma}}}+\norm{\btheta}_2}^2 + 1},
				\end{align*}
				where the first inequality uses the inequality $(a+b)^2\le 2(a^2+b^2)$ which holds for all $a,b$. The lemma then follows from the above bound.
			\end{proof}
			
			\begin{proof}[Proof of \thmref{thm:secondary}]
				We start the proof with showing the lower bound. Suppose we train a neural network with architecture defined in \eqref{eq:depth2} using GD for $T\ge4$ iterations. Then we have that the running time of the algorithm is lower bounded by $\max\{r,T\}$. Further assume that the algorithm returned a set of weights $\btheta_T$ such that
				\[
					\Lcal(\btheta_T) \le \epsilon \le \frac{1}{400}.
				\]
				The bulk of the remainder of the lower bound proof focuses on lower bounding $T$ in terms of the norm of $\btheta_T$. Since our bound on $\norm{\btheta_T}_2$ will be shown to be increasing with $T$, it suffices to derive the bound for just the iterate at the final iteration $T$ in order to obtain a lower bound on the loss of all the other iterates $t<T$ as well. We compute
				\begin{equation}\label{eq:grad_sum_bound}
					\norm{\btheta_T-\btheta_0}_2 \le \sum_{t=1}^{T}\norm{\btheta_{t}-\btheta_{t-1}}_2 = \eta\sum_{t=1}^{T}\norm{\nabla\Lcal(\btheta_t)}_2\le\eta \sqrt{T\sum_{t=1}^{T}\norm{\nabla\Lcal(\btheta_t)}_2^2},
				\end{equation}
				where the first inequality is due to the triangle inequality, the equality is due to the update rule $\btheta_{t+1}=\btheta_t-\eta\nabla\Lcal(\btheta_t)$, and the second inequality is due to Cauchy-Schwartz. We now argue that $\Lcal(\cdot)$ is $L$-smooth for some positive $L>0$. This follows from our assumption $\sigma\in C^2(\reals)$ which implies that $\nabla^2\Lcal(\cdot)$ is a continuous mapping, and thus $\norm{\nabla^2\Lcal(\btheta)}_{\text{sp}}$ attains its maximum on any compact domain. In particular, we may define
				\[
					L\coloneqq\max_{\btheta:\norm{\btheta}_2\le cTd^\ell r^\ell}\norm{\nabla^2\Lcal(\btheta)}_{\text{sp}},
				\]
				for sufficiently large $c,\ell>0$ (this norm bound will be justified later in the proof).
				Since $\Lcal(\cdot)$ is $L$-smooth, we have that
				\[
					\Lcal(\btheta_{t+1})-\Lcal(\btheta_t) \le \nabla\Lcal(\btheta_{t})^{\top}(\btheta_{t+1}-\btheta_{t}) + \frac{L}{2}\norm{\btheta_{t+1}-\btheta_{t}}_2^2.
				\]
				Plugging the update rule in the above, simplifying and rearranging yields
				\[
					\p{\eta-\eta^2\frac L2}\norm{\nabla\Lcal(\btheta_{t})}_2^2 \le \Lcal(\btheta_{t})-\Lcal(\btheta_{t+1}).
				\]
				Summing over $t\in[T]$ we have that the right-hand side telescopes, and by the fact that $\Lcal(\btheta_T)\ge0$ we get
				\[
					\sum_{t=1}^{T}\norm{\nabla\Lcal(\btheta_{t})}_2^2 \le \p{\eta-\eta^2\frac L2}^{-1}\Lcal(\btheta_{0}).
				\]
				Plugging the above back in \eqref{eq:grad_sum_bound} we have
				\[
					\norm{\btheta_T-\btheta_0}_2 \le \sqrt{T\Lcal(\btheta_{0})}\cdot\eta\p{\eta-\eta^2\frac L2}^{-0.5} = \sqrt{T\Lcal(\btheta_{0})}\cdot\sqrt{\eta}\p{1-\eta\frac L2}^{-0.5}.
				\]
				Since the function $x\mapsto\sqrt{x}\p{1-x\frac a2}^{-0.5}$ is increasing in $x$ for all $x\in(0,2/a)$ where $a>0$, we have by our assumption $\eta\le\frac2L-\frac1T$ that
				\[
					\sqrt{\eta}\p{1-\eta\frac L2}^{-0.5} \le \sqrt{\frac2L-\frac1T}\p{1-\p{\frac2L-\frac1T}\frac L2}^{-0.5} \le \sqrt{\frac2L}\p{\frac{L}{2T}}^{-0.5} = \frac2L\sqrt{T},
				\]
				which when plugged in the previous inequality yields
				\begin{equation}\label{eq:linear_in_T_bound}
					\norm{\btheta_T-\btheta_0}_2 \le \frac{2T}{L}\sqrt{\Lcal(\btheta_{0})}.
				\end{equation}
				To lower bound $L$, observe that $\Lcal\in C^2(\reals)$ since $\sigma\in C^2(\reals)$ by assumption. Moreover, it is readily seen that in this case we have that $\frac{\partial^2}{\partial^2 b_0}\Lcal(\btheta)=2$. Consider the unit vector $\be_1=(1,0,\ldots,0)\in\reals^{k}$, where $k\coloneqq r(d+2)+1$ is the dimension of our optimization space. If we assume that $\frac{\partial^2}{\partial^2 b_0}\Lcal(\btheta)$ is the first diagonal entry in the Hessian $H(\btheta)\coloneqq\nabla^2\Lcal(\btheta)$, then we have that it equals $2$ for all $\btheta\in\reals^{k}$. Along with the fact that $H(\btheta)$ is symmetric, this implies that for all $\btheta$
				\[
					2=\be_1^{\top}H(\btheta)\be_1 = \bv^{\top}D(\btheta)\bv
				\]
				where $H(\btheta)=O(\btheta)^{\top}D(\btheta)O(\btheta)$ with $D(\btheta)$ diagonal where its entries are the eigenvalues of $H(\btheta)$, $O(\btheta)$ is orthogonal and $\bv=O(\btheta)\be_1$. Since $O(\btheta)$ is an isometry we have $\norm{\bv}_2=1$, implying that
				\[
					2=\sum_{i=1}^{k}d_iv_i^2.
				\]
				Now, if $d_i<\frac{2}{k}$ for all $i$ we have
				\[
					\sum_{i=1}^{k}d_iv_i^2<\frac2k\sum_{i=1}^{k}v_i^2=2,
				\]
				which leads to a contradiction. We therefore have that the largest eigenvalue of $H(\btheta)$ is at least $2/k$, implying $L\ge2/k$ since $\Lcal\in C^2(\reals)$. Plugging this back in \eqref{eq:linear_in_T_bound} we have arrived at
				\[
					\norm{\btheta_T-\btheta_0}_2 \le Tk\sqrt{\Lcal(\btheta_{0})} \le 3drT\sqrt{\Lcal(\btheta_{0})},
				\]
				where we assume $d\ge2$. By the triangle inequality this implies
				\[
					\norm{\btheta_T}_2 \le \norm{\btheta_T-\btheta_0}_2 + \norm{\btheta_0}_2 \le 3drT\sqrt{\Lcal(\btheta_{0})} + \norm{\btheta_0}_2 \le 3drT\p{\sqrt{\Lcal(\btheta_{0})} + \norm{\btheta_0}_2}.
				\]
				From \lemref{lem:bounded_loss} and our initialization assumption, we have that the above can be upper bounded by $cTd^\ell r^\ell$ with probability at least $1-\exp(-d)$ for sufficiently large $c,\ell>0$. Note that this entails a lower bound on $T$ as follows
				\[
					T\ge\frac{\norm{\btheta_T}_2}{cd^{\ell}r^{\ell}},
				\]
				which implies a lower bound on the running time given by
				\[
					\max\{r,T\}\ge\max\set{r, \frac{\norm{\btheta_T}_2}{cd^{\ell}r^{\ell}}} \ge \frac{1}{d^\ell}\min\set{1,\frac1c}\cdot\max\set{r, \frac{\norm{\btheta_T}_2}{r^{\ell}}}.
				\]
				By \lemref{lem:F_d_m_properties}, it will suffice to show that
				\[
					\max\set{r, \frac{\norm{\btheta_T}_2}{r^{\ell}}}\in\Fcal\p{d,\frac{1}{\sqrt{\epsilon}}}.
				\]
				To this end, we first invoke \thmref{thm:daniely_inapproximability} which by our assumption guarantees that
				\[
					\max\set{r, \norm{\btheta_T}_2}\in\Fcal\p{d,m},
				\]
				where $m$ is the largest integer such that $\epsilon\le\frac{1}{196(m+1)^2}$, implying that $m\ge2$ by our assumption $\epsilon\le\frac{1}{400}$. We now split our analysis into several cases. 
				\begin{itemize}
					\item
					First assume that $r>\norm{\btheta_T}_2$. Then in this case we immediately have that
					\[
						\max\set{r,\frac{\norm{\btheta_T}_2}{r^{\ell}}}=r=\max\set{r, \norm{\btheta_T}_2}\in\Fcal\p{d,m}.
					\]
					\item
					Assuming $r\le\norm{\btheta_T}_2$ and $\sqrt{\norm{\btheta_T}_2}r^{-\ell}\ge1$, we have
					\[
						\max\set{r,\frac{\norm{\btheta_T}_2}{r^{\ell}}} \ge \max\set{r, \sqrt{\norm{\btheta_T}_2}} \ge \sqrt{\norm{\btheta_T}_2} \in \Fcal\p{d,m},
					\]
					where the inclusion follows from \lemref{lem:F_d_m_properties}.
					\item
					Assuming $r\le\norm{\btheta_T}_2$ and $\sqrt{\norm{\btheta_T}_2}r^{-\ell}<1$, we have
					\[
						\max\set{r,\frac{\norm{\btheta_T}_2}{r^{\ell}}} \ge r > \sqrt[2\ell]{\norm{\btheta_T}_2}\in \Fcal\p{d,m},
					\]
					where once again the inclusion follows from \lemref{lem:F_d_m_properties}.
				\end{itemize}
				We conclude the proof by recalling that $m$ is the largest integer such that $\frac{1}{196(m+1)^2}\ge\epsilon$ and observing that our bound on $\epsilon$ of $\frac{1}{400}$ implies that
				\[
					3\le m+1\le\frac{1}{14\sqrt{\epsilon}},
				\]
				from which we have $1\le\frac{1}{42\sqrt{\epsilon}}$ and thus
				\[
					2\le\frac{1}{21\sqrt{\epsilon}} = \frac{1}{14\sqrt{\epsilon}} - \frac{1}{42\sqrt{\epsilon}}\le m+1\le\frac{1}{14\sqrt{\epsilon}}.
				\]
				That is, for any $\epsilon\le\frac{1}{400}$ we have an integer $m$ such that $m=\Theta(1/\sqrt{\epsilon})$ and $\max\{r,T\}\in\Fcal\p{d,m}$, and therefore from \lemref{lem:F_d_m_properties} we get
				\[
					\max\{r,T\} \in \Fcal\p{d,\frac{1}{\sqrt{\epsilon}}}.
				\]
				
				Turning to the positive optimization result, we first argue that $\Dcal'_d$ has a radial density which follows from the fact that the sum of two radial distributions is also radial, since each individual distribution is invariant to radial transformations and therefore so is their sum. Next, we prove each item in Assumption~\ref{asm:d_dist_assumption}, where we denote the random variable representing the norm of the distribution $\Dcal'_d$ by $X'_d$, and its density by $\gamma'_d$.
				\begin{enumerate}
					\item
					Since $\Dcal'_d$ is supported on $\set{\bx:\norm{\bx}\le2}$, we have that
					\[
						\int_{0}^{2} \gamma'_d(x)dx = 1.
					\]
					\item
					By \citet[Lemma~A.2]{vardi2020neural} and a straightforward change of variables, we have for all $x\in[0,2]$ that
					\[
						\gamma'_d(x) = \frac{\Gamma\p{\frac{d}{2}}}{\sqrt{\pi}\Gamma\p{\frac{d-1}{2}}}x^{d-2}(1-0.25x^2)^{\frac{d-3}{2}}
					\]
					(where for $x>2$ the density is $0$). Upper bounding the expression above, we compute
					\[
						x^{d-2}(1-0.25x^2)^{\frac{d-3}{2}} = (x^2)^{\frac{d-2}{2}}(1-0.25x^2)^{\frac{d-3}{2}} \le (x^2-0.25x^4)^{\frac{d-3}{2}} \le 1,
					\]
					where the last inequality holds by the assumption $d\ge3$ and the fact that $\max_{x\in[0,2]} x^2-0.25x^4=1$. Using Eq.~5.6.4 in \citet{NIST:DLMF}, we have $\Gamma\p{\frac{d}{2}}/\Gamma\p{\frac{d-1}{2}}<\sqrt{\frac d2}$, allowing us to deduce an upper bound of
					\[
						\gamma'_d(x) \le \sqrt{d}
					\]
					for all $x\in[0,2]$ and $d\ge3$. We remark that $\gamma'_d(x)>0$ holds for all $x\in[0,2]$ except for the boundary, but this poses no problem since we may without loss of generality mix the distribution with another with positive density at the boundary while keeping the analysis unchanged (see the explanation in the beginning of \appref{app:main_thm_proof} for justification).
					\item
					Once again, using the fact that $\Dcal'_d$ is supported on $\set{\bx:\norm{\bx}\le2}$, we have for all $x\ge2$ that
					\[
						\pr\pcc{X'_d\ge x}= 0\le \frac{1}{x}.
					\]
				\end{enumerate}
 			\end{proof}

			\section{Proofs from \secref{sec:lower_bounds} -- Approximation Lower Bounds}
			\subsection{Proof of \thmref{thm:inapproximability}}\label{app:inapproximability_proof}
			\begin{proof}
				Our proof builds on the following main result in \citet{eldan2016power}, which is restated here for completeness in a slightly different manner to fit our setting.
				\begin{theorem}[\citet{eldan2016power}]
					The following holds for some universal constants $c_1,c_2,c_3,c_4>0$, integer $\gamma\ge1$, constant $\alpha\ge1$ and any network employing an activation function satisfying Assumptions 1 and 2 in \citet{eldan2016power}: For all $d\ge c_1$, there exist constants $\lambda_d^{(1)},\ldots,\lambda_d^{(k)}\in[1,2]$ and $\epsilon_d^{(1)},\ldots,\epsilon_d^{(k)}\in\{-1,1\}$ where $k=\gamma d^2$, such that for any neural network $N$ of depth 2 and width at most $c_3\exp(c_4d)$ we have
					\[
						\E_{\bx\sim\Dcal_d}\pcc{\p{ \sum_{i=1}^{k}\epsilon_d^{(i)}\one{\norm{\bx}\le\alpha\sqrt{d}\lambda_d^{(i)}} -N(\bx)}^2}\ge c_2,
					\]
					where the density of $\Dcal_d$ is given by
					\[
						\hat{\mu}(\bx)\coloneqq\p{\frac{R_d}{\norm{\bx}}}^d J_{d/2}^2(2\pi R_d\norm{\bx}).
					\]
				\end{theorem}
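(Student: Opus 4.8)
The plan is to derive this statement as a cosmetic reformulation of the main depth‑separation theorem of \citet{eldan2016power} (Theorem~1 there), rather than reproving it: their result supplies, for every sufficiently large $d$, a bounded radial target $g_d:\reals^d\to\reals$ and a probability measure $\mu$ on $\reals^d$ such that every depth~2 network $N$ whose activation obeys their polynomial‑growth assumption — identical to our Assumption~\ref{asm:poly_bounded} — and whose width is at most $c_3\exp(c_4 d)$ satisfies $\E_{\bx\sim\mu}\pcc{\p{g_d(\bx)-N(\bx)}^2}\ge c_2$, for universal $c_2,c_3,c_4>0$ and absolute constants $\alpha\ge1$, $\gamma\in\mathbb{N}$ read off from their construction; below the threshold $c_1$ the claim is vacuous. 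Since only this lower‑bound half is used, the second condition that \citet{eldan2016power} place on the activation — which is needed solely for their positive depth~3 construction — plays no role here, and I would either carry it along verbatim for fidelity to their phrasing or note its redundancy. What then remains is to match two representations: the form of $g_d$ and the form of $\mu$.

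For the target: in their construction $g_d$ is a signed superposition of $\Theta(d^2)$ radial bumps supported on pairwise‑disjoint, nested spherical shells whose radii all lie in an interval of the shape $[\alpha\sqrt d,\,2\alpha\sqrt d]$, i.e.\ of the form $\alpha\sqrt d\,\lambda$ with $\lambda\in[1,2]$. Using the telescoping identity $\one{\norm{\bx}\in(a,b]}=\one{\norm{\bx}\le b}-\one{\norm{\bx}\le a}$, each shell indicator becomes a signed difference of two ball indicators; collecting terms (absorbing cancellation at shared shell boundaries) rewrites $g_d$ as $\sum_{i=1}^{k}\epsilon_d^{(i)}\one{\norm{\bx}\le\alpha\sqrt d\,\lambda_d^{(i)}}$ with $k\le\gamma d^2$ for a suitable integer $\gamma$, $\epsilon_d^{(i)}\in\{-1,1\}$, $\lambda_d^{(i)}\in[1,2]$. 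If their construction is normalized to another scale, I would first apply the linear change of variables $\bx\mapsto c\bx$: since both $g_d$ and $\mu$ are radial, this sends a depth~2 width‑$w$ approximator to another of the same width and merely rescales the density, so the inapproximability bound transfers intact. I would also verify that their $L^2(\mu)$ lower‑bound argument — an energy/Fourier estimate on the one‑dimensional radial profile — goes through unchanged when the bumps are taken to be sharp indicators rather than Lipschitz ones; this direction only becomes easier for a more oscillatory radial profile.

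For the measure: \citet{eldan2016power} define $\mu$ via the squared modulus of the Fourier transform of $\mathbbm{1}_{B}$, where $B$ is the Euclidean ball of unit Lebesgue volume. In $\reals^d$, $\widehat{\mathbbm{1}_{B_R}}(\bxi)=\p{R/\norm{\bxi}}^{d/2}J_{d/2}(2\pi R\norm{\bxi})$, so its squared modulus at $\bx$ is $\p{R_d/\norm{\bx}}^{d}J_{d/2}^2(2\pi R_d\norm{\bx})$ with $R_d=\pi^{-1/2}\p{\Gamma(d/2+1)}^{1/d}$ the unit‑volume radius, and Plancherel gives $\int_{\reals^d}\p{R_d/\norm{\bx}}^{d}J_{d/2}^2(2\pi R_d\norm{\bx})\,d\bx=\norm{\mathbbm{1}_{B}}_2^2=\vol(B)=1$, confirming it is a probability density and identifying it with $\hat\mu$ exactly. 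Tracking constants, $c_2$ is their approximation‑gap constant, $c_3\exp(c_4 d)$ their width threshold, $\alpha$ the scale constant of their construction, and $\gamma$ its shell‑count constant.

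The main obstacle is the target reconciliation: one cannot treat \citet{eldan2016power} as a pure black box here, because the precise shape demanded — sharp ball indicators, exactly $\gamma d^2$ of them, radii confined to $\alpha\sqrt d\cdot[1,2]$ — is finer than their headline statement, so one must open their construction (or invoke the explicit ball‑indicator repackaging already carried out in \citet{safran2017depth}) to read off the shell radii and re‑verify that the depth~2 estimate survives the smoothing‑to‑indicator passage and the relabeling. Everything after that — the Plancherel normalization and the radial change of variables — is routine bookkeeping.
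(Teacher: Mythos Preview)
Your proposal is correct and aligns with the paper's treatment: the paper does not prove this theorem at all but merely restates it, writing ``the following main result in \citet{eldan2016power}, which is restated here for completeness in a slightly different manner to fit our setting,'' and then invokes it as a black box. Your outline of how the reformulation works (telescoping shell indicators into ball indicators, identifying $\hat\mu$ as the squared Fourier transform of the unit-volume ball via Plancherel, reading off $\alpha,\gamma$ from their construction) is exactly the bookkeeping implicit in the phrase ``slightly different manner,'' and is more explicit than anything the paper itself provides.
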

				Writing the implication of the above theorem in integral form, we have that
				\[
					\int_{\reals^d}\p{ \sum_{i=1}^{k}\epsilon_d^{(i)}\one{\norm{\bx}\le\alpha\sqrt{d}\lambda_d^{(i)}} -N(\bx)}^2\hat{\mu}(\bx)d\bx \ge c_2.
				\]
				We now perform a change of variables $\bx=\alpha\sqrt{d}\bz$, $d\bx=\frac{1}{(\alpha\sqrt{d})^d}d\bz$ to obtain
				\[
					\int_{\reals^d}\p{ \sum_{i=1}^{k}\epsilon_d^{(i)}\one{\norm{\bz}\le\lambda_d^{(i)}} -N(\alpha\sqrt{d}\bz)}^2\frac{1}{(\alpha\sqrt{d})^d}\hat{\mu}(\alpha\sqrt{d}\bz)d\bz \ge c_2.
				\]
				Since $N(\alpha\sqrt{d}\bz)$ computes the same function as the neural network $N'(\bz)$ where the weights of the neurons in the hidden layer are multiplied by $\alpha\sqrt{d}$, and by defining the measure $\bar{\mu}(\bx)\coloneqq\frac{1}{(\alpha\sqrt{d})^d}\hat{\mu}(\alpha\sqrt{d}\bx)$ (note that this is indeed a measure as readily seen by the change of variables $\by=\alpha\sqrt{d}\bx$, $d\by=\frac{1}{(\alpha\sqrt{d})^d}d\bx$, which yields $\int\bar{\mu}(\bx)d\bx = \int\frac{1}{(\alpha\sqrt{d})^d}\hat{\mu}(\alpha\sqrt{d}\bx)d\bx = \int\hat{\mu}(\by)d\by=1$), we can rewrite the above as
				\begin{equation}\label{eq:scaled_thin_shells}
					\int_{\reals^d}\p{ \sum_{i=1}^{k}\epsilon_d^{(i)}\one{\norm{\bx}\le\lambda_d^{(i)}} -N(\bx)}^2\bar{\mu}(\bx)d\bx \ge c_2,
				\end{equation}
				for any depth 2 neural network $N$ of width at most $c_3\exp(c_4d)$. Fix any $d\ge c_1$. We assume by contradiction that for all $i\in[k]$, we have a depth 2 neural network $N_i$ of width less than $c_3\exp(c_4d)/(\gamma d^2)$ which satisfies
				\[
					\int_{\reals^d}\p{\one{\norm{\bx}\le\lambda_d^{(i)}} - N_i(\bx)}^2\bar{\mu}(\bx)d\bx < \frac{c_2}{\gamma^2d^4}.
				\]
				Letting $\norm{f}_{L(\bar{\mu})} \coloneqq \sqrt{\int_{\reals^d} f^2(\bx)\bar{\mu}(\bx)d\bx}$, we now compute
				\begin{align*}
					&\norm{ \sum_{i=1}^{k}\epsilon_d^{(i)}\one{\norm{\bx}\le\lambda_d^{(i)}} - \sum_{i=1}^{k}\epsilon_d^{(i)}N_i(\bx)}_{L(\bar{\mu})} \le \sum_{i=1}^{k}\abs{\epsilon_d^{(i)}}\cdot\norm{\one{\norm{\bx}\le\lambda_d^{(i)}} -N_i(\bx)}_{L(\bar{\mu})}\\
					&\hskip 1cm
					<\sum_{i=1}^{k}\abs{\epsilon_d^{(i)}}\cdot\sqrt{\frac{c_2}{\gamma^2d^4}} \le \sqrt{c_2},
				\end{align*}
				where the first inequality is the triangle inequality and the second is by our assumption. But since $\sum_{i=1}^{k}\epsilon_d^{(i)}N_i(\bx)$ can be computed by a depth 2 neural network of width less than $c_3\exp(c_4 d)$ by concatenating the $k$ networks $N_i$, $i=1,\ldots,k$, the square of the above contradicts \eqref{eq:scaled_thin_shells}. We thus have that for all $d\ge c_1$, there exists $i\in[k]$ such that for any depth 2 neural network $N$ of width less than $c_3\exp(c_4d)/(\gamma d^2)$, we have
				\[
					\int_{\reals^d}\p{\one{\norm{\bx}\le\lambda_d^{(i)}} - N(\bx)}^2\bar{\mu}(\bx)d\bx \ge \frac{c_2}{\gamma^2d^4},
				\]
				which for the measure $\mu$ defined in \eqref{eq:mu_d_def} implies
				\[
					\int_{\reals^d}\p{\one{\norm{\bx}\le\lambda_d^{(i)}} - N(\bx)}^2\mu(\bx)d\bx \ge \frac{c_2}{2\gamma^2d^4}.
				\]
				Scaling $c_2$ by $0.5\gamma^{-2}$ and since we can take $c'_3,c'_4>0$ small enough so that
				\[
					\frac{c_3\exp(c_4d)}{\gamma d^2} \ge c'_3\exp(c'_4d)
				\]
				for all $d\ge c_1$, the statement of the theorem follows.
			\end{proof}
		
			\subsection{Proof of \thmref{thm:daniely_inapproximability}}\label{app:daniely_inapprox_proof}
			To prove the theorem, we will first need the following lemma and proposition. The lemma below shows that functions that oscillate $2m+2$ many times cannot be approximated well using polynomials of degree at most $m$ with respect to a certain weight function.
			\begin{lemma}\label{lem:best_degree_m}
				For any polynomial $p$ of degree at most $m$, we have
				\[
					\int_{-1}^{1} \p{p(x)-\bar{f}_m(x)}^2\frac{\Gamma\p{\frac{d}{2}}}{\sqrt{\pi}\Gamma\p{\frac{d-1}{2}}}(1-x^2)^{\frac{d-3}{2}}dx \ge \frac{1}{36}.
				\]
			\end{lemma}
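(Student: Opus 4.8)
The plan is to argue by contradiction, leaning on the elementary fact that a nonzero polynomial of degree at most $m$ has at most $m$ real zeros, whereas $\bar{f}_m$ is constructed to change sign $2m+1$ times. First I would recall that $\bar{f}_m$ is the $\pm1$-valued step function alternating between $+1$ and $-1$ on the $2m+2$ consecutive subintervals $J_1,\dots,J_{2m+2}$ partitioning $[-1,1]$, where the $J_i$ are chosen so that each carries equal mass $\tfrac{1}{2m+2}$ under the probability density $w(x)\coloneqq\frac{\Gamma(d/2)}{\sqrt{\pi}\,\Gamma((d-1)/2)}(1-x^2)^{(d-3)/2}$; such a partition exists because $w$ is continuous, strictly positive on $(-1,1)$, and integrates to $1$. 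Then I would assume, for contradiction, that some polynomial $p$ with $\deg p\le m$ satisfies $\int_{-1}^1\p{p-\bar{f}_m}^2 w\,dx<\tfrac1{36}$, and set $\delta_i^2\coloneqq\int_{J_i}\p{p-\bar{f}_m}^2 w\,dx$, so that $\sum_{i=1}^{2m+2}\delta_i^2<\tfrac1{36}$.

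Next I would call $J_i$ \emph{bad} if the $w$-weighted average of $\p{p-\bar{f}_m}^2$ over $J_i$ is at least $\tfrac19$ (equivalently $\delta_i^2\ge\tfrac{1}{9(2m+2)}$), and \emph{good} otherwise. Writing $b$ for the number of bad intervals, $\sum_{\text{bad }i}\delta_i^2\ge \tfrac{b}{9(2m+2)}$, and comparing with $\sum_i\delta_i^2<\tfrac1{36}$ gives $b<\tfrac{m+1}{2}$. On any good interval, the average being below $\tfrac19$ forces a point $x_i\in J_i$ with $\p{p(x_i)-\bar{f}_m(x_i)}^2<\tfrac19$, so $\abs{p(x_i)-\bar{f}_m(x_i)}<\tfrac13$; since $\bar{f}_m(x_i)\in\set{-1,+1}$, this means $p(x_i)\ne0$ and $\sign p(x_i)=\bar{f}_m(x_i)$. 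Because $\bar{f}_m$ takes opposite values on adjacent intervals, for every pair of adjacent good intervals $J_i,J_{i+1}$ the numbers $p(x_i)$ and $p(x_{i+1})$ have opposite signs, so the continuous function $p$ vanishes strictly between $x_i$ and $x_{i+1}$; the resulting zeros lie in pairwise disjoint open intervals and are therefore distinct. Finally I would count: the $2m+2-b$ good intervals form at most $b+1$ maximal runs of consecutive indices, so the number of adjacent good pairs is at least $(2m+2-b)-(b+1)=2m+1-2b$, which exceeds $m$ since $b<\tfrac{m+1}{2}$. Hence $p$ would have more than $m$ distinct zeros, forcing $p\equiv0$; but then $\int_{-1}^1\p{p-\bar{f}_m}^2 w=\int_{-1}^1\bar{f}_m^2\, w=1>\tfrac1{36}$, a contradiction.

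The step I expect to take the most care with is the numerology: the ``bad'' cutoff has to be set so that the mass estimate yields exactly $b<\tfrac{m+1}{2}$ --- which is what makes $2m+1-2b$ strictly larger than $m$ --- while simultaneously the matching accuracy $\sqrt{1/9}=\tfrac13$ stays below $1$, which is needed for the sign comparison on good intervals to be valid; the threshold $\tfrac19$ is precisely what reconciles both requirements against the target constant $\tfrac1{36}$. Everything else is bookkeeping, and in particular $w$ enters only through the equal-mass partition used to define $\bar{f}_m$ and through $\int_{-1}^1 w=1$, so a different number of pieces or a different relative weighting would merely rescale the final constant without changing the structure of the proof.
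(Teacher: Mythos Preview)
Your sign--counting idea is exactly the right mechanism, and it is also what the paper uses. The gap is that you have the wrong object: in this paper, $\bar{f}_m$ is \emph{not} a $\pm1$-valued step function on $2m+2$ equal-$w$-mass subintervals of $[-1,1]$. It is defined (just above the lemma, in the proof of \thmref{thm:daniely_inapproximability}) as
\[
\bar{f}_m(z)=\begin{cases}\sign\bigl(\sin(4\pi m\sqrt{d}\,(1+z))\bigr) & z\in\bigl[-\tfrac{1}{2\sqrt{d}},0\bigr],\\ 0 & \text{otherwise.}\end{cases}
\]
So $\bar{f}_m$ is supported on a short interval of length $\tfrac{1}{2\sqrt{d}}$ and is identically zero on the rest of $[-1,1]$. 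Your equal-mass partition $J_1,\dots,J_{2m+2}$ of all of $[-1,1]$ therefore does not exist here, the identity $\int_{J_i}w=\tfrac{1}{2m+2}$ that drives your bad/good numerology is unavailable, and the final line $\int\bar{f}_m^2\,w=1$ is false (that integral is only the $w$-mass of $[-\tfrac{1}{2\sqrt{d}},0]$, a number of order a constant but certainly not $1$).

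The paper's proof is the natural repair of your argument once you know the correct $\bar{f}_m$: restrict to the support $[-\tfrac{1}{2\sqrt{d}},0]$, observe that on this short interval the weight satisfies $w(x)\ge\tfrac{2\sqrt{d}}{9}$ (so one can work with Lebesgue measure up to that constant), and then run a sign-counting argument on the $2m$ equal-\emph{length} subintervals of length $\tfrac{1}{4m\sqrt{d}}$ on which $\bar{f}_m=\pm1$. Since $p$ has at most $m$ sign changes, on at least $m$ of those subintervals $p$ keeps a fixed sign, and on at least $m/2$ of them this sign is opposite to that of $\bar{f}_m$, giving $\int(p-\bar{f}_m)^2\ge$ (length) on each; the product $\tfrac{2\sqrt{d}}{9}\cdot\tfrac{m}{2}\cdot\tfrac{1}{4m\sqrt{d}}$ yields the $\tfrac{1}{36}$. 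Your good/bad averaging trick is a nice alternative bookkeeping and would work with only cosmetic changes once you replace the nonexistent equal-mass $J_i$ by these equal-length subintervals and carry the factor $\tfrac{2\sqrt{d}}{9}$ through.
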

			
			\begin{proof}
				Since the integrand is non-negative, we can lower bound the integral in the lemma by
				\[
					\int_{-\frac{1}{2\sqrt{d}}}^{0} \p{p(x)-\bar{f}_m(x)}^2\frac{\Gamma\p{\frac{d}{2}}}{\sqrt{\pi}\Gamma\p{\frac{d-1}{2}}}(1-x^2)^{\frac{d-3}{2}}dx.
				\]
				Assuming $d\ge4$, then $\Gamma\p{\frac{d}{2}}/\Gamma\p{\frac{d-1}{2}}\ge\frac{\sqrt{d}}{2}$ (see Eq.~5.6.4 in \citet{NIST:DLMF}) and since
				\[
					\frac{1}{\sqrt{\pi}}\inf_{x\in[-0.5d^{-0.5},0]}(1-x^2)^{(d-3)/2}\ge\frac49
				\]
				for all $d\ge4$ we have that the above integral is lower bounded by
				\[
					\frac{2\sqrt{d}}{9}\int_{-\frac{1}{2\sqrt{d}}}^{0} \p{p(x)-\bar{f}_m(x)}^2dx.
				\]
				By its definition, $\bar{f}_m$ alternates signs at least $2m$ times on the domain of integration, and since a degree $m$ polynomial can only change signs at most $m$ times, there are at least $m$ intervals each of which of length $\frac{1}{4m\sqrt{d}}$ (we view the intervals at the boundary of the domain of integration as a single interval of this length whenever $\sqrt{d}$ is not an integer) where $p$ does not change its sign. Moreover, on at least $m/2$ of these intervals the signs of $p$ and $\bar{f}_m$ are opposite, and the integral on these intervals is lower bounded by their length $\frac{1}{4m\sqrt{d}}$. Combining this with our previously derived lower bound concludes the lemma.
			\end{proof}
			
			The following proposition shows that a depth 2 neural network cannot approximate a certain dot-product function that oscillates $2m+2$ many times, unless its width or the Euclidean norm of its weights is large.
			\begin{proposition}\label{prop:daniely_lower_bound}
				Suppose that a neural network $N_{\btheta}(\cdot)$ with architecture defined in \eqref{eq:depth2} satisfies
				\[
					r,\norm{\btheta}_2\le\frac{1}{10(1+\sqrt{C_{\sigma}})}N_{d,m}^{\alpha'_{\sigma}/6},
				\]
				where
				\[
					N_{d,m}\coloneqq \frac{(2m+d-2)(m+d-3)!}{m!(d-2)!},
				\]
				$\alpha'_{\sigma}\coloneqq\begin{cases}
					1 & \alpha_{\sigma}\le1\\
					\alpha_{\sigma}^{-1} & \alpha_{\sigma}>1
				\end{cases}$, 
				and $C_{\sigma},\alpha_{\sigma}$ are defined in Assumption~\ref{asm:poly_bounded}
				and $\btheta$ are all the trainable weights of $N_{\btheta}(\cdot)$ in vectorized form. Then
				\[
					\norm{N_{\btheta}(\bx)-\bar{f}_m(\inner{\bx_1,\bx_2})}^2_{L_2(\varphi_d)} > \frac{1}{49}
				\]
			\end{proposition}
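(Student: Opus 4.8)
The plan is to follow the spherical-harmonic strategy of \citet{daniely2017depth}. First lift the problem to $L_2\p{\mathbb{S}^{d-1}\times\mathbb{S}^{d-1}}$ with the product uniform measure: since $\varphi_d$ is the law of $\bx_1+\bx_2$, for any $h$ we have $\norm{h}^2_{L_2(\varphi_d)}=\E_{\bx_1,\bx_2}\pcc{h(\bx_1+\bx_2)^2}$, and $\bar{f}_m(\inner{\bx_1,\bx_2})$ is of this form because $\inner{\bx_1,\bx_2}=\half\p{\norm{\bx_1+\bx_2}^2-2}$. Let $W\subseteq L_2\p{\mathbb{S}^{d-1}\times\mathbb{S}^{d-1}}$ be the closed subspace of functions of $\inner{\bx_1,\bx_2}$ alone; by the addition formula it is the orthogonal sum $\bigoplus_{k\ge0}\reals\cdot G_k(\inner{\bx_1,\bx_2})$, where $G_k$ is the degree-$k$ Gegenbauer polynomial normalized by $G_k(1)=1$ and one has $\norm{G_k(\inner{\bx_1,\bx_2})}^2=1/N_{d,k}$. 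Let $Q_{\le m},Q_{>m}$ be the orthogonal projections onto $\mathrm{span}\set{G_k:k\le m}$ and its complement inside $W$. Since $\bar{f}_m\in W$, $Q_{\le m}\bar{f}_m$ is a degree-$\le m$ polynomial in $\inner{\bx_1,\bx_2}$, and conditionally on $\bx_1$ the variable $\inner{\bx_1,\bx_2}$ has exactly the weight appearing in \lemref{lem:best_degree_m}; that lemma therefore gives $\norm{Q_{>m}\bar{f}_m}^2\ge\frac{1}{36}$, i.e.\ $\norm{Q_{>m}\bar{f}_m}\ge\frac16$.

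Since $Q_{>m}$ is a contraction and orthogonal projections subtract, $\norm{N_{\btheta}-\bar{f}_m}\ge\norm{Q_{>m}N_{\btheta}-Q_{>m}\bar{f}_m}\ge\frac16-\norm{Q_{>m}N_{\btheta}}$, so it suffices to prove $\norm{Q_{>m}N_{\btheta}}<\frac1{42}$, which yields $\norm{N_{\btheta}-\bar{f}_m}^2>\frac1{49}$. The constant $b_0$ sits in $W$ at degree $0$, so $Q_{>m}N_{\btheta}=\sum_j w_j\,Q_{>m}\phi_j$ with $\phi_j(\bx_1,\bx_2)=\sigma(\inner{\bu_j,\bx_1}+\inner{\bu_j,\bx_2}+b_j)$. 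The heart of the argument is a per-neuron bound. By rotational invariance write $\phi_j=\psi_j(\inner{\bv_j,\bx_1},\inner{\bv_j,\bx_2})$ with $\bv_j=\bu_j/\norm{\bu_j}$, $\psi_j(s,t)=\sigma(\norm{\bu_j}(s+t)+b_j)$, and expand $\psi_j=\sum_{k,l}\tilde c^{(j)}_{k,l}\,e_k\otimes e_l$ in the orthonormal Gegenbauer basis $e_k=\sqrt{N_{d,k}}\,G_k$ of $L_2([-1,1],w)$. A short computation with the addition theorem and the reproducing property of zonal harmonics shows that the Gegenbauer expansion of the orthogonal projection of $\phi_j$ onto $W$ keeps precisely the diagonal coefficients, $Q_{>m}\phi_j=\sum_{k>m}\tilde c^{(j)}_{k,k}\,G_k(\inner{\bx_1,\bx_2})$ (the off-diagonal pieces $k\ne l$ lie in $\mathcal{H}_k\otimes\mathcal{H}_l$ and are orthogonal to $W$). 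Therefore, using $\norm{G_k(\inner{\bx_1,\bx_2})}^2=1/N_{d,k}$, that $N_{d,k}$ is nondecreasing in $k$, Parseval for $\psi_j$, and Assumption~\ref{asm:poly_bounded} with $\abs{s},\abs{t}\le1$,
\[
\norm{Q_{>m}\phi_j}^2=\sum_{k>m}\frac{(\tilde c^{(j)}_{k,k})^2}{N_{d,k}}\le\frac{1}{N_{d,m+1}}\sum_{k}(\tilde c^{(j)}_{k,k})^2\le\frac{\E_{s,t}\pcc{\sigma(\norm{\bu_j}(s+t)+b_j)^2}}{N_{d,m+1}}\le\frac{C_{\sigma}^2\p{1+(2\norm{\bu_j}+\abs{b_j})^{\alpha_{\sigma}}}^2}{N_{d,m+1}}.
\]

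To finish, combine with Cauchy--Schwarz: using $\norm{\bu_j},\abs{b_j},\norm{\bw}_2\le\norm{\btheta}_2$ and $\sum_j\abs{w_j}\le\sqrt r\,\norm{\bw}_2$,
\[
\norm{Q_{>m}N_{\btheta}}\le\sum_j\abs{w_j}\,\norm{Q_{>m}\phi_j}\le\frac{\sqrt r\,\norm{\btheta}_2\,C_{\sigma}\p{1+(3\norm{\btheta}_2)^{\alpha_{\sigma}}}}{\sqrt{N_{d,m+1}}}.
\]
Substituting the hypothesis $r,\norm{\btheta}_2\le B\coloneqq\frac{1}{10(1+\sqrt{C_{\sigma}})}N_{d,m}^{\alpha'_{\sigma}/6}$ together with $N_{d,m+1}\ge N_{d,m}=\p{10(1+\sqrt{C_{\sigma}})B}^{6/\alpha'_{\sigma}}$, one may assume $B\ge1$, since otherwise $r\le B<1$ forces $r=0$, i.e.\ $N_{\btheta}\equiv b_0$, in which case $Q_{>m}N_{\btheta}=0$ and the claim is immediate. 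Splitting into the cases $\alpha_{\sigma}\le1$ (so $\alpha'_{\sigma}=1$) and $\alpha_{\sigma}>1$ (so $\alpha'_{\sigma}=1/\alpha_{\sigma}$) and simplifying using $r,\norm{\btheta}_2\le B$ and $B\ge1$, the right-hand side is at most $\frac{4C_{\sigma}}{10^3(1+\sqrt{C_{\sigma}})^3\sqrt B}$ in the first case and $2C_{\sigma}\p{\frac{3}{10^3(1+\sqrt{C_{\sigma}})^3}}^{\alpha_{\sigma}}$ in the second; the elementary inequalities $\frac{C_{\sigma}}{(1+\sqrt{C_{\sigma}})^3}\le1$, $\frac{3}{10^3(1+\sqrt{C_{\sigma}})^3}\le1$ and $B\ge1$ then push both below $\frac1{100}<\frac1{42}$, completing the proof.

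The step I expect to be the main obstacle is the per-neuron projection identity in the second paragraph: arranging the spherical-harmonic bookkeeping (the addition theorem, zonal/reproducing kernels, and the fact that a ridge neuron $\sigma(\inner{\bu,\bx_1}+\inner{\bu,\bx_2}+b)$ reaches $W$ only through its diagonal degree-$(k,k)$ components) so that the powers of $N_{d,k}$ come out exactly right. The rest --- the contraction inequality, Parseval, and the constant-chasing needed to land precisely at $\frac1{49}$ --- is routine but must be carried out carefully, in particular to check that the chosen prefactor $\frac{1}{10(1+\sqrt{C_{\sigma}})}$ and exponent $\alpha'_{\sigma}/6$ make the $\sigma$-dependence cancel.
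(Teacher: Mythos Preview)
Your proposal is correct and follows precisely the spherical-harmonic approach of \citet{daniely2017depth} that the paper also uses; the only difference is packaging. The paper quotes \citep[Thm.~4]{daniely2017depth} as a black box to obtain the inequality
\[
\norm{N_{\btheta}-\bar f_m}^2_{L_2(\varphi_d)}\ \ge\ \norm{\Pcal_m\bar f_m}\Bigl(\norm{\Pcal_m\bar f_m}-\tfrac{2\sum_j|w_j|\,\norm{\sigma(\inner{\bu_j,\bx}+b_j)}_{L_2(\varphi_d)}+2|b_0|}{\sqrt{N_{d,m}}}\Bigr),
\]
then plugs in \lemref{lem:best_degree_m} for $\norm{\Pcal_m\bar f_m}\ge\tfrac16$ and Assumption~\ref{asm:poly_bounded} for the neuron norms, and does the same constant chase you do. Your $Q_{>m}$ is exactly their $\Pcal_m$, and your per-neuron identity $\norm{Q_{>m}\phi_j}^2=\sum_{k>m}\tilde c_{k,k}^2/N_{d,k}$ together with Parseval is exactly the content of Daniely's theorem that the paper cites---so what you flag as ``the main obstacle'' is in fact the one step the paper outsources. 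Both routes land at the same numerical endgame; your $\sqrt{N_{d,m+1}}$ versus the paper's $\sqrt{N_{d,m}}$ is a harmless slack, and the paper's $\sqrt{C_\sigma}$ (which should really be $C_\sigma$, as you have it) is absorbed by the same elementary bound $C_\sigma/(1+\sqrt{C_\sigma})^3\le1$.
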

			
			\begin{proof}
				By \citet[Thm.~4]{daniely2017depth}, we have that
				\begin{align*}
					&\norm{N_{\btheta}(\bx)-\bar{f}_m(\inner{\bx_1,\bx_2})}^2_{L_2(\varphi_d)}\\
					&\hskip 1.5cm \ge \norm{\Pcal_m \bar{f}_m}_{L_2(\varphi_d)} \p{\norm{\Pcal_m \bar{f}_m}_{L_2(\varphi_d)} - \frac{2\sum_{j=1}^{r}\abs{w_j}\cdot\norm{\sigma\p{\inner{\bu_j,\bx}+b_j}}_{L_2(\varphi_d)}+2\abs{b_0}}{\sqrt{N_{d,m}}}},
				\end{align*}
				where $\Pcal_m \bar{f}_m$ is the best degree $m$ polynomial approximation of $x\mapsto\bar{f}_m(x)$ with respect to the density $w(x)\coloneqq\frac{\Gamma\p{\frac{d}{2}}}{\sqrt{\pi}\Gamma\p{\frac{d-1}{2}}}(1-x^2)^{\frac{d-3}{2}}$. By \lemref{lem:best_degree_m} we have
				\begin{equation}\label{eq:16_lower_bound}
					\norm{N(\bx)-\bar{f}_m(\inner{\bx_1,\bx_2})}^2_{L_2(\varphi_d)} \ge \frac16 \p{\frac16 - \frac{2\sum_{j=1}^{r}\abs{w_j}\cdot\norm{\sigma\p{\inner{\bu_j,\bx}+b_j}}_{L_2(\varphi_d)}+2\abs{b_0}}{\sqrt{N_{d,m}}}}.
				\end{equation}
				We now bound the norm of the function implemented by a neuron
				\begin{align*}
					\norm{\sigma\p{\inner{\bu_j,\bx}+b_j}}_{L_2(\varphi_d)} 
					&= \sqrt{\int_{\reals^d}\p{\sigma\p{\inner{\bu_j,\bx}+b_j}}^2\varphi_d(\bx)d\bx}\\
					&\le \sqrt{\int_{\reals^d} C_{\sigma}\p{1+\abs{2\norm{\bu_j}+\abs{b_j}}^{\alpha_{\sigma}}}^2\varphi_d(\bx)d\bx}\\
					&= \sqrt{C_{\sigma}}\p{1+\abs{2\norm{\bu_j}+\abs{b_j}}^{\alpha_{\sigma}}}\\
					&\le \sqrt{C_{\sigma}}\p{1+\p{3\norm{\btheta}}^{\alpha_{\sigma}}},
				\end{align*}
				where the first inequality holds due to Cauchy-Schwartz and Assumption~\ref{asm:poly_bounded}. From the above and an additional application of Cauchy-Schwartz we get
				\begin{align*}
					2\sum_{j=1}^{r}\abs{w_j}\cdot\norm{\sigma\p{\inner{\bu_j,\bx}+b_j}}_{L_2(\varphi_d)} + 2\abs{b_0} &\le 2r\sqrt{C_{\sigma}}\norm{\btheta} \p{1+\p{3\norm{\btheta}}^{\alpha_{\sigma}}} + 2\norm{\btheta}\\
					&\le 2r\norm{\btheta}\p{1+\sqrt{C_{\sigma}}+3^{\alpha_{\sigma}}\norm{\btheta}^{\alpha_{\sigma}}}.
				\end{align*}
				By our assumption in the proposition statement and the definition of $\alpha'_{\sigma}$ which entails $\alpha'_{\sigma},\alpha_{\sigma}\alpha'_{\sigma}\le1$, we have that $2r\norm{\btheta}\le\frac{1}{50}N_{d,m}^{1/3}$ and $1+\sqrt{C_{\sigma}}+3^{\alpha_{\sigma}}\norm{\btheta}^{\alpha_{\sigma}}\le2N_{d,m}^{1/6}$.
				Plugging these two inequalities in \eqref{eq:16_lower_bound} and simplifying, the proposition follows.
			\end{proof}
			
			With the above lemma and proposition, we are ready to prove the theorem.
			\begin{proof}[Proof of \thmref{thm:daniely_inapproximability}]
				Let $\Dcal'_d$ denote the distribution on $\reals^d$ defined in \eqref{eq:daniely_dist_def}. For natural $m\ge1$, consider the function
				\[
					f_m(z)\coloneqq\begin{cases}
						\sign\p{\sin\p{2\pi m\sqrt{d}z^2}} & z^2\in\pcc{2-\frac{1}{\sqrt{d}},2}\\
						0 & \text{otherwise}
					\end{cases},
				\]
				where we define
				\[
					\sign(z)\coloneqq\begin{cases}
						1 & z\ge0\\
						-1 & z<0
					\end{cases}.
				\]
				By the definition of $f_m(\cdot)$, we have that we can write
				\[
					f_m(\norm{\bx}_2)=\sum_{i=1}^{2m+2}\epsilon_d^{(i)}\one{\norm{\bx}_2\le\lambda_d^{(i)}}
				\]
				for appropriately chosen $\epsilon_d^{(1)},\ldots,\epsilon_d^{(2m+2)}\in\{-1,1\}$ and $\lambda_d^{(1)},\ldots,\lambda_d^{(2m+2)}\in[1,2]$. Suppose that for all $i\in[2m+2]$ there exists a neural network $N_{\btheta_i}(\cdot)$ of width $r_i$, having weights $\btheta_i$, and with architecture defined by \eqref{eq:depth2} such that
				\[
					\norm{N_{\btheta_i}^{(i)}(\bx) - \one{\norm{\bx}_2\le\lambda_d^{(i)}}}_{L_2(\varphi_d)}\le\frac{1}{14(m+1)}.
				\]
				Then we can construct a depth 2 neural network $N_{\btheta'}(\cdot)$ of width $r'$ which is at most $(2m+2)r$, having weights satisfying $\norm{\btheta'}\le\sqrt{2m+2}\cdot\max_i\norm{\btheta_i}$ such that
				\[
					N_{\btheta'}(\bx)=\sum_{i=1}^{2m+2}\epsilon_d^{(i)}N_{\btheta_i}^{(i)}(\bx),
				\]
				by taking all the neural networks $N_{\btheta_i}$ and concatenating them appropriately.
				From the above and the triangle inequality we have
				\begin{equation}\label{eq:150_bound}
					\norm{N_{\btheta'}(\bx)-f_m(\norm{\bx}_2)}_{L_2(\varphi_d)} \le \sum_{i=1}^{2m+2}\abs{\epsilon_d^{(i)}}\cdot\norm{N_{\btheta_i}^{(i)}(\bx) - \one{\norm{\bx}_2\le\lambda_d^{(i)}}}_{L_2(\varphi_d)}\le\frac{1}{7}.
				\end{equation}
				Next, we compute for any $\bx=\bx_1+\bx_2$ such that $\bx_1,\bx_2$ have unit norm and $\norm{\bx}_2^2\in\pcc{2-\frac{1}{\sqrt{d}},2}$ to obtain that
				\[
					f_m(\norm{\bx}_2) = \sign\p{\sin\p{2\pi m\sqrt{d}\norm{\bx}_2^2}} = \sign\p{\sin\p{4\pi m\sqrt{d}(1+\inner{\bx_1,\bx_2})}},
				\]
				for any unit $\bx_1,\bx_2$ such that $\inner{\bx_1,\bx_2}\in\pcc{-\frac{1}{2\sqrt{d}},0}$. Define
				\[
					\bar{f}_m(z)\coloneqq 
					\begin{cases}
						\sign\p{\sin\p{4\pi m\sqrt{d}(1+z)}} & z\in\pcc{-\frac{1}{2\sqrt{d}},0}\\
						0 & \text{otherwise}
					\end{cases}.
				\]
				By the above we have
				\[
					f_m(\norm{\bx}_2)=\bar{f}_m(\inner{\bx_1,\bx_2}),
				\]
				which when plugged in \eqref{eq:150_bound} yields
				\[
					\norm{N_{\btheta'}(\bx)-\bar{f}_m(\inner{\bx_1,\bx_2})}_{L_2(\varphi_d)} \le \frac{1}{7}.
				\]
				By \propref{prop:daniely_lower_bound} we have that
				\[
					\max\set{r',\norm{\btheta'}_2} > \frac{1}{10(1+\sqrt{C_{\sigma}})}N_{d,m}^{\alpha'_{\sigma}/6},
				\]
				implying
				\begin{equation}\label{eq:width_norm_bound}
					\max_i\set{\max\set{r_i,\norm{\btheta_i}_2}} > \frac{1}{20(m+1)(1+\sqrt{C_{\sigma}})}N_{d,m}^{\alpha'_{\sigma}/6} \ge \frac{1}{40(1+\sqrt{C_{\sigma}})m}N_{d,m}^{\alpha'_{\sigma}/6}.
				\end{equation}
				Lower bounding $\frac1mN_{d,m}$, we first assume that $m>d$ to obtain
				\begin{align*}
					\frac1mN_{d,m} &\ge \frac1m\binom{m+d-2}{d-2} \ge \frac1m\p{1+\frac{m}{d-2}}^{d-2} \ge \frac1m\p{1+\frac{m}{d}}^{d/2}\\
					&= \frac1m\p{1+\frac{m}{d}}^{d/4}\cdot\p{1+\frac{m}{d}}^{d/4} \ge \frac1m\p{1+\frac{m}{4}}\cdot\p{1+\frac{m}{d}}^{d/4}\\
					&\ge \frac14\p{1+\frac{m}{d}}^{d/4} = \frac14\exp\p{\frac{d}{4}\ln\p{1+\frac md}} \ge \frac14\exp\p{\frac{d}{8}\ln\p{\frac md +2}}.
				\end{align*}
				In the above; in the first line, the second inequality follows from the inequality $\binom{n}{k}\ge(n/k)^k$ which holds for all natural $n\ge k$ and the second inequality follows from the fact that $x\mapsto(1+a/x)^x$ is increasing in $x\ge0$ for all $a>0$ and by assuming $d\ge4$ which entails $d-2\ge d/2$; in the second line the inequality follows from Bernoulli's inequality; in the third line, the last inequality follows from the fact that $0.5\ln(2+x)/\ln(1+x)\le1$ for all $x\ge1$ and since we assume $m>d$. Now assuming that $m\le d$, we have
				\[
					\frac1mN_{d,m} \ge \frac1dN_{d,m} \ge \frac1d\binom{m+d-2}{m},
				\]
				from which by symmetry and using the same reasoning as in the previous case we get
				\[
					\frac1mN_{d,m} \ge \frac14\exp\p{\frac{m}{8}\ln\p{\frac dm +2}}.
				\]
				Combining these two cases together we obtain a lower bound of
				\[
					\frac1mN_{d,m} \ge \frac14\exp\p{\frac18\min\set{m\ln\p{\frac dm +2}, d\ln\p{\frac md +2}}},
				\]
				which when plugged back in \eqref{eq:width_norm_bound}, along with \lemref{lem:F_d_m_properties}, implies the existence of some $i\in[2m+2]$ such that
				\[
					\max\set{r_i,\norm{\btheta_i}_2} \ge c_1\exp\p{c_2\min\set{m\ln\p{\frac dm +2}, d\ln\p{\frac md +2}}},
				\]
				for some $c_1,c_2>0$, concluding the proof of the theorem.
			\end{proof}
			
		\section{Proofs from \subsecref{subsec:random_features}}\label{app:random_features_proof}
		
		To prove \thmref{thm:random_features}, we will need the following auxiliary lemmas and propositions. The following technical lemma provides useful bounds on the special function Owen's T which we encounter when computing the expectation of truncated random features (see \citet{owen1956tables} for more information about this function).
		\begin{lemma}\label{lem:owen_T_lower_bound}
			Let
			\begin{equation}\label{eq:owens_T}
				T(h,a) \coloneqq \frac{1}{2\pi}\int_0^a\frac{\exp\p{-\frac{1}{2}h^2(1+t^2)}}{1+t^2}dt
			\end{equation}
			denote Owen's T function. Then we have for any $(h,a)\in[0,\infty)^2$ that
			\[
				\frac{1}{2\pi}\exp\p{-\frac12h^2\p{1+a^2}}\arctan(a) \le T(h,a) \le \frac{1}{2\pi}\exp\p{-\frac12h^2}\arctan(a).
			\]
		\end{lemma}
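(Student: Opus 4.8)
The plan is to bound the integrand in the definition of $T(h,a)$ pointwise on the interval of integration and then pull the resulting constant factors out of the integral, leaving behind $\int_0^a \frac{dt}{1+t^2} = \arctan(a)$. This works because the only $t$-dependence beyond the $\frac{1}{1+t^2}$ factor is the Gaussian factor $\exp\!\left(-\tfrac12 h^2(1+t^2)\right)$, which is monotone in $1+t^2$.

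Concretely, for $t\in[0,a]$ we have $1 \le 1+t^2 \le 1+a^2$, and since $h^2\ge 0$ and $s\mapsto e^{-s}$ is decreasing, this gives
\[
\exp\!\left(-\tfrac12 h^2(1+a^2)\right) \;\le\; \exp\!\left(-\tfrac12 h^2(1+t^2)\right) \;\le\; \exp\!\left(-\tfrac12 h^2\right)
\qquad\text{for all } t\in[0,a].
\]
Substituting the lower bound into the integrand of $T(h,a)$ (and keeping the $\tfrac{1}{1+t^2}$ factor intact) yields
\[
T(h,a) \;\ge\; \frac{1}{2\pi}\exp\!\left(-\tfrac12 h^2(1+a^2)\right)\int_0^a \frac{dt}{1+t^2} \;=\; \frac{1}{2\pi}\exp\!\left(-\tfrac12 h^2(1+a^2)\right)\arctan(a),
\]
using $\int_0^a \frac{dt}{1+t^2} = \arctan(a) - \arctan(0) = \arctan(a)$. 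Symmetrically, substituting the upper bound gives
\[
T(h,a) \;\le\; \frac{1}{2\pi}\exp\!\left(-\tfrac12 h^2\right)\int_0^a \frac{dt}{1+t^2} \;=\; \frac{1}{2\pi}\exp\!\left(-\tfrac12 h^2\right)\arctan(a),
\]
which is exactly the claimed two-sided bound. All quantities are nonnegative since $(h,a)\in[0,\infty)^2$, so there are no sign issues in the monotonicity step.

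There is no real obstacle here; the statement is essentially a one-line consequence of bounding $1+t^2$ from above and below on $[0,a]$, and the only thing to be careful about is that the integration is over $t\in[0,a]$ so that $\arctan$ is evaluated at the nonnegative argument $a$ (for which $\arctan(a)\ge 0$, keeping both inequalities correctly oriented). If one wanted to be slightly more explicit, one could note that the lower bound is also trivially consistent with the degenerate cases $a=0$ (both sides zero) and $h=0$ (the bound becomes $T(0,a)\le\frac{1}{2\pi}\arctan(a)$ with equality), but these are not needed for the argument.
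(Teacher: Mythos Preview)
Your proof is correct and follows essentially the same approach as the paper: bound the Gaussian factor $\exp\!\left(-\tfrac12 h^2(1+t^2)\right)$ pointwise on $[0,a]$ by its extreme values at $t=a$ and $t=0$, pull the constant out, and integrate $\frac{1}{1+t^2}$ to obtain $\arctan(a)$.
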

		
		\begin{proof}
			Starting with the lower bound, since $\exp\p{-\frac12h^2\p{1+t^2}}$ is minimized at $t=a$ in the domain of integration $[0,a]$ and since the integrand is positive, we can lower bound $T(h,a)$ by
			\begin{align*}
				\frac{1}{2\pi}\int_0^a\frac{\exp\p{-\frac{1}{2}h^2(1+a^2)}}{1+t^2}dt &= \frac{\exp\p{-\frac{1}{2}h^2(1+a^2)}}{2\pi}\int_0^a\frac{1}{1+t^2}dt\\ &= \frac{\exp\p{-\frac{1}{2}h^2(1+a^2)}}{2\pi}\arctan(a),
			\end{align*}
			where the upper bound follows analogously from the exponent being maximized at $0$ over the integration domain.
		\end{proof}
		
		The following lemma establishes some useful properties of the distribution of the random features that are produced by i.i.d.\ normal random variables.
		\begin{lemma}\label{lem:dist_properties}
			Suppose that $U_j\sim\Ncal(0,\sigma^2\cdot I_d)$ and $B_j\sim\Ncal(0,\sigma^2)$ for all $j\in[r]$. Then the pre-activation output of a neuron in the hidden layer conditioned on the input's norm $\norm{X}$ attaining the value $x$, given by the random variable
			$\p{\inner{U_j,X}+B_j\mid \norm{X}=x}$,\footnote{To avoid ambiguity stemming from the Borel-Kolmogorov paradox, we formally define this random variable using the limit $\lim\limits_{\epsilon\to0}\p{\inner{U_j,X}+B_j\mid \norm{X}\in(x-\epsilon,x+\epsilon)}$. Note that this is a valid definition since by Assumption~\ref{asm:d_dist_assumption}(2) $\norm{X}$ has a strictly positive density.} follows a normal distribution with variance $\sigma^2(1+x^2)$. Moreover, under the conditioning on the input's norm, all such outputs for all the neurons are mutually independent random variables.
		\end{lemma}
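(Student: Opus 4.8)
The plan is to reduce from conditioning on the norm $\norm{X}=x$ to conditioning on the full data vector $X=\bx$, and then perform an elementary Gaussian computation. Fix $x\ge0$ and $\bx\in\mathbb{S}_x^{d-1}$. Since the weights $(U_j,B_j)_{j\in[r]}$ are drawn independently of the data $X$, the conditional law of the pre-activations $Z_j\coloneqq\inner{U_j,X}+B_j$ given $X=\bx$ is simply the (unconditional) law of $\p{\inner{U_j,\bx}+B_j}_{j\in[r]}$. I will verify that this law depends on $\bx$ only through $x=\norm{\bx}$. Granting that, the law conditioned on $\norm{X}=x$, defined as in the footnote via $\lim_{\epsilon\to0}(\,\cdot\mid\norm{X}\in(x-\epsilon,x+\epsilon))$, unwinds as follows: by radial symmetry and Assumption~\ref{asm:d_dist_assumption}(2), conditioned on $\norm{X}\in(x-\epsilon,x+\epsilon)$ the vector $X$ is a mixture of uniform laws on spheres of radii close to $x$, hence the conditional law of $(Z_1,\dots,Z_r)$ is a mixture of the conditional laws given $X=\bx'$ with $\norm{\bx'}$ near $x$; since all of these are the same measure (by the claim being verified), and since this family of measures concentrates at the one indexed by $x$ as $\epsilon\to0$, the limit is exactly that common measure.

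It then remains to compute the conditional law given $X=\bx$. Conditionally on $X=\bx$, the vector $(U_1,\dots,U_r,B_1,\dots,B_r)$ is a centered Gaussian vector with mutually independent coordinates, so $(Z_1,\dots,Z_r)$, being a linear image of it, is jointly Gaussian with mean zero. For each $j$, $Z_j$ is the sum of the independent Gaussians $\inner{U_j,\bx}\sim\Ncal\p{0,\sigma^2\norm{\bx}^2}$ and $B_j\sim\Ncal\p{0,\sigma^2}$, hence $Z_j\sim\Ncal\p{0,\sigma^2(1+x^2)}$. For $j\neq k$, the mutual independence of $U_j,U_k,B_j,B_k$ gives $\mathrm{Cov}(Z_j,Z_k)=0$, and a jointly Gaussian vector with uncorrelated coordinates has independent coordinates; thus the conditional law of $(Z_1,\dots,Z_r)$ given $X=\bx$ is the product of $r$ copies of $\Ncal\p{0,\sigma^2(1+x^2)}$. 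This depends on $\bx$ only through $x$, which closes the reduction and yields both the claimed marginal law and the claimed mutual independence under the conditioning on $\norm{X}=x$.

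There is no real obstacle in this argument beyond bookkeeping; the only delicate point is giving meaning to conditioning on the measure-zero event $\{\norm{X}=x\}$ (the Borel--Kolmogorov subtlety flagged in the footnote), and this is disposed of cleanly precisely because radial symmetry forces every $X=\bx'$ conditional with $\norm{\bx'}=x$ to induce the \emph{same} law on the pre-activations, so the $\epsilon\to0$ mixture has nothing to average over.
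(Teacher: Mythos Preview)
Your proof is correct and follows essentially the same approach as the paper: fix $\bx$ with $\norm{\bx}=x$, observe that $\inner{U_j,\bx}+B_j$ is a linear combination of independent Gaussians hence $\Ncal(0,\sigma^2(1+x^2))$, and use that uncorrelated jointly Gaussian variables are independent. The only difference is that you are more explicit than the paper about the reduction from conditioning on $\norm{X}=x$ to conditioning on $X=\bx$ via radial symmetry and the limit in the footnote, whereas the paper simply fixes $\bx$ and proceeds directly.
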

		
		\begin{proof}
			Fix some $\bx=(x_1,\ldots,x_d)$ such that $\norm{\bx}=x$, then we have that
			\[
				\inner{U_j,\bx}+B_j = \sum_{k=1}^{d}U_{j,k}x_k+B_j
			\]
			is normally distributed with variance $\sigma^2(1+x^2)$ since the sum of independent normal random variables is normally distributed with the sum of the expectations and variances as its parameters. Moreover, Since the coordinate in each weight $U_{j,k}$ and $B_j$ are mutually independent by our initialization assumption it is readily seen that $\E\pcc{\prod_{j\in A}\p{\inner{U_j,\bx}+B_j}}=0$ for any subset of indices $A\subseteq\{1,\ldots,r\}$. Since uncorrelatedness implies independence for multivariate normal variables, the lemma follows.
		\end{proof}

		The following proposition computes the expectation of the random features produced under Assumption~\ref{asm:init} when we truncate the bias terms.
		
		\begin{proposition}\label{prop:truncation_expectation}
			Suppose that the random variables $U\in\reals^d$ and $B\in\reals$ are distributed according to Assumption~\ref{asm:init}. Then if $B$ is truncated from below at $0$ we have for all $\bx\in\reals^d$ that
			\[
				\E_{U,B}\pcc{\erf\p{\inner{U,\bx}+B}} = \frac{2}{\pi}\arctan\p{\frac{1}{\sqrt{2+\norm{\bx}^2}}}.
			\]
			Further, if $B$ is instead truncated from below at $1/\sqrt{2}$ we have for all $\bx\in\reals^d$ that
			\[
				\E_{U,B}\pcc{\erf\p{\inner{U,\bx}+B}} = \frac{4}{1-\erf(1)} \p{T\p{\sqrt{2},\frac{1}{\sqrt{2+\norm{\bx}^2}}}}.
			\]
		\end{proposition}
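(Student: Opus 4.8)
The plan is to integrate out $U$ first. For a fixed $\bx$, the linear form $\inner{U,\bx}=\sum_{k=1}^d U_k x_k$ is distributed as $\Ncal(0,\tfrac14\norm{\bx}^2)$ and is independent of $B$ (truncating $B$ does not affect this), so both identities reduce to a one-dimensional integral of $\erf(cB)$ against the truncated density of $B$. The key enabling fact is a Gaussian-smoothing identity for the error function: for any $c\in\reals$ and $s>0$,
\[
\E_{G\sim\Ncal(0,s^2)}\pcc{\erf(G+c)} = \erf\p{\frac{c}{\sqrt{1+2s^2}}},
\]
which I would prove by writing $\erf(y)=2\Phi(\sqrt2\,y)-1$ for the standard normal CDF $\Phi$ and observing that, for an auxiliary $N\sim\Ncal(0,1)$ independent of $G$, $\E_G\pcc{\Phi(\sqrt2(G+c))}=\pr\pcc{N-\sqrt2\,G\le\sqrt2\,c}=\Phi\p{\sqrt2\,c/\sqrt{1+2s^2}}$ since $N-\sqrt2\,G\sim\Ncal(0,1+2s^2)$. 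Applying this with $G=\inner{U,\bx}$, $s^2=\tfrac14\norm{\bx}^2$ and $c=B$ gives $\E_U\pcc{\erf(\inner{U,\bx}+B)}=\erf\p{\sqrt2\,B/\sqrt{2+\norm{\bx}^2}}$, so it remains to average $\erf(cB)$ over the truncated $B$ with $c\coloneqq\sqrt2/\sqrt{2+\norm{\bx}^2}$.

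Next I would record the master integral $\int_0^\infty\erf(ax)\,e^{-b^2x^2}\,dx=\frac{1}{b\sqrt\pi}\arctan(a/b)$ for $a\ge0$, $b>0$, proved by differentiating under the integral sign in $a$ (justified by dominated convergence), which yields $\frac{\partial}{\partial a}\int_0^\infty\erf(ax)e^{-b^2x^2}\,dx=\frac{1}{\sqrt\pi(a^2+b^2)}$, and then integrating from $a=0$. For the first identity, the density of $B$ conditioned on $B\ge0$ is $2\sqrt{2/\pi}\,e^{-2b^2}$ on $[0,\infty)$; plugging this in and applying the master integral with $b=\sqrt2$ gives $\E\pcc{\erf(cB)}=2\sqrt{2/\pi}\cdot\frac{1}{\sqrt{2\pi}}\arctan(c/\sqrt2)=\frac2\pi\arctan\p{1/\sqrt{2+\norm{\bx}^2}}$, as claimed.

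For the second identity, the density of $B$ conditioned on $B\ge1/\sqrt2$ is $\frac{2}{1-\erf(1)}\sqrt{2/\pi}\,e^{-2b^2}$ on $[1/\sqrt2,\infty)$, since $\pr\pcc{B\ge1/\sqrt2}=\pr\pcc{\Ncal(0,1)\ge\sqrt2}=\tfrac12(1-\erf(1))$. After substituting $b=s/\sqrt2$, the task becomes evaluating $J(c')\coloneqq\int_1^\infty\erf(c's)\,e^{-s^2}\,ds$ with $c'\coloneqq c/\sqrt2=1/\sqrt{2+\norm{\bx}^2}$. Differentiating under the integral sign in $c'$ gives $J'(c')=\frac{e^{-(1+c'^2)}}{\sqrt\pi(1+c'^2)}$, hence $J(c')=\frac{1}{\sqrt\pi}\int_0^{c'}\frac{e^{-(1+t^2)}}{1+t^2}\,dt$, which is exactly $2\sqrt\pi\,T(\sqrt2,c')$ by the defining formula \eqref{eq:owens_T} of Owen's $T$ at $h=\sqrt2$ (for which $\tfrac12 h^2=1$). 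Collecting constants, $\E\pcc{\erf(cB)}=\frac{2}{1-\erf(1)}\cdot\frac{1}{\sqrt\pi}\cdot2\sqrt\pi\,T(\sqrt2,c')=\frac{4}{1-\erf(1)}\,T\p{\sqrt2,\,1/\sqrt{2+\norm{\bx}^2}}$.

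The routine parts are the constant bookkeeping and the justification of differentiating under the integral sign (standard, via dominated convergence using $|\erf|\le1$ and the Gaussian factor). The one genuinely useful observation is that the truncated Gaussian integral arising in the second case matches the defining integral of Owen's $T$ at precisely $h=\sqrt2$ — which is exactly why the truncation threshold $1/\sqrt2$ and the variance $\tfrac14$ in Assumption~\ref{asm:init} conspire to give a clean closed form; I expect no serious obstacle beyond carefully tracking these constants.
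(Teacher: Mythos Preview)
Your proof is correct and follows essentially the same approach as the paper: integrate out $U$ via the Gaussian-smoothing identity $\E_G[\erf(G+c)]=\erf(c/\sqrt{1+2s^2})$, then average over the truncated $B$. The only cosmetic difference is that the paper cites the indefinite integral $\int \Phi(cz)\phi(z)\,dz = -T(z,c)+\tfrac12\Phi(z)$ from Owen's tables and handles a generic truncation interval $[\alpha,\beta]$ before specializing (so both cases fall out of one formula, using $T(0,a)=\tfrac{1}{2\pi}\arctan a$ for the first), whereas you derive the two needed integrals directly by differentiation under the integral sign; your route is a bit more self-contained, the paper's a bit more unified, but the content is the same.
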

		
		\begin{proof}
			We will use of the following identities which appear in \citet{owen1980table}:
			\begin{align}
				& \int_{-\infty}^{\infty} \Phi(c+d z)\phi(z)dz = \Phi\p{\frac{c}{\sqrt{1+d^2}}}, \label{eq:Phi_phi}\\
				& \int \Phi(cz)\phi(z)dz= -T(z,c) + \frac{1}{2}\Phi(z),	\label{eq:owen_T_integral}\\
				& T(0,a) = \frac{1}{2\pi}\arctan(a)\label{eq:owent_zero},
			\end{align}
			where $T(\cdot,\cdot)$ is Owen's T function defined in \eqref{eq:owens_T}, $\phi(z)=(2\pi)^{-0.5}\exp(-0.5z^2)$ is the density function of a standard normal random variable and $\Phi(z)$ is its cumulative distribution function.
			
			Let $U,B$ as in the proposition statement. We have by \lemref{lem:dist_properties} that $\inner{U,\bx}=\norm{\bx}^2\cdot Y$ where $Y\sim\Ncal(0,\sigma^2)$. We compute the expectation iteratively, starting with the expectation over $Y$ by using the law of total expectation as follows
			\begin{equation}\label{eq:lote}
				\E_{Y,B}\pcc{\erf\p{\norm{\bx}^2Y+B}} = \E_{B}\pcc{\E_{Y|B}\pcc{\erf\p{\norm{\bx}^2Y+b|B=b}}}.
			\end{equation}
			Applying the law of unconscious statistician to $Y$, the inner expectation above equals
			\[
				\int_{-\infty}^{\infty}\erf\p{\norm{\bx}^2y+b}\frac{1}{\sigma}\phi\p{\frac{y}{\sigma}}dy,
			\]
			which by a simple change of variables, the identity $\erf(z)=2\Phi(\sqrt{2}z)-1$ and \eqref{eq:Phi_phi}, can be simplified to
			\begin{align*}
				\int_{-\infty}^{\infty}\erf\p{\norm{\bx}\sigma y+b}\phi\p{y}dy &= 2\int_{-\infty}^{\infty}\Phi\p{\sqrt{2}\norm{\bx}\sigma y+\sqrt{2}b}\phi\p{y}dy - \int_{-\infty}^{\infty}\phi(y)dy \\
				&= 2\int_{-\infty}^{\infty}\Phi\p{\sqrt{2}\norm{\bx}\sigma y+\sqrt{2}b}\phi\p{y}dy - 1\\
				&=2\Phi\p{\frac{\sqrt{2}b}{\sqrt{1+2\norm{\bx}^2\sigma^2}}} - 1.
			\end{align*}
			To compute the expectation over the bias term, assume we truncate its values (by setting the corresponding weight in the output neuron to zero) to the interval $[\alpha,\beta]$ for some $\beta>\alpha$. Then plugging the above in \eqref{eq:lote}, we have again from the law of unconscious statistician, the density of a truncated normal and a simple change of variables that this equals
			\begin{align*}
				&\E_{B}\pcc{2\Phi\p{\frac{\sqrt{2}B}{\sqrt{1+2\norm{\bx}^2\sigma^2}}} - 1}\\
				=& 2\int_{\alpha}^{\beta}\Phi\p{\frac{\sqrt{2}b}{\sqrt{1+2\norm{\bx}^2\sigma^2}}}\frac{1}{\sigma\cdot(\Phi(\beta/\sigma)-\Phi(\alpha/\sigma))}\phi\p{\frac{b}{\sigma}}db - 1\\
				=& \frac{2}{\Phi(\beta/\sigma)-\Phi(\alpha/\sigma)}\int_{\alpha/\sigma}^{\beta/\sigma}\Phi\p{\frac{\sqrt{2}\sigma b}{\sqrt{1+2\norm{\bx}^2\sigma^2}}}\phi\p{b}db - 1.
			\end{align*}
			From \eqref{eq:owen_T_integral}, the above equals
			\begin{align*}
				&\frac{2}{\Phi(\beta/\sigma)-\Phi(\alpha/\sigma)}\pcc{-T\p{z,\frac{\sqrt{2}\sigma}{\sqrt{1+2\norm{\bx}^2\sigma^2}}} + \frac{1}{2}\Phi(z)}_{\alpha/\sigma}^{\beta/\sigma} - 1\\
				=& \frac{2}{\Phi(\beta/\sigma)-\Phi(\alpha/\sigma)} \p{T\p{\frac{\alpha}{\sigma},\frac{\sqrt{2}\sigma}{\sqrt{1+2\norm{\bx}^2\sigma^2}}} - T\p{\frac{\beta}{\sigma},\frac{\sqrt{2}\sigma}{\sqrt{1+2\norm{\bx}^2\sigma^2}}}}\\
				=& \frac{4}{\erf(\beta/\sqrt{2}\sigma)-\erf(\alpha/\sqrt{2}\sigma)} \p{T\p{\frac{\alpha}{\sigma},\frac{\sqrt{2}\sigma}{\sqrt{1+2\norm{\bx}^2\sigma^2}}} - T\p{\frac{\beta}{\sigma},\frac{\sqrt{2}\sigma}{\sqrt{1+2\norm{\bx}^2\sigma^2}}}}.
			\end{align*}
			Plugging $\sigma=\frac12$ and taking the limit $\beta\to\infty$, the above reduces to
			\[
				\frac{4}{1-\erf(\sqrt{2}\alpha)} \p{T\p{2\alpha,\frac{1}{\sqrt{2+\norm{\bx}^2}}}},
			\]
			which for the special cases of $\alpha=0$ and $\alpha=1/\sqrt{2}$, by virtue of \eqref{eq:owent_zero}, equals
			\[
				\frac{2}{\pi}\arctan\p{\frac{1}{\sqrt{2+\norm{\bx}^2}}},
			\]
			and
			\[
				\frac{4}{1-\erf(1)} \p{T\p{\sqrt{2},\frac{1}{\sqrt{2+\norm{\bx}^2}}}}.
			\]
			
		\end{proof}
		
		The following proposition establishes some of the crucial properties of the functions we use to approximate ball indicators with random features.
		\begin{proposition}\label{prop:f_xi_properties}
			Let $\xi\in I$ where $I=[0.45,0.472]$ and define for $z\ge0$
			\[
				f_{\xi}(z) \coloneqq \frac{2}{\pi}\arctan\p{\frac{1}{\sqrt{2+z^2}}} - \xi\cdot\frac{4}{1-\erf(1)} \p{T\p{\sqrt{2},\frac{1}{\sqrt{2+z^2}}}}.
			\]
			Then $\set{f_{\xi}}_{\xi\in I}$ satisfy the following properties:
			\begin{enumerate}
				\item\label{item:1}
				For all $\xi\in I$, $\lim\limits_{z\to\infty}f_{\xi}(z)=0$.
				\item\label{item:2}
				For all $\lambda\in[1,2]$ there exists $\xi\in I$ such that $f_{\xi}(\lambda)=0$.
				\item\label{item:3}
				For all $\xi\in I$, $f_{\xi}(z)$ has a global minimum at
				\[
					z^*=\sqrt{\frac{1}{\ln\p{\frac{\xi}{1-\erf(1)}}-1}-2},
				\]
				where $z^*\in[2.8,4.2]$ for any $\xi\in I$. Moreover, $f_{\xi}(z)$ is decreasing for all $z\in(0,z^*)$ and increasing for all $z\in(z^*,\infty)$.
				\item\label{item:4}
				For all $\xi\in I$ and all $z\ge0$
				\[
					\frac{2}{\pi}\arctan\p{\frac{1}{\sqrt{2+z^2}}}\in[0,0.5]
				\]
				and
				\[
					\xi\cdot\frac{4}{1-\erf(1)} \p{T\p{\sqrt{2},\frac{1}{\sqrt{2+z^2}}}} \in [0,0.5].
				\]
				\item\label{item:5}
				For all $\xi\in I$ and all $z\ge10$,
				\[
					f_{\xi}(z)\le-\frac{1}{50z}.
				\]
				\item\label{item:6}
				For all $\xi\in I$ and all $z\in[0.9,2.1]$,
				\[
					\abs{\frac{\partial}{\partial z}f_{\xi}(z)} \ge \frac{1}{600}.
				\]
			\end{enumerate}
		\end{proposition}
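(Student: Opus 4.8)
The plan is to absorb the $z$-dependence into the single substitution $a=u(z):=(2+z^2)^{-1/2}$, which is a strictly decreasing bijection from $(0,\infty)$ onto $(0,1/\sqrt2)$, and to write $f_\xi(z)=g_\xi(u(z))$ with
\[
 g_\xi(a):=\frac{2}{\pi}\arctan(a)-\frac{4\xi}{1-\erf(1)}\,T\!\p{\sqrt2,a}.
\]
Differentiating the integral in \eqref{eq:owens_T} at $h=\sqrt2$ gives $\partial_a T(\sqrt2,a)=\frac{1}{2\pi}\cdot\frac{e^{-(1+a^2)}}{1+a^2}$, hence the clean closed form
\[
 g_\xi'(a)=\frac{2}{\pi(1+a^2)}\p{1-\frac{\xi}{1-\erf(1)}\,e^{-(1+a^2)}}.
\]
Items 1 and 4 are then immediate: as $z\to\infty$ we have $u(z)\to0$ and $\arctan(0)=0=T(\sqrt2,0)$; and for every $z\ge0$, $u(z)\in(0,1/\sqrt2]$, so $\frac2\pi\arctan(u(z))\le\frac2\pi\arctan(1/\sqrt2)<\tfrac12$, while $T(\sqrt2,\cdot)$ is increasing so the Owen's-$T$ term is at most $\frac{4\xi}{1-\erf(1)}T(\sqrt2,1/\sqrt2)$, which the upper bound in \lemref{lem:owen_T_lower_bound} together with $\xi\le0.472$ keeps below $\tfrac12$; both expressions are plainly nonnegative.

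For item 3, $g_\xi'(a)=0$ forces $e^{-(1+a^2)}=(1-\erf(1))/\xi$, i.e.\ $a=a^\star:=\sqrt{\ln\!\big(\xi/(1-\erf(1))\big)-1}$, and the bracket $1-\frac{\xi}{1-\erf(1)}e^{-(1+a^2)}$ is increasing in $a$, so $g_\xi'$ flips once from negative to positive at $a^\star$; the constraints $0.45\le\xi\le0.472$ guarantee $0<a^\star<1/\sqrt2$. Hence $g_\xi$ has a unique interior minimum at $a^\star$, and pulling back through the order-reversing map $u$ gives that $f_\xi$ decreases on $(0,z^\star)$, increases on $(z^\star,\infty)$, with global minimum at the unique $z^\star$ satisfying $u(z^\star)=a^\star$; solving $1/(2+(z^\star)^2)=(a^\star)^2$ reproduces the displayed formula. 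Since $(z^\star)^2=\big(\ln(\xi/(1-\erf(1)))-1\big)^{-1}-2$ is monotone in $\xi$, the bound $z^\star\in[2.8,4.2]$ reduces to evaluating it at $\xi=0.45$ and $\xi=0.472$.

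Item 2 is the delicate one. Solving $f_\xi(\lambda)=0$ (an affine equation in $\xi$ with negative slope) yields the unique root $\xi(\lambda):=\frac{(1-\erf(1))\arctan(u(\lambda))}{2\pi\,T(\sqrt2,u(\lambda))}$, and the goal is $\xi(\lambda)\in[0.45,0.472]$ for all $\lambda\in[1,2]$. First I would show $\xi(\cdot)$ is continuous and non-increasing: the ratio $a\mapsto\arctan(a)/T(\sqrt2,a)$ has derivative of the sign of $T(\sqrt2,a)-\frac1{2\pi}e^{-(1+a^2)}\arctan(a)$, which is $\ge0$ by the lower bound in \lemref{lem:owen_T_lower_bound}, so it is non-decreasing in $a$, hence $\xi(\lambda)$ is non-increasing in $\lambda$ (as $u$ decreases). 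It therefore suffices to verify $\xi(1)\le0.472$ and $\xi(2)\ge0.45$, i.e.\ to estimate $\arctan$ and Owen's $T$ at $u(1)=1/\sqrt3$ and $u(2)=1/\sqrt6$. Writing $T(\sqrt2,a)=\frac{e^{-1}}{2\pi}\int_0^a\frac{e^{-t^2}}{1+t^2}\,dt$ and comparing the Taylor series of $\arctan(a)$ and of $\int_0^a e^{-t^2}/(1+t^2)\,dt$ gives $\xi(\lambda)=e(1-\erf(1))\cdot\frac{\arctan(u(\lambda))}{\int_0^{u(\lambda)}e^{-t^2}/(1+t^2)\,dt}$, with the ratio equal to $1+\tfrac{a^2}{3}-\tfrac{7a^4}{90}+\cdots$, whose values at the two endpoints come out $\approx0.471$ and $\approx0.450$. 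Because $I$ is so narrow, the crude sandwich of \lemref{lem:owen_T_lower_bound} by itself is not sharp enough here, so this step needs honest error control on the truncated series (or a rigorous numerical quadrature for $T$); I expect this to be the main obstacle.

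Finally, items 5 and 6 are quantitative versions of item 3. For item 5, $z\ge10>z^\star$ puts us in the increasing branch of $f_\xi$ with $f_\xi\to0$, so $f_\xi(z)<0$; for the rate, bound the positive term by $\arctan(u(z))\le u(z)$ and the subtracted term from below via $T(\sqrt2,a)\ge\frac1{2\pi}e^{-(1+a^2)}(a-\tfrac{a^3}{3})$, giving $f_\xi(z)\le\frac{2u(z)}{\pi}\big(1-\tfrac{\xi(1-a^2/3)}{1-\erf(1)}e^{-(1+a^2)}\big)$ with $a=u(z)\le1/\sqrt{102}$; since $\xi\ge0.45$ the bracket is bounded above by a negative constant and $u(z)\ge1/(z\sqrt{1.02})$, which yields $f_\xi(z)\le-\tfrac1{50z}$. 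For item 6, $f_\xi'(z)=g_\xi'(u(z))\,u'(z)$ with $u'(z)=-z(2+z^2)^{-3/2}$; for $z\in[0.9,2.1]$ one has $u(z)\in[1/\sqrt{6.41},1/\sqrt{2.81}]$, which lies strictly above $a^\star$, so $g_\xi'(u(z))>0$, and substituting the endpoint estimates for the factors $\frac{2}{\pi(1+a^2)}\big(1-\tfrac{\xi}{1-\erf(1)}e^{-(1+a^2)}\big)$ and $|u'(z)|=z(2+z^2)^{-3/2}$ into the product gives a lower bound comfortably exceeding $\tfrac1{600}$. All of these steps use only the numerical value of $\erf(1)$ and the two-sided bounds on $T$ from \lemref{lem:owen_T_lower_bound}.
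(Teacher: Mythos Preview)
Your proposal is correct and follows essentially the same route as the paper's proof: the same monotonicity argument for $\xi(\lambda)$ via the lower bound in \lemref{lem:owen_T_lower_bound}, the same derivative computation and sign analysis for item~3, the same use of the Owen's-$T$ sandwich for items~4 and~5, and the same endpoint evaluation for item~6. Your substitution $a=u(z)=(2+z^2)^{-1/2}$ and the resulting clean form $g_\xi'(a)=\frac{2}{\pi(1+a^2)}\bigl(1-\tfrac{\xi}{1-\erf(1)}e^{-(1+a^2)}\bigr)$ is a nice cosmetic streamlining, but the paper works directly with $f_\xi'(z)$ and arrives at the equivalent factorization; likewise, for item~2 the paper simply defers the two endpoint checks $\xi(1),\xi(2)\in I$ to a symbolic computation package rather than attempting the series estimates you sketch, so your expectation that this step needs careful numerics is exactly what the paper does.
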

		
		\begin{proof}
			~
			\begin{enumerate}
				\item
				Since $\frac{1}{\sqrt{2+z^2}}\to0$ as $z\to\infty$, the limit follows from continuity and the definition of $T(\cdot,\cdot)$.
				\item
				Given $\lambda\in[1,2]$, since $T\p{\sqrt{2},\frac{1}{\sqrt{2+\lambda^2}}}>0$, we define
				\[
					\xi=g(\lambda)\coloneqq \frac{1-\erf(1)}{2\pi}\cdot\frac{\arctan\p{\frac{1}{\sqrt{2+\lambda^2}}}}{T\p{\sqrt{2},\frac{1}{\sqrt{2+\lambda^2}}}}.
				\]
				Clearly, $f_{\xi}(\lambda)=0$ by its definition. To bound the set of values $\xi$ takes for all $\lambda\in[1,2]$, we bound the quotient
				\[
					\frac{\arctan\p{\frac{1}{\sqrt{2+\lambda^2}}}}{T\p{\sqrt{2},\frac{1}{\sqrt{2+\lambda^2}}}}
				\]
				from below and above. It will suffice to show the monotonicity of this quotient in the domain $[1,2]$ and compute its values at the boundary. Since $(2+\lambda^2)^{-0.5}$ is monotone, it will suffice to show monotonicity after the change of variables $z\mapsto (2+\lambda^2)^{-0.5}$, which yields
				\[
					\frac{\arctan\p{z}}{T\p{\sqrt{2},z}}.
				\]
				We now need only show that the derivative of the above does not change its sign. Using the fundamental theorem of calculus, consider the numerator of the derivative of the quotient which is given by
				\[
					\frac{1}{1+z^2}T\p{\sqrt{2},z}-\frac{1}{1+z^2}\cdot\frac{1}{2\pi}\arctan(z)\exp(-(1+z^2)),
				\]
				which is non-negative by \lemref{lem:owen_T_lower_bound}. We now conclude by verifying that $g(1),g(2)\in I$ using a symbolic computation package.
				
				\item
				Using the chain rule and the fundamental theorem of calculus, we have
				\begin{align}
					\frac{\partial}{\partial z}f_{\xi}(z) &= -\frac{2}{\pi}\cdot\frac{z}{(3+z^2)\sqrt{2+z^2}} + \xi\cdot\frac{4}{2\pi(1-\erf(1))}\cdot \frac{z\exp\p{-\p{1+\frac{1}{2+z^2}}}}{(2+z^2)^{1.5}\p{1+\frac{1}{2+z^2}}}\nonumber\\
					&= \frac{2z}{\pi(1-\erf(1))\p{3+z^2}\sqrt{2+z^2}}\p{\xi\exp\p{-\p{1+\frac{1}{2+z^2}}} - \p{1-\erf(1)}}.\label{eq:f_xi_derivative}
				\end{align}
				To find the critical point $z^*$, equate the above to zero and solve for $z$ to get
				\[
					z^*=\sqrt{\frac{1}{\ln\p{\frac{\xi}{1-\erf(1)}}-1}-2},
				\]
				which for any $\xi\in I$ is contained inside the interval $[2.8,4.2]$. Lastly, since
				\[
					\exp\p{-\p{1+\frac{1}{2+z^2}}}
				\]
				is increasing in $z$ and since the fraction on the left-hand side of \eqref{eq:f_xi_derivative} is positive for positive $z$, we have from that the derivative is negative in $(0,z^*)$ and positive in $(z^*,\infty)$.
				
				\item
				We have
				\[
					0\le \frac{2}{\pi}\arctan\p{\frac{1}{\sqrt{2+z^2}}} \le \frac{2}{\pi}\arctan\p{\frac{1}{\sqrt{2}}} \le 0.5,
				\]
				since clearly the above is positive for all $z\ge0$.
				Moreover, by \lemref{lem:owen_T_lower_bound}
				\begin{align*}
					0&\le\xi\cdot\frac{4}{1-\erf(1)} \p{T\p{\sqrt{2},\frac{1}{\sqrt{2+z^2}}}}\\  &\le \xi\cdot\frac{4}{1-\erf(1)}\cdot\frac{\exp\p{-1}}{2\pi}\arctan\p{\frac{1}{\sqrt{2+z^2}}}\\ 
					&\le \xi\cdot\frac{4}{1-\erf(1)}\cdot\frac{\exp\p{-1}}{2\pi}\arctan\p{\frac{1}{\sqrt{2}}} \le 0.5,
				\end{align*}
				where the first inequality is immediate from the definition of $T(\cdot,\cdot)$ and in the last inequality we used $\xi\le0.472$.
				\item
				Using \lemref{lem:owen_T_lower_bound} to lower bound $T\p{\sqrt{2},\frac{1}{\sqrt{2+z^2}}}$, we have
				\begin{align*}
					f_{\xi}(z) &\le \frac{2}{\pi}\arctan\p{\frac{1}{\sqrt{2+z^2}}}\\
					&\hskip 1cm
					- \xi\cdot\frac{4}{1-\erf(1)} \p{\frac{1}{2\pi}\exp\p{-\p{1+\frac{1}{2+z^2}}}\arctan\p{\frac{1}{\sqrt{2+z^2}}}}\\
					&= \frac{2}{\pi}\arctan\p{\frac{1}{\sqrt{2+z^2}}}\p{1 - \frac{\xi}{1-\erf(1)}\exp\p{-\p{1+\frac{1}{2+z^2}}}}\\
					&\le \frac{2}{\pi}\arctan\p{\frac{1}{\sqrt{2+z^2}}}\p{1 -  \frac{0.45}{1-\erf(1)}\exp\p{-1.01}}\\
					&\le -\frac{0.08}{\pi}\arctan\p{\frac{1}{\sqrt{2+z^2}}} \le -\frac{0.064}{\pi\sqrt{2+z^2}} \le -\frac{0.02}{z},
				\end{align*}
				where the second inequality follows from $\xi\ge0.45$ and the assumption $z\ge10$ which implies $-\p{1+1/(2+z^2)} \ge -1.01$, the third inequality follows from $1 -  \frac{0.45}{1-\erf(1)}\exp\p{-1.01} \le -0.04$, the fourth from $\arctan(z)\ge 0.8z$ for all $z\in[0,2^{-0.5}]$, and the last inequality follows from $2\le 0.02z^2$ for any $z\ge10$ and from $0.064/(\sqrt{1.02}\pi) \ge 0.02$.
				
				\item
				Plugging $0.9$ or $2.1$ into \eqref{eq:f_xi_derivative}, depending on whether a certain expression in the derivative is monotonically increasing or decreasing, we can lower bound the derivative in absolute value by
				\[
				\abs{\frac{2\cdot0.9}{\pi(1-\erf(1))(3+2.1^2)\sqrt{2+2.1^2}}\p{0.472\exp\p{-\p{1+\frac{1}{2+2.1^2}}} - (1-\erf(1))}}
				\]
				where we also used the fact that $\xi\le0.472$. The lower bound then follows by verifying that the above is at least $0.00169$. 
			\end{enumerate}
		\end{proof}
		With the above propositions, we can now prove \thmref{thm:random_features}.
	
		\begin{proof}[Proof of \thmref{thm:random_features}]
			We first sample $\bw_0=(w_{1,0},\ldots,w_{r,0})$ according to Assumption~\ref{asm:init}, and then we proceed to define $\bv$ in the following manner: Split the $r$ neurons in the hidden layer into two equally sized groups, and set $v_j=r^{-0.75}$ for all the neurons in the first group and $v_j=-\xi_{\lambda}\cdot r^{-0.75}$ in the second group where $\xi_{\lambda}\in[0.45,0.472]$ is a constant that depends on $\lambda$ and is to be specified later. Next, we describe which undesired random features get truncated (by setting their corresponding $v_j$ to $0$). For each hidden neuron in the first group, we truncate bias terms with value below $0$, and for the second group we truncate bias terms below $1/\sqrt{2}$. Furthermore, if the sign of any weight $w_j$ in the output neuron's initialization disagrees with the sign of the corresponding value we had set for $v_i$, then it is also truncated.
			
			Before we proceed to prove the theorem, we will introduce some notation to be used throughout the remainder of the proof. We let $A_1,A_2$ denote the indices of the neurons that do not get truncated in the first and second groups, respectively. For some $j\in[r]$, we let $W_j$ denote the random variable which is the $j$-th coordinate of $\bw_0$. Fix some $i\in[n]$ and let $Y_{j,k}=\erf\p{\inner{U_j,\bx_i}+B_{j,a}}$ denote the random variable which is the output of the $j$-th neuron in the $k$-th group on the $i$-th data instance, where $B_{j,a}$ is a normal random variable with zero mean and variance $0.25$ which is truncated from below at $a$ (i.e.\ $a=0$ for $k=1$ and $a=2^{-0.5}$ for $k=2$). Let $\bv=(v_1,\ldots,v_r)$ denote the coordinates of $\bv$.
			
			Next, we restate the inequalities assumed in the statement of the theorem to be used throughout the proof. Observe that after some calculations, the assumed bound on $r$ implies the following inequalities (if we also assume $n\ge4$)
			\begin{align}
				&r^{-0.25}\le\frac{1}{12000},  \label{eq:lipschitz_deviation}\\
				&10\sqrt{\frac{\ln\p{\frac{8n}{\delta}}}{r}} \le 10\sqrt{\frac{\ln\p{\frac{8n}{\delta}}}{\sqrt{r}}} \le \frac{1}{500} < 1,\label{eq:confidence_bound}\\
				&-\frac{1}{50c_1}\le-10\sqrt{\frac{\ln\p{\frac{8n}{\delta}}}{\sqrt{r}}},\label{eq:c1_bound}\\
				&\sqrt{\frac{\ln\p{\frac{32}{\delta}}}{r}} \le 0.02, \label{eq:non_truncated_deviation}\\
				&1.2^{-r} \le \frac{\delta}{16}. \label{eq:chi_squared_bound_assumption}
			\end{align}
			
			We begin the proof with showing the first item in the theorem statement, where the upper bound follows simply from
			\[
				\norm{\bv}^2 = \sum_{i=1}^{r}v_i^2 \le \sum_{i=1}^{r}r^{-1.5}=\frac{1}{\sqrt{r}},
			\]
			where the inequality is by the definition of $\xi$ which satisfies $\xi\in[0.45,0.472]$ (and appears at a later stage of the proof) and the fact that truncating any coordinate $v_j$ only decreases the norm of $\bv$. Lower bounding $\norm{\bv}$ requires that we first show that the number of neurons that do not get truncated is large enough. To this end, we first observe that the probability of neurons in the first group to not get truncated is exactly $\frac14$. This is because we obtain a bias term realization that is negative with probability $0.5$, the sign of the corresponding coordinate in $\bw_0$ is positive with probability $0.5$ by Assumption~\ref{asm:init}, and since the two events are independent. Similarly, we have that the probability of a neuron in the second group to not get truncated is $0.25-0.25\erf(1)\ge0.03$. This guarantees that with high probability, the number of untruncated neurons in each group is at least a constant fraction of $r$: We have from Hoeffding's inequality that for each group that
			\[
				\pr\pcc{\abs{\frac{2}{r}\sum_{j=1}^{r/2}\one{v_j\neq 0} - \E\pcc{\one{v_j \neq 0}}} > \sqrt{\frac{\ln(32/\delta)}{r}}} \le \frac{\delta}{16},
			\]
			where the expectations for the first and second groups are at least $0.25,0.03$, respectively, since the expectation of an indicator is the probability of the event. By \eqref{eq:non_truncated_deviation} this implies
			\[
				\pr\pcc{\frac{2}{r}\sum_{i=1}^{r/2}\one{v_i\neq0} \ge 0.01 } \ge 1-\frac{\delta}{16}.
			\]
			Taking a union bound over the two truncation groups, we have that 
			\begin{equation}\label{eq:non_truncated_percentage}
				\pr\pcc{\min\set{\abs{A_1},\abs{A_2}}\ge\frac{r}{200}} \ge 1-\frac{\delta}{8}.
			\end{equation}
			
			Assuming the above holds, we can derive the lower bound on $\norm{\bv}$. Focusing on the neurons in the first truncation group, we have by Assumption~\ref{asm:init} that for any $j\in A_1$, 
			\[
				\p{W_j\mid v_j\neq0} \sim \p{W_j\mid W_j>0} \sim \abs{W_j}.
			\]
			That is, given the knowledge that a weight $\bv_j$ did not get truncated, its corresponding weight in the hidden layer $W_j$ follows a half-normal distribution. This is true since the bias term $B_j$ and $W_j$ are independent by Assumption~\ref{asm:init}, and since $v_j\neq0$ implies that the realization of $W_j$ was positive which by symmetry implies it is half-normally distributed. Since the PDF of a half-normal random variable is $\erf\p{\frac{x}{\sigma\sqrt{2}}}$, we have that $\pr\pcc{\abs{W_j}\ge\frac{1}{r}}>0.3$. We can now use Hoeffding's inequality on all the neurons in $A_1$ (note that they are i.i.d.\ even after we are given the knowledge that they were not truncated) and obtain
			\[
				\pr\pcc{\abs{\frac{1}{\abs{A_1}}\sum_{j\in A_1}\one{\abs{W_j}\ge\frac{1}{r}} - \E\pcc{\one{\abs{W_j}\ge\frac{1}{r}}}} > 10\sqrt{\frac{\ln(32/\delta)}{r}}} \le \frac{\delta}{16},
			\]
			which by Eqs.~(\ref{eq:non_truncated_deviation},\ref{eq:non_truncated_percentage}) implies
			\[
					\pr\pcc{\abs{\frac{1}{\abs{A_1}}\sum_{j\in A_1}\one{\abs{W_j}\ge\frac{1}{r}}} \le 0.1} \le \frac{\delta}{16},
			\]
			or equivalently using \eqref{eq:non_truncated_percentage} again
			\begin{equation}\label{eq:w_magnitude_bound}
				\pr\pcc{\abs{\sum_{j\in A_1}\one{\abs{W_j}\ge\frac{1}{r}}} > \frac{\abs{A_1}}{10}\ge \frac{r}{2000}} \ge 1-\frac{\delta}{16}.
			\end{equation}
			With the above at hand we can lower bound $\inner{\bw_0,\bv}$ in a straightforward manner; by Assumption~\ref{asm:init} and the construction of $\bv$, we have that $v_j=r^{-0.75}$ and $w_j\ge\frac1r$ for at least $r/2000$ neurons, therefore
			\begin{equation}\label{eq:dot_product_bound}
				\inner{\bw_0,\bv}\ge \frac{r}{2000}\cdot\frac{1}{r^{0.75}}\cdot\frac1r = \frac{1}{2000r^{0.75}}.
			\end{equation}
			Upper bounding $\norm{\bw_0}^2$, we use a standard bound on Chi-squared random variables to obtain
			\[
				\pr\pcc{\norm{\bw_0}^2\ge \frac2r} \le \p{2\exp(-1)}^{r/2} \le 1.2^{-r} \le \frac{\delta}{16},
			\]
			where the last inequality follows from \eqref{eq:chi_squared_bound_assumption}. Equivalently, the above can be stated as
			\begin{equation}\label{eq:chi_squared_bound}
				\pr\pcc{\norm{\bw_0}^2 < \frac2r}\ge 1 - \frac{\delta}{16},
			\end{equation}
			which along with \eqref{eq:dot_product_bound} yields
			\[
				\norm{\bw_0}^2 < \frac{2}{r} \le \frac{1}{10^3r^{0.75}} \le 2\inner{\bw_0,\bv},
			\]
			where the second inequality follows from \eqref{eq:lipschitz_deviation}. Adding $\norm{\bv}^2$ to both sides of the above inequality and rearranging, the lower bound on $\norm{\bv}^2$ follows.
			
			Moving to the second item in the theorem statement, we now wish to define the intervals $I_1,\ldots,I_4$. To this end, we first define the functions
			\[
				f_{\xi_{\lambda}}(z) \coloneqq \frac{2}{\pi}\arctan\p{\frac{1}{\sqrt{2+z^2}}} - \xi_{\lambda}\cdot\frac{4}{1-\erf(1)} \p{T\p{\sqrt{2},\frac{1}{\sqrt{2+z^2}}}},
			\]
			for $\xi_{\lambda}\in[0.45,0.472]$ that depend on $\lambda$ so that $f_{\xi_{\lambda}}(\lambda)=0$ for all $\lambda\in[1,2]$ (see Item \ref{item:2} in \propref{prop:f_xi_properties}). By Item \ref{item:3} in \propref{prop:f_xi_properties}, we can define two inverse functions which we denote by $f^{-1}_{\xi_{\lambda},1}(z),f^{-1}_{\xi_{\lambda},2}(z)$ on the intervals $(0,z^*)$ and $(z^*,\infty)$, respectively, where it is guaranteed that $z^*\in[2.8,4.2]$.
			We can now define the intervals $I_1,\ldots,I_4$. Starting with $I_1,I_2$, we let
			\[
				I_1\coloneqq\pcc{0, f^{-1}_{\xi_{\lambda},1}\p{2r^{-0.25}}}, \hskip 0.5cm I_2\coloneqq\p{ f^{-1}_{\xi_{\lambda},1}\p{2r^{-0.25}},  f^{-1}_{\xi_{\lambda},1}\p{-r^{-0.25}}}.
			\]
			We remark that the above intervals are well-defined since by Items \ref{item:2} and \ref{item:6} in \propref{prop:f_xi_properties}, we have $f_{\xi_{\lambda}}(0.9)\ge\frac{1}{6000}$ and $f_{\xi_{\lambda}}(2.05)\le-\frac{1}{12000}$, which along with \eqref{eq:lipschitz_deviation} imply that
			\[
				0.9\le f^{-1}_{\xi_{\lambda},1}\p{2r^{-0.25}} < f^{-1}_{\xi_{\lambda},1}\p{-r^{-0.25}}  \le 2.05.
			\]
			To define $I_3$ and $I_4$, we observe that the above also guarantees that $z^*\notin I_1\cup I_2$, and from Item \ref{item:1} in \propref{prop:f_xi_properties} we have that the image of $f^{-1}_{\xi_{\lambda},2}(\cdot)$ is $(0,f^{-1}_{\xi_{\lambda},2}(z^*))$. We can thus define $I_3$ and $I_4$ as the intervals given by
			\[
				I_3\coloneqq \pcc{f^{-1}_{\xi_{\lambda},1}\p{-r^{-0.25}}, f^{-1}_{\xi_{\lambda},2}\p{-10\sqrt{\frac{\ln\p{\frac{8n}{\delta}}}{r}}}}, \hskip 0.5cm
				I_4\coloneqq \p{f^{-1}_{\xi_{\lambda},2}\p{-10\sqrt{\frac{\ln\p{\frac{8n}{\delta}}}{r}}}, \infty}.
			\]
			To see why the above intervals are well-defined, we use Item \ref{item:5} in \propref{prop:f_xi_properties} and \eqref{eq:confidence_bound} to deduce that
			\begin{equation}\label{eq:inequality_chain}
				f_{\xi_{\lambda}}(10) \le -\frac{1}{500} \le -10\sqrt{\frac{\ln\p{\frac{8n}{\delta}}}{\sqrt{r}}} \le -10\sqrt{\frac{\ln\p{\frac{8n}{\delta}}}{r}}.
			\end{equation}
			Since $f_{\xi_{\lambda,2}}^{-1}(\cdot)$ is increasing by its definition (and defined for $z=10$ since $10\ge z^*$), we can apply it to the above inequality to obtain
			\begin{equation}\label{eq:z_star_in_I3}
				10\le f_{\xi_{\lambda,2}}^{-1}\p{-10\sqrt{\frac{\ln\p{\frac{8n}{\delta}}}{r}}},
			\end{equation}
			which verifies that $I_3,I_4$ are well-defined.

			We now turn to show that with high probability, the fraction of data instances with norm in $I_2\cup I_4$ decays to zero as $r$ grows. Recall that $I_2\subseteq[0.9,2.05]$, we can upper bound the length of $I_2$ using Item \ref{item:6} in \propref{prop:f_xi_properties} by $1800r^{-0.25}$. Using Assumption~\ref{asm:d_dist_assumption}, let $q$ denote the polynomial bounding the density $\gamma_d$ on $I_2$. Then
			\begin{equation}\label{eq:I2_upper_bound}
				\pr_{\bx\sim\Dcal_d}\pcc{\norm{\bx}\in I_2} \le 1800q(d)r^{-0.25}.
			\end{equation}
			To bound $\pr_{\bx\sim\Dcal_d}\pcc{\norm{\bx}\in I_4}$, apply $f_{\xi_{\lambda,2}}^{-1}(\cdot)$ to the inequality in \eqref{eq:inequality_chain} (ignoring the last term) to get
			\begin{equation}\label{eq:10_lower_bound}
				f_{\xi_{\lambda,2}}^{-1}\p{-10\sqrt{\frac{\ln\p{\frac{8n}{\delta}}}{\sqrt{r}}}} \ge 10.
			\end{equation}
			We now argue that the constant $c_1$ from Assumption~\ref{asm:d_dist_assumption} satisfies
			\begin{equation}\label{eq:c1_upper_bound}
				c_1 \le f_{\xi_{\lambda,2}}^{-1}\p{-10\sqrt{\frac{\ln\p{\frac{8n}{\delta}}}{\sqrt{r}}}}.
			\end{equation}
			If $c_1\le10$, the above is immediate from \eqref{eq:10_lower_bound}. Otherwise, we can use \eqref{eq:c1_bound} and Item \ref{item:5} in \propref{prop:f_xi_properties} to deduce that
			\[
				f(c_1)\le -\frac{1}{50c_1} \le -10\sqrt{\frac{\ln\p{\frac{8n}{\delta}}}{\sqrt{r}}},
			\]
			where \eqref{eq:c1_upper_bound} follows by applying the increasing function $f_{\xi_{\lambda},2}^{-1}$ to both sides of the inequality. Using \eqref{eq:c1_upper_bound} and Assumption~\ref{asm:d_dist_assumption}, we have the following bound
			\begin{equation}\label{eq:I4_upper_bound}
				\pr_{\bx\sim\Dcal_d}\pcc{\norm{\bx}\in I_4} \le c_2\p{f_{\xi_{\lambda,2}}^{-1}\p{-10\sqrt{\frac{\ln\p{\frac{8n}{\delta}}}{\sqrt{r}}}}}^{-1}.
			\end{equation}
			To bound the expression above, note that \eqref{eq:confidence_bound} entails
			\[
				\frac{r^{0.25}}{500\sqrt{\ln\p{\frac{8n}{\delta}}}}\ge10,
			\]
			which allows us to use Item \ref{item:5} in \propref{prop:f_xi_properties} to deduce that
			\[
				f_{\xi_{\lambda}}\p{\frac{r^{0.25}}{500\sqrt{\ln\p{\frac{8n}{\delta}}}}} \le -10\sqrt{\frac{\ln\p{\frac{8n}{\delta}}}{\sqrt{r}}}.
			\]
			Applying the increasing function $f_{\xi_{\lambda},2}^{-1}$ to both sides of the inequality above and rearranging while noting that $f_{\xi_{\lambda,2}}^{-1}(z)>0$ for all $z$ in its domain which does not change the sign upon division, we have
			\[
				\frac{1}{f_{\xi_{\lambda,2}}^{-1}\p{-10\sqrt{\ln\p{\frac{8n}{\delta}}/\sqrt{r}}}} \le \frac{500\sqrt{\ln\p{\frac{8n}{\delta}}}}{r^{0.25}},
			\]
			which when plugged in \eqref{eq:I4_upper_bound} and combined with \eqref{eq:I2_upper_bound} using a union bound implies
			\[
				\pr_{\bx\sim\Dcal_d}\pcc{\norm{\bx}\in I_2\cup I_4} \le p \coloneqq \frac{1800q(d)+500c_2\sqrt{\ln\p{\frac{8n}{\delta}}}}{r^{0.25}}.
			\]
			Since the data instances are i.i.d., we can use Hoeffding's inequality to bound the probability of getting a significantly larger portion of the data in the intervals $I_2,I_4$ as follows
			\[
				\pr_{\bx_i\sim\Dcal_d^n}\pcc{\abs{\frac1n\sum_{i=1}^{n}\one{\norm{\bx_i}\in I_2\cup I_4} - \E\pcc{\one{\norm{\bx_i}\in I_2\cup I_4}}} > \sqrt{\frac{\ln(8/\delta)}{2n}}} \le \frac{\delta}{4},
			\]
			which by the assumption $\sqrt{\frac{\ln(8/\delta)}{2n}} \le p$ in the theorem statement implies
			\begin{equation}\label{eq:I2_I4_pr_bound}
				\pr_{\bx_i\sim\Dcal_d^n}\pcc{\sum_{i=1}^{n}\one{\norm{\bx_i}\in I_2\cup I_4} > 2pn} \le \frac{\delta}{4}.
			\end{equation}
			To show that with high probability, data instances with norm in $I_1\cup I_3$ are classified correctly and data instances in $I_2\cup I_4$ have a bounded misclassification margin, we would need to obtain sufficient concentration of the hidden neurons around their means.	We thus now turn to bound the uniform deviation over the data for each group of neurons from its expected value. Using Hoeffding's inequality on the random variables $Y_{i,k}$ for each $k$ separately (which is justified since given $\norm{\bx_j}$, the outputs of the neurons in the hidden layer are mutually independent by \lemref{lem:dist_properties}), we have from Item \ref{item:4} in \propref{prop:f_xi_properties},  that $\pr\pcc{Y_{i,1}\in[0,0.5]}=1$, and therefore
			\[
				\pr\pcc{\abs{\sum_{i\in A_1}r^{-0.75}Y_{i,1} - r^{-0.75}\E\pcc{Y_{i,1}}} >\sqrt{\frac{25\ln(8n/\delta)}{\sqrt{r}}} } \le \frac{\delta}{4n}.
			\]
			Likewise, from a similar bound on the average of $Y_{i,2}$ where $\pr\pcc{\xi_{\lambda} Y_{i,2}\in[0,0.5]}=1$ by Item \ref{item:4} in \propref{prop:f_xi_properties}, we have
			\[
				\pr\pcc{\abs{\sum_{i\in A_2}\xi_{\lambda}r^{-0.75}Y_{i,2} - \xi_{\lambda}r^{-0.75}\E\pcc{Y_{i,2}}} >\sqrt{\frac{25\ln(8n/\delta)}{\sqrt{r}}} } \le \frac{\delta}{4n}.
			\]
			Combining the above two bounds using a union bound and the triangle inequality and taking another union bound over all the $n$ instances in the data, it then follows from \propref{prop:truncation_expectation} that
			\begin{equation}\label{eq:ball_uniform_bound}
				\pr\pcc{\sup_{j\in[n]}\abs{\bv^{\top}\tilde{\bx}_j - r^{0.25}f_{\xi_{\lambda}}(\norm{\bx_j})} \le 10\sqrt{\frac{\ln(8n/\delta)}{\sqrt{r}}}} \ge 1-\frac{\delta}{2}.
			\end{equation}
			It only remains to show that $\crelu{\bv^{\top}\tilde{\bx}_j}=\one{\norm{\bx_j}\le \lambda}=y_j$ for any $\bx_j$ such that $\norm{\bx_j}\in I_1\cup I_3$, and that $\abs{\bv^{\top}\tilde{\bx}_j}\le3$ for any $\bx_j$ such that $\norm{\bx_j}\in I_2\cup I_4$. To this end, we will consider each interval separately:
			\begin{itemize}
				\item
				Suppose that $\norm{\bx_j}\in I_1$, then since $f_{\xi_{\lambda}}$ is decreasing on $I_1$ we have that $r^{0.25}f_{\xi_{\lambda}}(\norm{\bx_j})\ge r^{0.25}f_{\xi_{\lambda}}\p{f_{\xi_{\lambda,1}}^{-1} \p{2r^{-0.25}}}=2$. Combining this with Eqs.~(\ref{eq:confidence_bound},\ref{eq:ball_uniform_bound}) we have
				\[
					\bv^{\top}\tilde{\bx}_j \ge r^{0.25}f_{\xi_{\lambda}}(\norm{\bx_j}) - 10\sqrt{\frac{\ln(8n/\delta)}{\sqrt{r}}} \ge 2 - 1 = 1.
				\] 
				We thus have
				\[
					\crelu{\bv^{\top}\tilde{\bx}_j} = 1 =  \one{\norm{\bx_j}\le \lambda} = y_j.
				\]
				
				\item
				Suppose that $\bx_j\in I_2$, then since $f_{\xi_{\lambda}}$ is decreasing on $I_2$ we have that
				\[
					r^{0.25}f_{\xi_{\lambda}}(\norm{\bx_j}) \in (-1,2),
				\]
				which together with Eqs.~(\ref{eq:confidence_bound},\ref{eq:ball_uniform_bound}) implies
				\[
					\bv^{\top}\tilde{\bx}_j \in (-2,3).
				\]
				\item
				Suppose that $\bx_j\in I_3$. Since $z^*\in I_3$ by \eqref{eq:z_star_in_I3}, we have that $f_{\xi_{\lambda}}$ is decreasing and then increasing on $I_3$ by Item \ref{item:3} in \propref{prop:f_xi_properties}. Therefore, $f_{\xi_{\lambda}}$ attains its maximum over $I_3$ at the boundary, and we have
				\begin{align*}
					f_{\xi_{\lambda}}(\norm{\bx_j}) &\le \max\set{f_{\xi_{\lambda}}\p{f^{-1}_{\xi_{\lambda},1}\p{-r^{-0.25}}}, f_{\xi_{\lambda}}\p{f^{-1}_{\xi_{\lambda},2}\p{-10\sqrt{\frac{\ln\p{\frac{8n}{\delta}}}{r}}}}}\\
					&= \max\set{-r^{-0.25}, -10\sqrt{\frac{\ln\p{\frac{8n}{\delta}}}{r}}}.
				\end{align*}
				From the above and Eqs.~(\ref{eq:confidence_bound},\ref{eq:ball_uniform_bound}), we have
				\[
					\bv^{\top}\tilde{\bx}_j \le r^{0.25}f_{\xi_{\lambda}}(\norm{\bx_j}) + 10\sqrt{\frac{\ln\p{\frac{8n}{\delta}}}{\sqrt{r}}} \le \max\set{-1, -10\sqrt{\frac{\ln\p{\frac{8n}{\delta}}}{\sqrt{r}}}} + 10\sqrt{\frac{\ln\p{\frac{8n}{\delta}}}{\sqrt{r}}} \le 0,
				\]
				implying that
				\[
					\crelu{\bv^{\top}\tilde{\bx}_j} = 0 =  \one{\norm{\bx_j}\le \lambda} = y_j.
				\]
				\item
				Suppose that $\bx_j\in I_4$, then since $f_{\xi_{\lambda}}$ is increasing on $I_4$ we have that
				\[
					r^{0.25}f_{\xi_{\lambda}}(\norm{\bx_j}) \in (-1,0),
				\]
				which together with Eqs.~(\ref{eq:confidence_bound},\ref{eq:ball_uniform_bound}) implies
				\[
					\bv^{\top}\tilde{\bx}_j \in (-2,1).
				\]
			\end{itemize}
			We conclude the proof of \thmref{thm:random_features} by applying a union bound to the events in Eqs.~(\ref{eq:non_truncated_percentage},\ref{eq:w_magnitude_bound},\ref{eq:chi_squared_bound},\ref{eq:I2_I4_pr_bound},\ref{eq:ball_uniform_bound}).
		\end{proof}

		
		
		\section{Proofs from \subsecref{subsec:well-behaved_random_features}}\label{app:well-behaved_random_features_proof}
		
		The proof of \thmref{thm:well-behaved_random_features} relies on the following lemmas. The first lemma below provides a technical inequality which allows us to lower bound the density of the distribution of our random features by a log-concave density.
		
		\begin{lemma}\label{lem:monotonicity}
			The function
			\[
				g(x,z)=\frac{\exp\p{\ierf(z)^2\p{1-\frac{2}{1+x^2}}}}{\sqrt{2+2x^2}}
			\]
			restricted to the domain $(x,z)\in\reals\times[-1/9,1/9]$ is non-increasing in $x$.
		\end{lemma}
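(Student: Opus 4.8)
The plan is to reduce the two-variable statement to a one-variable monotonicity check. Since $g(x,z)$ depends on $x$ only through $x^2$, it is even in $x$, so it suffices to prove that for each fixed $z\in[-1/9,1/9]$ the map $x\mapsto g(x,z)$ is non-increasing on $[0,\infty)$; writing $t\coloneqq x^2\ge 0$ and $c\coloneqq\ierf(z)^2\ge 0$, this is the same as showing that
\[
h_c(t)\coloneqq\frac{\exp\p{c\p{1-\frac{2}{1+t}}}}{\sqrt{2+2t}}
\]
is non-increasing in $t$ on $[0,\infty)$.

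Next I would pass to logarithms and differentiate. We have $\ln h_c(t)=c-\frac{2c}{1+t}-\half\ln 2-\half\ln(1+t)$, hence
\[
\frac{d}{dt}\ln h_c(t)=\frac{2c}{(1+t)^2}-\frac{1}{2(1+t)},
\]
which is $\le 0$ precisely when $4c\le 1+t$. Since $t\ge 0$, it therefore suffices to verify the uniform bound $4c\le 1$, i.e.\ $\ierf(z)^2\le\tfrac14$ for every $z\in[-1/9,1/9]$.

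The remaining step is an elementary estimate on $\ierf$. Using that $\ierf$ is odd and increasing, $|\ierf(z)|\le\ierf(1/9)$ for $|z|\le 1/9$, and $\ierf(1/9)<\half$ is equivalent to $\erf(\half)>\tfrac19$, which follows from
\[
\erf\p{\half}=\frac{2}{\sqrt{\pi}}\intt_0^{1/2}e^{-s^2}\,ds\ \ge\ \frac{2}{\sqrt{\pi}}\cdot\half\cdot e^{-1/4}=\frac{e^{-1/4}}{\sqrt{\pi}}>\frac19 .
\]
Hence $c=\ierf(z)^2\le\ierf(1/9)^2<\tfrac14\le 1+t$ for all $t\ge 0$, so $\frac{d}{dt}\ln h_c(t)<0$, i.e.\ $h_c$ is (strictly) decreasing; unwinding the substitution gives the lemma. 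There is essentially no obstacle here: the only point requiring a little care is the numerical bound on $\ierf(1/9)$, but the gap between $\ierf(1/9)\approx 0.099$ and $1/2$ is so large that any crude lower bound on $\erf(1/2)$ — or a symbolic computation as used elsewhere in the paper — closes it comfortably.
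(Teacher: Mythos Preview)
Your core argument is correct and is essentially the paper's approach: differentiate (you pass to $t=x^2$ and take logarithms, the paper differentiates in $x$ directly), then observe that the sign condition reduces to a smallness bound on $\ierf(z)^2$ for $|z|\le 1/9$, which you verify explicitly via $\erf(1/2)>1/9$ where the paper simply asserts $\ierf(z)^2\le 0.01$.

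One slip to flag: your opening reduction is misstated. Evenness in $x$ together with ``non-increasing on $[0,\infty)$'' does \emph{not} yield ``non-increasing on $\reals$''; it yields non-decreasing on $(-\infty,0]$. In fact the lemma as literally stated (with $x\in\reals$) is false for $x<0$, since $g$ is even and non-constant. The paper's own proof has the same gap: its derivative carries an overall factor of $x$, so the displayed sign argument only works for $x\ge 0$. This is harmless for the application---the lemma is invoked only with $x=\norm{X}\ge 0$---but you should restate the reduction as ``it suffices to treat $x\ge 0$, which is the only case used'' rather than appeal to evenness.
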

		
		\begin{proof}
			Differentiating with respect to $x$ we obtain (by hand or by using a symbolic computation package)
			\[
			\frac{\partial}{\partial x}g(x,z) = \frac{x\exp\p{\ierf(z)^2\p{1-\frac{2}{1+x^2}}}\p{2\ierf(z)^2 - 0.5(1+x^2)}}{0.5^3(2+2x^2)^{2.5}}.
			\]
			The above is non-positive if
			\[
			2\ierf(z)^2 - 0.5(1+x^2) \le 0.
			\]
			It can be verified that $\sup_{z\in[-1/9,1/9]}\ierf(z)^2\le0.01$, and therefore
			\[
				2\ierf(z)^2 - 0.5(1+x^2) \le 0.02 - 0.5 < 0,
			\]
			which concludes the proof of the lemma.
		\end{proof}
		
		The following lemma leverages the analysis done in \citet{lovasz2007geometry} to assert that log-concave distributions have well-behaved two-dimensional marginal distributions in any directions.
		
		\begin{lemma}\label{lem:logconcave_density}
			Suppose that $\tilde{X}_1,\ldots,\tilde{X}_r$ are i.i.d.\ log-concave random variables with $\E[\tilde{X}_1]=0$ and $\E[\tilde{X}_1^2]=\rho^2$. Then the marginal distribution of the random vector $(\tilde{X}_1,\ldots,\tilde{X}_r)$ on the subspace spanned by any $\bw_1\ne \bw_2$ has a (two-dimensional) density $p(\cdot)$ satisfying
			\[
			\inf_{\bx:\norm{\bx}\le\frac{1}{9\rho}}p(\bx)\ge \rho^22^{-16}.
			\]
		\end{lemma}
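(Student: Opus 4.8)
The plan is to recognise $p$ as a two–dimensional marginal of an isotropic log-concave density on $\reals^r$ and then to quote the quantitative estimates for log-concave functions from \citet{lovasz2007geometry}, whose point is precisely that, once one has passed to a two–dimensional section, the bound no longer sees the ambient dimension $r$.

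First, a scaling argument reduces matters to the isotropic case: replacing each $\tilde X_i$ by $\rho^{-1}\tilde X_i$ and keeping track of the Jacobian of this dilation yields the stated $\rho$-dependence, so we may assume $\E[\tilde X_1^2]=1$, in which case the claim reads $\inf_{\norm{\bx}\le 1/9}p(\bx)\ge 2^{-16}$. Since the $\tilde X_i$ are i.i.d.\ with log-concave densities $f_i$, their joint density $g(\bx)=\prod_{i=1}^r f_i(x_i)$ on $\reals^r$ is log-concave (its logarithm is a sum of functions each concave in a single coordinate, hence concave on $\reals^r$), integrates to $1$, has barycentre $0$, and — using independence together with $\E[\tilde X_1]=0$ and $\E[\tilde X_1^2]=1$ — has covariance $I_r$. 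Thus $g$ is an isotropic log-concave density on $\reals^r$.

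Next I would pass to the marginal. Fix an orthonormal basis $\be_1,\be_2$ of $\mathrm{span}(\bw_1,\bw_2)$ and complete it to an orthonormal basis of $\reals^r$; writing the random vector $(\tilde X_1,\dots,\tilde X_r)$ in this basis, the joint law of its first two coordinates is exactly the marginal $p$ of the statement. By Pr\'ekopa's theorem — marginals of log-concave densities are log-concave, as used throughout \citet{lovasz2007geometry} — $p$ is log-concave; moreover its barycentre is $0$ and its covariance is the $2\times 2$ principal block of $I_r$, that is $I_2$. So $p$ is a two-dimensional isotropic log-concave density, and the dependence on $r$ has been removed.

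It remains to prove that every two-dimensional isotropic log-concave density satisfies $p(\bx)\ge 2^{-16}$ on the ball of radius $1/9$ about the origin. For this I would invoke the regularity estimates of \citet{lovasz2007geometry}: a lower bound on the density at the centroid (a universal constant in the fixed dimension $2$), combined with an upper bound on $\sup p$ and the concavity of $\log p$ to control how fast $p$ can decrease from its centroid value over a displacement of length at most $1/9$. The hard part will be exactly this last step — threading the constant-heavy, nominally dimension-dependent inequalities of \citet{lovasz2007geometry} through the case $n=2$ so that the explicit numbers $1/9$ and $2^{-16}$ come out, and making the lower bound uniform over the entire ball rather than only at its centre. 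The two reductions above are routine once one is careful about log-concavity of products and of marginals.
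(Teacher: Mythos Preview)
Your proposal is correct and follows essentially the same route as the paper: reduce by scaling to the isotropic case, note that the two-dimensional marginal of an isotropic log-concave product density is itself isotropic log-concave, and then invoke the quantitative bounds from \citet{lovasz2007geometry}. Your worry about ``threading the constant-heavy inequalities'' is overblown: the paper simply applies \citet[Thm.~5.14]{lovasz2007geometry} directly, which for an isotropic log-concave density in $\reals^n$ gives $p(\mathbf{0})\ge 2^{-7n}$ and $p(\bv)\ge 2^{-9n\norm{\bv}}p(\mathbf{0})$; plugging in $n=2$ yields $p(\bv)\ge 2^{-18\norm{\bv}-14}$, and $\norm{\bv}\le 1/9$ then gives $p(\bv)\ge 2^{-16}$ in one line.
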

		
		\begin{proof}
			Let $O$ denote the orthogonal transformation such that the change of variables $\tilde{\bx}\mapsto O\tilde{\bx}$ followed by a marginalization of the last $r-2$ coordinates results in the random two-dimensional vector having density $p$. We first claim that this density is log-concave; this holds since affine transformations and marginalizations preserve log-concavity \citep{saumard2014log}. Next, we have that the distribution of this vector is given by the random variables
			\begin{equation*}
				X_1 = \sum_{j=1}^{r} o_{1,j}\tilde{X}_j,~~~
				X_2 = \sum_{j=1}^{r} o_{2,j}\tilde{X}_j,
			\end{equation*}
			where $o_{i,j}$ is the $(i,j)$-th entry of $O$. Consider the density $\int_{\reals^2}p(\bx)d\bx=1$. We perform a change of variables $x_1\mapsto\frac{1}{\rho}y_1$ and $x_2\mapsto\frac{1}{\rho}y_2$ which entails $d\bx=\frac{1}{\rho^2}d\by$, and we have
			\[
			1=\int_{\reals^2}p(\bx)d\bx = \int_{\reals^2}\frac{1}{\rho^2}p\p{\frac{1}{\rho}\by}d\by.
			\]
			That is, $\frac{1}{\rho^2}p\p{\frac{1}{\rho}\bx}$ is the density function of the random vector $\p{\frac{1}{\rho}X_1,\frac{1}{\rho}X_2}$. We will show that this random vector is in isotropic position which means it has zero mean and identity covariance matrix:
			We clearly have for $i\in\{1,2\}$ that
			\[
			\E\pcc{\frac{1}{\rho}X_i} = \sum_{j=1}^{r} \frac{o_{i,j}}{\rho}\E\pcc{\tilde{X}_j} = 0,
			\]
			and to compute the covariance matrix of $\p{\frac{1}{\rho}X_1,\frac{1}{\rho}X_2}$, we begin with the diagonal entries to obtain
			\[
			\E\pcc{\frac{1}{\rho^2}X_i^2} =  \frac{1}{\rho^2}\p{\sum_{j=1}^{r}o_{i,j}^2\E\pcc{\tilde{X}_j^2} + \sum_{j_1\ne j_2}o_{i,j_1}o_{i,j_2}\E\pcc{\tilde{X}_{j_1}\tilde{X}_{j_2}}} = \frac{1}{\rho^2}\sum_{j=1}^{r}o_{i,j}^2\rho^2=1,
			\]
			where we used the facts that $\tilde{X}_{j}$ are i.i.d.\ and that the rows of $O$ are orthogonal. To compute the off-diagonal entries, we have
			\[
			\E\pcc{\frac{1}{\rho^2}X_1X_2} = \frac{1}{\rho^2}\p{\sum_{j=1}^{r}o_{1,j}o_{2,j}\E\pcc{\tilde{X}_j^2} + \sum_{j_1\ne j_2}o_{1,j_1}o_{2,j_2}\E\pcc{\tilde{X}_{j_1}\tilde{X}_{j_2}}} = \frac{1}{\rho^2}\sum_{j=1}^{r}o_{1,j}o_{2,j}\rho^2=0,
			\]
			where again we used the facts that $\tilde{X}_{j}$ are i.i.d.\ and that the rows of $O$ are orthogonal. We now have by \citet[Thm.~5.14]{lovasz2007geometry} that the density $\frac{1}{\rho^2}p\p{\frac{1}{\rho}\bx}$ satisfies
			\[
			\frac{1}{\rho^2}p\p{\frac{1}{\rho}\bv} \ge 2^{-18\norm{\bv}}\frac{1}{\rho^2}p(\mathbf{0}) \ge 2^{-18\norm{\bv}-14}
			\]
			for all $\bv$ such that $\norm{\bv}\le\frac19$. The above implies that for all $\bx$ with $\norm{\bx}\le\frac{1}{9\rho}$, we have
			\[
			p(\bx)\ge \rho^22^{-18\rho\norm{\bx}-14} \ge \rho^22^{-16},
			\]
			concluding the proof of the lemma.
		\end{proof}
		
		We are ready to prove \thmref{thm:well-behaved_random_features}.
		
		\begin{proof}[Proof of \thmref{thm:well-behaved_random_features}]
			We begin with evaluating the distribution of the output of an arbitrary neuron in the first hidden layer, given by $\erf(\inner{U_j,X}+B_j)$, where $X\sim\Dcal_d$, $B_j\sim \Ncal(0,0.25)$ and $U_j\sim \Ncal(0,0.25)$. We first observe that the outputs of two neurons are not independent in general. Indeed, if $\Dcal_d$ generated an instance with a very large norm, then $\erf(\inner{U_j,X}+B_j)$ will be very close to either $-1$ or $1$, depending on the sign of $\inner{U_j,X}$. Given the information that a neuron outputs a value close to $1$ in absolute value, it is far less likely for other neurons to attain values close to zero. Fortunately, by \lemref{lem:dist_properties} we have that
			\[
				\p{\inner{U_j,X}+B_j\mid \norm{X}=x}\sim \Ncal(0,0.25(1+x^2)).
			\]
			In words, $\inner{U_j,X}+B_j$ conditioned on $\norm{X}$ taking the value $x$ follows a normal distribution with zero mean and variance $0.25(1+x^2)$. This conditioning is useful, since \lemref{lem:dist_properties} also implies that under this conditioning, the output of the neurons are mutually independent random variables, which would prove integral in our analysis. We now compute the distribution of the random variable $\tilde{X}_j=\erf(\inner{U_j,X}+B_j)$, also conditioned on $\norm{X}=x$. We have for any $z\in(-1,1)$
			\begin{align}
				\pr\pcc{\tilde{X}_j\le z\mid \norm{X}=x} &= \pr\pcc{\inner{U_j,X}+B_j\le \ierf(z)\mid \norm{X}=x} \nonumber\\
				&= 0.5+0.5\erf\p{\frac{\ierf(z)}{0.5\sqrt{2+2x^2}}}.\label{eq:anti_derivative}
			\end{align}
			Differentiating with respect to $z$ using the fact that $\frac{\partial}{\partial z} \ierf(z) = \frac{\sqrt{\pi}}{2}\exp\p{[\ierf(z)]^2}$, we have that the density of $\tilde{X}_j$ given that $\norm{X}=x$ is given by
			\begin{equation}\label{eq:density}
				f(z|x) = \frac{\exp\p{\ierf(z)^2\p{1-\frac{2}{1+x^2}}}}{\sqrt{2+2x^2}}.
			\end{equation}
			Using the above, we can express the density of the random vector $(\tilde{X}_1,\ldots,\tilde{X}_r)$ as follows
			\begin{equation}\label{eq:norm_marg}
				\tilde{f}(\tilde{x}_1,\ldots,\tilde{x}_r) = \int_{0}^{\infty} \bar{f}(\tilde{x}_1,\ldots,\tilde{x}_r,x)dx = \int_{0}^{\infty}\prod_{j=1}^{r}f(\tilde{x}_j\mid x)\cdot \gamma_d(x)dx,
			\end{equation}
			where $\bar{f}$ is the joint density of $\tilde{X}_1,\ldots,\tilde{X}_r$ and $\norm{X}$, and we used the conditional mutual dependence of the $\tilde{X}_j$'s given that $\norm{X}=x$ which \lemref{lem:dist_properties} guarantees. We are now interested in lower bounding the above expression once it is marginalized to an arbitrary two-dimensional subspace. To this end, we may assume without loss of generality that $\bw_1,\bw_2$ are orthogonal and of unit length (e.g.\ by applying a Gram–Schmidt process which does not change their span). Letting $\tilde{\bx}=(\tilde{x}_1,\ldots,\tilde{x}_r)$, $\mathbf{o}_j\in\reals^r$ denote the $i$-th row of an orthogonal matrix $O\in\reals^{r\times r}$ and $A\subseteq\reals^{r-2}$ denote the transformed integration domain $[-1,1]^{r-2}$ after applying the transformation defined by the first $r-2$ rows of $O$. We need to lower bound the following expression
			\begin{align}
				&\inf_{\tilde{\bx}:\norm{\tilde{\bx}}\le\frac19}\inf_{O:O\text{ orthogonal}} \int_{A}   \tilde{f}(\inner{\mathbf{o}_1,\tilde{\bx}},\ldots,\inner{\mathbf{o}_r,\tilde{\bx}})d\tilde{x}_1\ldots d\tilde{x}_{r-2}\nonumber\\
				=& \inf_{\tilde{\bx}:\norm{\tilde{\bx}}\le\frac19}\inf_{O:O\text{ orthogonal}} \int_{A}    \int_{0}^{\infty}\prod_{j=1}^{r}f(\inner{\mathbf{o}_j,\tilde{\bx}}\mid x)\cdot \gamma_d(x)dxd\tilde{x}_1\ldots d\tilde{x}_{r-2}\nonumber\\
				\ge& \inf_{\tilde{\bx}:\norm{\tilde{\bx}}\le\frac19}\inf_{O:O\text{ orthogonal}} \int_{A}    \int_{0}^{2}\prod_{j=1}^{r}f(\inner{\mathbf{o}_j,\tilde{\bx}}\mid x)\cdot \gamma_d(x)dxd\tilde{x}_1\ldots d\tilde{x}_{r-2}\nonumber\\
				\ge& \inf_{\tilde{\bx}:\norm{\tilde{\bx}}\le\frac19}\inf_{O:O\text{ orthogonal}} \int_{A}   \int_{0}^{2}\prod_{j=1}^{r}f(\inner{\mathbf{o}_j,\tilde{\bx}}\mid x=1)\cdot \gamma_d(x)dxd\tilde{x}_1\ldots d\tilde{x}_{r-2}\nonumber\\
				\ge& C\inf_{\tilde{\bx}:\norm{\tilde{\bx}}\le\frac19}\inf_{O:O\text{ orthogonal}} \int_{A}   \prod_{j=1}^{r}f(\inner{\mathbf{o}_j,\tilde{\bx}}\mid x=1) d\tilde{x}_1\ldots d\tilde{x}_{r-2},\label{eq:orthogonal_transformation_marginalization}
			\end{align}
			where the equality is by \eqref{eq:norm_marg}, the first inequality is because the integrand is non-negative, the second inequality is by \lemref{lem:monotonicity} since $\inner{\mathbf{o}_j,\tilde{\bx}}\le\norm{\mathbf{o}_j}\cdot\norm{\tilde{\bx}}\le1/9$ from Cauchy-Schwartz, and the last inequality is due to $\Dcal_d$ satisfying Assumption~\ref{asm:d_dist_assumption}(1) with the constant $C>0$.
			Turning to analyze the density function of the marginal given by
			\[
				p(\tilde{x}_{r-1},\tilde{x}_r) =  \int_{A}\prod_{j=1}^{r}f(\inner{\mathbf{o}_j,\tilde{\bx}}\mid x=1) d\tilde{x}_1\ldots d\tilde{x}_{r-2},
			\]
			observe that it is the density function obtained by an orthogonal transformation and marginalization on $r$ i.i.d.\ random variables with density $f(z\mid x=1)$. Conveniently, we have from \eqref{eq:density} that $f(z\mid x=1) = 0.5$ for all $z\in[-0.5,0.5]$. Namely, this is a uniform distribution on the interval $[-0.5,0.5]$, which has variance $\frac{1}{12}$ and is clearly log-concave in $z$ by definition. \lemref{lem:logconcave_density} then yields that \eqref{eq:orthogonal_transformation_marginalization} is lower bounded by
			\[
				C\cdot\frac{1}{12}\cdot2^{-16} \ge C\cdot10^{-6},
			\]
			concluding the proof of \thmref{thm:well-behaved_random_features}.
		\end{proof}

		\section{Gradient Descent Convergence Proofs}\label{app:optimization_proofs}
	
		The proofs of Propositions~(\ref{prop:gradient_dot_product_bound},\ref{prop:low_loss_attained}) build on the following lemmas, where the first which is presented below, bounds the growth function of the hypothesis class of predictors that are intersections of halfspaces. This lemma will allow us to bound the empirical Rademacher complexity of certain function classes that we will encounter in the proofs of the propositions.
		
		\begin{lemma}\label{lem:halfspaces_intersection_growth}
			Let $r\ge2$. Let $\Hcal_{r,m}$ denote the hypothesis class of intersections of $m$ halfspaces in $r$-dimensional Euclidean space (i.e.\ $\bx\mapsto1$ if and only if it is contained inside all $m$ halfspaces). Then the growth function of $\Hcal_{r,m}$ denoted $\tau_{\Hcal_{r,m}}(\cdot)$ satisfies
			\[
				\tau_{\Hcal_{r,m}}(n) \le n^{m(r+1)}
			\]
			for all $n>r+1$.
		\end{lemma}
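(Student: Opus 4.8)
The plan is to reduce the growth function of $\Hcal_{r,m}$ to that of a single halfspace and then invoke the Sauer--Shelah lemma together with the fact that affine halfspaces in $\reals^r$ have VC dimension $r+1$. Concretely, fix an arbitrary set $S=\set{\bx_1,\dots,\bx_n}\subseteq\reals^r$ of $n$ points. Every $h\in\Hcal_{r,m}$ can be written as $h(\bx)=\prod_{j=1}^{m}h_j(\bx)$ with each $h_j$ a halfspace indicator $\bx\mapsto\one{\inner{\bw_j,\bx}+b_j\ge 0}$, so the restriction $h|_S$ is a function of the tuple $\p{h_1|_S,\dots,h_m|_S}$. Hence the number of distinct patterns $\Hcal_{r,m}$ induces on $S$ is at most the number of such $m$-tuples; since each coordinate ranges over at most $\tau_{\Hcal_{r,1}}(n)$ patterns, this is at most $\bigl(\tau_{\Hcal_{r,1}}(n)\bigr)^{m}$, and taking the supremum over $S$ yields $\tau_{\Hcal_{r,m}}(n)\le\bigl(\tau_{\Hcal_{r,1}}(n)\bigr)^{m}$.

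Next I would bound $\tau_{\Hcal_{r,1}}(n)$. The class of affine halfspace indicators in $\reals^r$ has VC dimension $d=r+1$, so by the Sauer--Shelah lemma in the form $\tau_{\mathcal H}(n)\le(en/d)^{d}$, valid for $n\ge d$, we get $\tau_{\Hcal_{r,1}}(n)\le\bigl(en/(r+1)\bigr)^{r+1}$ whenever $n\ge r+1$. Since $r\ge 2$ we have $e/(r+1)\le e/3<1$, and therefore $\bigl(en/(r+1)\bigr)^{r+1}\le n^{r+1}$. Plugging this into the reduction from the previous paragraph gives $\tau_{\Hcal_{r,m}}(n)\le\bigl(n^{r+1}\bigr)^{m}=n^{m(r+1)}$ for every $n>r+1$ (indeed for every $n\ge r+1$), as claimed.

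I do not expect a genuine obstacle here; the only delicate point is the numerical form of Sauer--Shelah. The cruder bound $\tau_{\Hcal_{r,1}}(n)\le\sum_{i=0}^{r+1}\binom{n}{i}$ is not by itself $\le n^{r+1}$ in the smallest cases (the $\binom{n}{0}$ term alone pushes even the VC-dimension-one estimate above $n$), so it is cleanest to use the $(en/d)^{d}$ form and exploit $r\ge 2$ to absorb the factor $e^{\,r+1}$ into $(r+1)^{r+1}$. One should also note that bounding any homogeneous halfspaces that arise in the application by the full affine class of VC dimension $r+1$ is legitimate, since a homogeneous halfspace is a special case of an affine one.
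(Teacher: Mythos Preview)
Your proposal is correct and follows essentially the same approach as the paper: bound the single-halfspace growth function via Sauer--Shelah using VC dimension $r+1$ and the inequality $(en/(r+1))^{r+1}\le n^{r+1}$ (valid since $r\ge 2$), then observe that an intersection of $m$ halfspaces can induce at most $\bigl(\tau_{\Hcal_{r,1}}(n)\bigr)^m$ labellings. Your write-up is slightly more detailed in justifying the product bound and in noting why the $(en/d)^d$ form of Sauer--Shelah is the clean choice here, but the argument is the same.
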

		
		\begin{proof}
			Let $\Hcal_r$ denote the class of predictors in $\reals^r$ parameterized by $\bw\in\reals^r$ and $b\in\reals$ such that $h_{\bw,b}(\bx)=\one{\bw^{\top}\bx+b\ge0}$.
			It is a well-known fact that the VC-dimension of $\Hcal_r$ is $r+1$. By the Sauer-Shelah lemma \citep[Lemma~6.10]{shalev2014understanding} and the assumption $r\ge2$, we have
			\[
				\tau_{\Hcal_{r}}(n) \le \p{\frac{en}{r+1}}^{r+1} \le n^{r+1}.
			\]
			Next, observe that for any $h\in\Hcal_{r,m}$ and a set $C=\{\tilde{\bx}_1,\ldots,\tilde{\bx}_n\}$, the number of different labellings on $C$ produced by hypotheses in $\Hcal_{r,m}$ is uniquely determined by a set of $m$ halfspaces. Since each such halfspace can produce at most $n^{r+1}$ different labellings on the set $C$, the size of the different labellings attained by hypotheses in $\Hcal_{r,m}$ is therefore upper bounded by $\p{n^{r+1}}^m=n^{m(r+1)}$, concluding the proof of the lemma.
		\end{proof}
	
		The following lemma bounds the empirical Rademacher complexity of certain function classes that control the empirical risk we obtain over our data.
		
		\begin{lemma}\label{lem:rademacher_uniform_convergence}
			Fix $\bv\in\reals^r$, assume $n>r+1$ and define the function classes
			\begin{align*}
				&\Fcal_1\coloneqq\set{\tilde{\bx}\mapsto \one{\bw^{\top}\tilde{\bx}\ge1}\cdot\one{\bv^{\top}\tilde{\bx}\le0} + \one{\bw^{\top}\tilde{\bx}\le0}\cdot\one{\bv^{\top}\tilde{\bx}\ge1}:\bw\in\reals^r},\\
				&\Fcal_2\coloneqq\set{\tilde{\bx}\mapsto \one{\bw^{\top}\tilde{\bx}\in(0,1)}\cdot\one{\bv^{\top}\tilde{\bx}\in(0,1)} \cdot \p{ \frac{(\bw-\bv)^{\top}}{\norm{\bw-\bv}} \tilde{\bx}}^2:\bw\in\reals^r, \bw\neq\bv }.
			\end{align*}
			Let $f_{\bw}\in\Fcal_1$. Then for any distribution $\tilde{\Dcal}$ we sample $\tilde{\bx}_i$ from, we have with probability at least $1-\delta$ that
			\[
				\sup_{\bw\in\reals^r}\abs{\frac{1}{n}\sum_{i=1}^{n}f_{\bw}(\tilde{\bx}_i) -\E_{\tilde{\bx}_i\sim\tilde{\Dcal}}\pcc{f_{\bw}(\tilde{\bx}_i)}} \le 4\sqrt{\frac{(8r+8)\log_2(n)}{n}} + \sqrt{\frac{2\ln\p{2/\delta}}{n}}.
			\]
			Further, if $f_{\bw}\in\Fcal_2$
			then for any distribution $\tilde{\Dcal}$ we sample $\tilde{\bx}_i$ from, which satisfies $\norm{\tilde{\bx}_i}\le \sqrt{r}$ almost surely, we have with probability at least $1-\delta$ that
			\[
				\sup_{\bw\in\reals^r}\abs{\frac{1}{n}\sum_{i=1}^{n}f_{\bw}(\tilde{\bx}_i) -\E_{\tilde{\bx}_i\sim\tilde{\Dcal}}\pcc{f_{\bw}(\tilde{\bx}_i)}} \le 4r\sqrt{\frac{(8r+8)\log_2(n)}{n}} + \sqrt{\frac{2\ln\p{2/\delta}}{n}}.
			\]
		\end{lemma}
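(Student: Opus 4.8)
The plan is to reduce both bounds to estimates of the empirical Rademacher complexity of the respective classes and then apply the usual symmetrization-plus-bounded-differences inequality: for any class $\Fcal$ of $[0,B]$-valued functions, with probability at least $1-\delta$ over the sample, $\sup_{f\in\Fcal}\abs{\frac1n\sum_{i=1}^n f(\tilde{\bx}_i)-\E f}\le 2\hat{R}_n(\Fcal)+\sqrt{2\ln(2/\delta)/n}$, where $\hat{R}_n$ denotes empirical Rademacher complexity — the last term coming from McDiarmid applied to the supremum (bounded differences $B/n$) together with a union bound over the two signs. Here $B=1$ for $\Fcal_1$, while for $\Fcal_2$ the assumption $\norm{\tilde{\bx}_i}\le\sqrt r$ forces the quadratic factor into $[0,r]$, so $B=r$. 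It therefore suffices to show $\hat{R}_n(\Fcal_1)=O\big(\sqrt{(r+1)\log_2(n)/n}\big)$ and $\hat{R}_n(\Fcal_2)=O\big(r\sqrt{(r+1)\log_2(n)/n}\big)$, after which the deviation terms combine into the stated bounds.

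For $\Fcal_1$: each $f_{\bw}$ is $\{0,1\}$-valued, and its restriction to a size-$n$ sample is determined entirely by the restrictions of the two halfspace indicators $\tilde{\bx}\mapsto\one{\bw^{\top}\tilde{\bx}\ge1}$ and $\tilde{\bx}\mapsto\one{\bw^{\top}\tilde{\bx}\le0}$ (the $\bv$-indicators being fixed). Each of these halfspace classes has VC dimension at most $r+1$, so arguing as in \lemref{lem:halfspaces_intersection_growth} (applied with $m=2$) the growth function of $\Fcal_1$ is at most $n^{2(r+1)}$ for $n>r+1$. Massart's finite-class lemma then gives $\hat{R}_n(\Fcal_1)\le\sqrt{2\ln\tau_{\Fcal_1}(n)/n}\le\sqrt{4(r+1)\ln n/n}$, and the claim follows using $\ln n\le\log_2 n$.

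For $\Fcal_2$ the difficulty is that its members are real-valued products of an indicator and the quadratic $q_{\bw}(\tilde{\bx})=(\hat{\bu}^{\top}\tilde{\bx})^2$ with $\hat{\bu}=(\bw-\bv)/\norm{\bw-\bv}$, and the quadratic is coupled to the indicator through $\bw$, so neither a direct growth-function argument nor a clean product decomposition applies. I would use a layer-cake (``horizontal slicing'') representation: since $q_{\bw}(\tilde{\bx})\in[0,r]$, $q_{\bw}(\tilde{\bx})=\int_0^r\one{\abs{\hat{\bu}^{\top}\tilde{\bx}}\ge\sqrt t}\,dt$, hence $f_{\bw}=\int_0^r h_{\bw,t}\,dt$ with $h_{\bw,t}(\tilde{\bx})=\one{\bw^{\top}\tilde{\bx}\in(0,1)}\one{\bv^{\top}\tilde{\bx}\in(0,1)}\one{\abs{\hat{\bu}^{\top}\tilde{\bx}}\ge\sqrt t}\in\{0,1\}$. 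By linearity of the Rademacher sum and moving the supremum inside the integral, $\hat{R}_n(\Fcal_2)\le\int_0^r\hat{R}_n(\Hcal_t)\,dt$ with $\Hcal_t=\{h_{\bw,t}:\bw\ne\bv\}$. For each fixed $t$, $\Hcal_t$ is a Boolean combination of a bounded number of halfspace classes, each of VC dimension at most $r+1$ — two from $\{\bw^{\top}\tilde{\bx}\in(0,1)\}$, two fixed from $\{\bv^{\top}\tilde{\bx}\in(0,1)\}$, and two from $\{\abs{\hat{\bu}^{\top}\tilde{\bx}}\ge\sqrt t\}$ (halfspaces with unit normal $\hat{\bu}$ and offsets $\pm\sqrt t$) — so a sample labeling by $\Hcal_t$ is determined by the four $\bw$-dependent halfspace labelings (even though these are coupled), giving $\tau_{\Hcal_t}(n)\le n^{4(r+1)}$ and, by Massart, $\hat{R}_n(\Hcal_t)\le\sqrt{8(r+1)\ln n/n}$ uniformly in $t$. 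Integrating over $t\in[0,r]$ supplies the extra factor $r$, and the claim follows as before.

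The main obstacle is the $\Fcal_2$ bound: its functions are not binary and depend on the $\bw$-coupled direction $\hat{\bu}$, so the VC/growth-function toolkit is not applicable out of the box, and the layer-cake representation is the device that restores it. The key observation is that every sublevel set $\{q_{\bw}\ge t\}$ of the quadratic remains a bounded Boolean combination of $O(r)$-dimensional halfspace classes — the quadratic contributing only two fixed-offset, unit-normal halfspaces — so that integrating the per-slice Massart bound over the range $[0,r]$ of the quadratic reproduces precisely the extra $r$ multiplying the $\Fcal_1$-type bound in the statement.
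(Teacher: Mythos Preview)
Your treatment of $\Fcal_1$ is essentially the paper's: both bound the growth function via halfspace VC dimension and apply Massart's lemma (you sharpen the exponent to $2(r+1)$ by observing that the $\bv$-halfspaces are fixed, whereas the paper uses $4(r+1)$, but either suffices for the stated constant).

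For $\Fcal_2$ the routes diverge. The paper writes each $f_{\bw}(\tilde{\bx}_i)$ as $\phi_i(g_{\bw}(\tilde{\bx}_i))$ with $g_{\bw}$ the indicator product and $\phi_i(x)=x\cdot(\hat{\bu}^{\top}\tilde{\bx}_i)^2$, then invokes the contraction lemma with Lipschitz constant $r$ to conclude $\hat R_n(\Fcal_2)\le r\cdot \hat R_n(\text{indicator class})$. Your layer-cake route instead slices the quadratic into level-set indicators, bounds each slice class $\Hcal_t$ by a product of four $\bw$-dependent halfspace growth functions, and integrates over $t\in[0,r]$. Both deliver the same extra factor $r$, but your argument handles more carefully the very coupling you flag: in the paper's step the contractions $\phi_i$ themselves depend on $\bw$ through $\hat{\bu}$, which the standard contraction lemma does not directly accommodate, whereas your slicing keeps everything inside the VC/growth-function framework and sidesteps that issue entirely. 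So this is a genuinely different --- and arguably cleaner --- derivation of the $\Fcal_2$ bound.

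One small inconsistency in your write-up: you state the McDiarmid deviation term as $\sqrt{2\ln(2/\delta)/n}$ while citing bounded differences $B/n$; with $B=r$ this should produce an extra factor $r$ for $\Fcal_2$. The lemma as stated in the paper (and its proof, which reuses the $[0,1]$-valued deviation bound for $\Fcal_2$) has the same slip; since the Rademacher term already carries an $r$ and dominates in all downstream uses, it is inconsequential for the application.
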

		
		\begin{proof}
			We shall use a Rademacher complexity argument to prove the lemma. Starting with $\Fcal_1$, we have by standard Rademacher complexity arguments that
			\begin{equation}\label{eq:rad_upper_bound}
				\sup_{\bw\in\reals^r}\abs{\frac{1}{n}\sum_{i=1}^{n}f_{\bw}(\tilde{\bx}_i) -\E_{\tilde{\bx}_i\sim\tilde{\Dcal}}\pcc{f_{\bw}(\tilde{\bx}_i)}} \le 2R_n\p{\Fcal_1\p{\tilde{\bx}_1,\ldots,\tilde{\bx}_n}} + \sqrt{\frac{2\ln\p{2/\delta}}{n}}
			\end{equation}
			\citep[Thm.~3.2]{boucheron2005theory}, where $R_n\p{\Fcal_1\p{\tilde{\bx}_1,\ldots,\tilde{\bx}_n}} \coloneqq \E\pcc{\sup_{f_{\bw}\in\Fcal_1}\abs{\frac{1}{n} \sum_{i=1}^{n}\xi_i f_{\bw}(\tilde{\bx}_i) }}$ is the empirical Rademacher complexity of $\Fcal_1$, and the expectation is over $\xi_1,\ldots,\xi_n$ which are i.i.d.\ Rademacher random variables. Next, we have that the empirical Rademacher complexity of $\Fcal_1$ can be upper bounded by
			\[
				\sqrt{\frac{2\log_2(\tau_{\Fcal_1}(n))}{n}},
			\]
			where $\tau_{\Fcal_1}(\cdot)$ is the growth function of $\Fcal_1$ \citep[Eq.~(6)]{boucheron2005theory}. Since the class $\Fcal_1$ is contained inside the class of intersections of $4$ halfspaces, its growth function is upper bounded by the growth function of the class of such intersections, and thus by virtue of \lemref{lem:halfspaces_intersection_growth} the above is at most
			\[
				\sqrt{\frac{(8r+8)\log_2(n)}{n}}.
			\]
			Plugging this back in \eqref{eq:rad_upper_bound}, we arrive at
			\[
				\sup_{\bw\in\reals^r}\abs{\frac{1}{n}\sum_{i=1}^{n}f_{\bw}(\tilde{\bx}_i) -\E_{\tilde{\bx}_i\sim\tilde{\Dcal}}\pcc{f_{\bw}(\tilde{\bx}_i)}} \le 2\sqrt{\frac{(8r+8)\log_2(n)}{n}} + \sqrt{\frac{2\ln\p{2/\delta}}{n}},
			\]
			where the bound on $\Fcal_1$ now follows from \citet[Lemma~26.6]{shalev2014understanding} after a simple scaling to accommodate for the difference between classifiers mapping to $\{0,1\}$ and $\{-1,1\}$.
			
			Turning to bound the difference for $\Fcal_2$, by \eqref{eq:rad_upper_bound} it suffices to upper bound the empirical Rademacher complexity of $\Fcal_2$. For any $i\in[n]$, define the functions $\phi_i(x)\coloneqq x\p{\frac{(\bw-\bv)^{\top}}{\norm{\bw-\bv}} \tilde{\bx}_i}^2$ and for $\ba\in\reals^n$ let $\boldsymbol{\phi}(\ba)=(\phi_1(a_1),\ldots,\phi_n(a_n))$. We now have that
			\[
				R_n\p{\Fcal_2\p{\tilde{\bx}_1,\ldots,\tilde{\bx}_n}} \le R_n\p{\boldsymbol{\phi}\circ\Fcal_1\p{\tilde{\bx}_1,\ldots,\tilde{\bx}_n}} \le r\cdot R_n\p{\Fcal_1\p{\tilde{\bx}_1,\ldots,\tilde{\bx}_n}},
			\]
			where the first inequality is by the definitions of $\Fcal_1$ and $\Fcal_2$ and the fact that changing the orientation (or omitting some) of the halfspaces defining $\Fcal_1$ results in the same growth function bound, and the second inequality follows from the contraction lemma \citep[Lemma~26.9]{shalev2014understanding} where the Lipschitz constant of $\phi_i$ is upper bounded by $r$ for all $i$ due to Cauchy-Schwartz and the fact that $\norm{\tilde{\bx}_i}\le\sqrt{r}$ almost surely. The bound on $\Fcal_2$ then follows immediately from our previously derived bound on $\Fcal_1$.
		\end{proof}
		
		\begin{lemma}\label{lem:region_expectation_inequality}
			Suppose $\mathbf{0}\neq\bw,\bv\in\reals^r$ such that $\theta\coloneqq\theta_{\bw,\bv}>0$, $\norm{\bw-\bv}\le\norm{\bv}\le r^{-0.25}$ and $r\ge10^6$. Let $A_{1,3}=(-\infty,0]\times[1,\infty)$ and $A_{3,1}=[1,\infty)\times(-\infty,0]$. Then under Assumption~\ref{asm:init}, we have
			\[
				\E_{\tilde{\bx}}\pcc{ \one{\p{\bw^{\top}\tilde{\bx},\bv^{\top}\tilde{\bx}}\in A_{1,3}\cup A_{3,1}} } \le \frac1r.
			\]
		\end{lemma}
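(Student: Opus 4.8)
The plan is to observe that the event in the statement forces $\tilde{\bx}$ to have an atypically large projection onto the direction of $\bw-\bv$, and then to bound the probability of this event via Hoeffding's inequality applied to the coordinates of $\tilde{\bx}$, which, conditionally on the data point, are independent, bounded, and mean‑zero.

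First I would reduce to a one‑dimensional deviation event. On $\set{(\bw^{\top}\tilde{\bx},\bv^{\top}\tilde{\bx})\in A_{1,3}\cup A_{3,1}}$ the two numbers $\bw^{\top}\tilde{\bx}$ and $\bv^{\top}\tilde{\bx}$ differ by at least $1$ in absolute value, so $\abs{(\bw-\bv)^{\top}\tilde{\bx}}\ge1$. Since $\theta_{\bw,\bv}>0$ we have $\bw\neq\bv$, hence $\bu\coloneqq(\bw-\bv)/\norm{\bw-\bv}$ is a well‑defined unit vector, and the hypothesis $\norm{\bw-\bv}\le\norm{\bv}\le r^{-0.25}$ shows that this event is contained in $\set{\abs{\bu^{\top}\tilde{\bx}}\ge r^{0.25}}$. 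It therefore suffices to bound $\pr\pcc{\abs{\bu^{\top}\tilde{\bx}}\ge r^{0.25}}$ by $1/r$.

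Next I would condition on the data point $\bx$ and apply Hoeffding's inequality. Each coordinate $\tilde{X}_j=\erf(\inner{U_j,\bx}+B_j)$ lies in $[-1,1]$, and under Assumption~\ref{asm:init} the pre‑activations $\inner{U_j,\bx}+B_j$ are, for fixed $\bx$, independent zero‑mean Gaussians (with common variance depending only on $\norm{\bx}$), exactly as in the proof of \lemref{lem:dist_properties}; since $\erf$ is odd and a zero‑mean Gaussian is symmetric about the origin, the $\tilde{X}_j$ are therefore conditionally independent, mean‑zero, and supported in $[-1,1]$. Writing $\bu^{\top}\tilde{\bx}=\sum_{j=1}^{r}u_j\tilde{X}_j$ as a sum of independent mean‑zero terms with $u_j\tilde{X}_j\in[-\abs{u_j},\abs{u_j}]$ and using $\sum_{j}u_j^2=1$, Hoeffding's inequality gives $\pr\pcc{\abs{\bu^{\top}\tilde{\bx}}\ge t\mid\bx}\le 2\exp(-t^2/2)$ for all $t>0$; this bound is uniform in $\bx$, so taking $t=r^{0.25}$ and integrating over $\bx$ yields $\pr\pcc{\abs{\bu^{\top}\tilde{\bx}}\ge r^{0.25}}\le 2\exp(-\sqrt{r}/2)$.

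Finally I would check the numerics: $2\exp(-\sqrt{r}/2)\le1/r$ is equivalent to $\sqrt{r}/2\ge\ln(2r)$, which holds at $r=10^6$ (the left side is $500$, the right side is below $15$) and whose left side has larger derivative than the right side for all $r\ge16$, hence it holds for every $r\ge10^6$. Together with the inclusion from the first step this gives the claimed bound. I do not anticipate a genuine obstacle; the only point needing care is the justification that the coordinates of $\tilde{\bx}$ are, conditionally on $\bx$, independent and symmetric about $0$, which follows from the computation in \lemref{lem:dist_properties} together with the oddness of $\erf$.
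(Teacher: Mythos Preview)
Your argument is correct and is in fact cleaner than the paper's. Both proofs rest on the same two ideas: the event forces a large one–dimensional deviation of $\tilde{\bx}$, and Hoeffding's inequality (using that, conditioned on the input, the coordinates $\tilde{X}_j$ are independent, symmetric, and $[-1,1]$-valued) turns this into an exponentially small probability. The difference is in how the deviation is extracted. The paper projects onto the two-dimensional span of $\bw,\bv$, introduces the rescaled vectors $\bar{\bw}=\frac{\norm{\hat{\bv}}}{\norm{\hat{\bw}}}\hat{\bw}$ and $\bar{\bv}=\frac{\norm{\hat{\bw}}}{\norm{\hat{\bv}}}\hat{\bv}$, uses $\norm{\bar{\bw}-\hat{\bv}}^2\le\theta^2/\sqrt{r}$, and then bounds $\pr\bigl[\norm{\hat{\bx}}\ge r^{0.25}/(2\theta)\bigr]$ by Hoeffding in each of the two coordinates; after invoking $\theta\le\pi$ this yields $4\exp\bigl(-\sqrt{r}/(4\pi^2)\bigr)\le 1/r$. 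You bypass all of this by observing directly that the event entails $\abs{(\bw-\bv)^{\top}\tilde{\bx}}\ge1$ and hence $\abs{\bu^{\top}\tilde{\bx}}\ge r^{0.25}$ for the unit vector $\bu=(\bw-\bv)/\norm{\bw-\bv}$, which gives the sharper tail $2\exp(-\sqrt{r}/2)$ from a single Hoeffding bound. Your route avoids the auxiliary vectors, the angle parameter, and the two-coordinate union bound, and it uses only the hypothesis $\norm{\bw-\bv}\le r^{-0.25}$ rather than the full chain $\norm{\bw-\bv}\le\norm{\bv}\le r^{-0.25}$.
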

	
		\begin{proof}
			Since the expectation of an indicator function is the probability of the event, we have
			\begin{equation}\label{eq:expectation_to_probability}
				\E_{\tilde{\bx}}\pcc{ \one{\p{\bw^{\top}\tilde{\bx},\bv^{\top}\tilde{\bx}}\in A_{1,3}\cup A_{3,1}} } = \pr_{\tilde{\bx}}\pcc{ \one{\p{\bw^{\top}\tilde{\bx},\bv^{\top}\tilde{\bx}}\in A_{1,3}\cup A_{3,1}}}.
			\end{equation}
			In what follows, we consider the projections of $\bw,\bv$, denoted by $\hat{\bw},\hat{\bv}$, onto the two-dimensional space spanned by them, so that $\hat{\bx}\in\reals^2$ is sampled from the marginal distribution $\tilde{\Dcal}_{\bw,\bv}$. This is justified since the expression we are bounding in the lemma only depends on dot products of $\tilde{\bx}$ with $\bw,\bv$ which do not change when we perform such a projection.
			
			We shall now derive a bound on the above probability by bounding $\norm{\hat{\bx}}$ with high probability over the randomness in sampling it from the marginal distribution. Recall that by \lemref{lem:dist_properties} we have that conditioned on $\norm{\tilde{\bx}}$, each coordinate $x_j$ of $\hat{\bx}$ is distributed according to $\sum_{i=1}^{n}o_{j,i}\tilde{X}_i$, where $\tilde{X}_i$ are i.i.d.\ with density given in \eqref{eq:density} and $\bo_j=(o_{j,1},\ldots,o_{j,n})$, $j\in\{1,2\}$, are orthogonal. Since $\tilde{X}_i$ is bounded in $[-1,1]$ and $\E\pcc{\tilde{X}_i}=0$ regardless of the realization of $\norm{\tilde{\bx}}$, we can use Hoeffding's inequality to deduce that for all $j\in\{1,2\}$,
			\[
				\pr_{\tilde{\bx}\sim\tilde{\Dcal}}\pcc{|x_j|\ge\frac{r^{0.25}}{2\sqrt{2}\theta}} \le 2\exp\p{-\frac{\sqrt{r}}{4\theta^2}}.
			\]
			By a union bound and since the ball of radius $\frac{r^{0.25}}{2\theta}$ contains the square $\pcc{-\frac{r^{0.25}}{2\sqrt{2}\theta},\frac{r^{0.25}}{2\sqrt{2}\theta}}^2$, we have
			\begin{equation}\label{eq:theta_tail_bound}
				\pr_{\hat{\bx}\sim\tilde{\Dcal}_{\bw,\bv}}\pcc{\norm{\hat{\bx}}\ge\frac{r^{0.25}}{2\theta}} \le \pr_{\hat{\bx}\sim\tilde{\Dcal}_{\bw,\bv}}\pcc{\hat{\bx}\notin \pcc{-\frac{r^{0.25}}{2\sqrt{2}\theta},\frac{r^{0.25}}{2\sqrt{2}\theta}}^2 } \le 4\exp\p{-\frac{\sqrt{r}}{4\theta^2}}.
			\end{equation}
			Now, define
			\[
				\bar{\bw}\coloneqq \frac{\norm{\hat{\bv}}}{\norm{\hat{\bw}}}\hat{\bw},\hskip 0.5cm \bar{\bv}\coloneqq \frac{\norm{\hat{\bw}}}{\norm{\hat{\bv}}}\hat{\bv}.
			\]
			Since $\theta\coloneqq\theta_{\bw,\bv}=\theta_{\bar{\bw},\hat{\bv}}=\theta_{\hat{\bw},\bar{\bv}}$, we have
			\begin{equation}\label{eq:theta_norm_bound}
				\norm{\bar{\bw}-\hat{\bv}}^2=\norm{\bar{\bw}}^2+\norm{\hat{\bv}}^2-2\norm{\bar{\bw}}\norm{\hat{\bv}}\cos\p{\theta} = 2\norm{\hat{\bv}}^2(1-\cos(\theta)) \le \frac{\theta^2}{\sqrt{r}},
			\end{equation}
			where in the inequality we used $\norm{\hat{\bv}}^2\le1/\sqrt{r}$ and $1-\cos(\theta)\le\frac{\theta^2}{2}$ which holds for all $\theta\in\reals$ by a Taylor expansion. From a similar argument we have
			\begin{equation}\label{eq:theta_norm_bound2}
				\norm{\hat{\bw}-\bar{\bv}} \le 2\norm{\hat{\bw}}^2(1-\cos(\theta)) \le \frac{4\theta^2}{\sqrt{r}},
			\end{equation}
			where we used the triangle inequality and our assumption to obtain $\norm{\hat{\bw}}\le\norm{\hat{\bw}-\hat{\bv}}+\norm{\hat{\bv}}\le2\norm{\hat{\bv}}$.
			Using the above shorthands, we also have that
			\[
				\p{\hat{\bw}^{\top}\hat{\bx},\hat{\bv}^{\top}\hat{\bx}}\in A_{1,3} \iff \p{\bar{\bw}^{\top}\hat{\bx},\hat{\bv}^{\top}\hat{\bx}}\in A_{1,3},
			\]
			and
			\[
				\p{\hat{\bw}^{\top}\hat{\bx},\hat{\bv}^{\top}\hat{\bx}}\in A_{3,1} \iff \p{\hat{\bw}^{\top}\hat{\bx},\bar{\bv}^{\top}\hat{\bx}}\in A_{3,1}.
			\]
			This holds true since $x\in(-\infty,0]$ if and only if $ax\in(-\infty,0]$ for all $a>0$. A necessary condition for $\p{\bar{\bw}^{\top}\hat{\bx},\hat{\bv}^{\top}\hat{\bx}}\in A_{1,3}$ to hold is that
			\[
				\abs{\bar{\bw}^{\top}\hat{\bx}-\hat{\bv}^{\top}\hat{\bx}} \ge 1,
			\]
			since otherwise, if the two dot products are less than that apart then the inclusion of either in one of the intervals $(-\infty,0],[1,\infty)$ excludes the other from the second interval. We now compute using Cauchy-Schwartz and the square roots of Eqs.~(\ref{eq:theta_norm_bound},\ref{eq:theta_norm_bound2}) to obtain
			\[
				\abs{\bar{\bw}^{\top}\hat{\bx}-\hat{\bv}^{\top}\hat{\bx}} \le \norm{\bar{\bw}-\hat{\bv}}\cdot\norm{\hat{\bx}} \le r^{-0.25}\theta\norm{\hat{\bx}},
			\]
			and
			\[
				\abs{\hat{\bw}^{\top}\hat{\bx}-\bar{\bv}^{\top}\hat{\bx}} \le \norm{\hat{\bw}-\bar{\bv}}\cdot\norm{\hat{\bx}} \le 2r^{-0.25}\theta\norm{\hat{\bx}},
			\]
			That is, $\norm{\hat{\bx}}<0.5r^{0.25}/\theta$ implies $\p{\hat{\bw}^{\top}\hat{\bx},\hat{\bv}^{\top}\hat{\bx}}\notin A_{1,3}\cup A_{3,1}$. Along with \eqref{eq:theta_tail_bound}, this means that
			\[
				\pr_{\hat{\bx}\sim\tilde{\Dcal}_{\bw,\bv}}\pcc{\p{\hat{\bw}^{\top}\hat{\bx},\hat{\bv}^{\top}\hat{\bx}}\in A_{1,3}\cup A_{3,1}} \le \pr_{\hat{\bx}\sim\tilde{\Dcal}_{\bw,\bv}}\pcc{\norm{\hat{\bx}}\ge\frac{r^{0.25}}{2\theta}} \le 4\exp\p{-\frac{\sqrt{r}}{4\theta^2}}.
			\]
			Plugging the above in \eqref{eq:expectation_to_probability} and returning to $r$-dimensional space, we have arrived at
			\begin{equation*}\label{eq:exp_tail}
				\E_{\tilde{\bx}}\pcc{ \one{\p{\bw^{\top}\tilde{\bx},\bv^{\top}\tilde{\bx}}\in A_{1,3}\cup A_{3,1}} } \le 4\exp\p{-\frac{\sqrt{r}}{4\theta^2}} \le 4\exp\p{-\frac{\sqrt{r}}{4\pi^2}}\le\frac1r,
			\end{equation*}
			where the last inequality follows from the inequality $4\exp(-\sqrt{x}/4\pi^2)\le1/x$ which holds for all $x\ge10^6$.
		\end{proof}
		
		Having established the required tools to prove Propositions~(\ref{prop:gradient_dot_product_bound},\ref{prop:low_loss_attained}), we now turn to do so.
	
		\subsection{Proof of \propref{prop:gradient_dot_product_bound}}\label{app:proof_of_gradient_dot_product_bound}
	
		\begin{proof}
			First note that the proposition statement is trivially true if $\bw=\bv$. Assume $\bw\neq\bv$ for the remainder of the proof and define $\epsilon_i\coloneqq\crelu{\bv^{\top}\tilde{\bx}_i} - y_i$. We compute
			\begin{align*}
				\inner{\nabla F(\bw),\bw-\bv}&= \frac1n\sum_{i=1}^n\p{\crelu{\bw^{\top}\tilde{\bx}_i} - y_i}\cdot \one{\bw^{\top}\tilde{\bx}_i\in(0,1)} \cdot\p{\bw^{\top}\tilde{\bx}_i-\bv^{\top}\tilde{\bx}_i}\\
				&= \frac1n\sum_{i=1}^n\p{\crelu{\bw^{\top}\tilde{\bx}_i} - \crelu{\bv^{\top}\tilde{\bx}_i} + \epsilon_i }\cdot\one{\bw^{\top}\tilde{\bx}_i\in(0,1)}\cdot\p{\bw^{\top}\tilde{\bx}_i-\bv^{\top}\tilde{\bx}_i}\\
				&= \frac1n\sum_{i=1}^n\p{\crelu{\bw^{\top}\tilde{\bx}_i} - \crelu{\bv^{\top}\tilde{\bx}_i}  }\cdot\one{\bw^{\top}\tilde{\bx}_i\in(0,1)}\cdot\p{\bw^{\top}\tilde{\bx}_i-\bv^{\top}\tilde{\bx}_i}\\
				&\hskip 1cm +\frac1n\sum_{i=1}^n\epsilon_i \cdot\one{\bw^{\top}\tilde{\bx}_i\in(0,1)}\cdot\p{\bw^{\top}\tilde{\bx}_i-\bv^{\top}\tilde{\bx}_i}.
			\end{align*}
			By \eqref{eq:small_noise}, the above is at least
			\[
				\frac{1}{2n}\sum_{i=1}^n\p{\crelu{\bw^{\top}\tilde{\bx}_i} - \crelu{\bv^{\top}\tilde{\bx}_i}  }\cdot\one{\bw^{\top}\tilde{\bx}_i\in(0,1)}\cdot\p{\bw^{\top}\tilde{\bx}_i-\bv^{\top}\tilde{\bx}_i}.
			\]
			Since $\crelu{\cdot}$ is non-decreasing, $\p{\crelu{a} - \crelu{b}}\cdot \p{a-b} \ge 0$ for all $a,b\in\reals$, and therefore each summand in the above is non-negative and we can omit summands in which $\bv^{\top}\hat{\bx}_i\notin(0,1)$, to lower bound the above by
			\begin{align*}
				&\frac{1}{2n}\sum_{i=1}^n\p{\crelu{\bw^{\top}\tilde{\bx}_i} - \crelu{\bv^{\top}\tilde{\bx}_i}  }\cdot\one{\bw^{\top}\tilde{\bx}_i\in(0,1)}\cdot\one{\bv^{\top}\tilde{\bx}_i\in(0,1)}\cdot\p{\bw^{\top}\tilde{\bx}_i-\bv^{\top}\tilde{\bx}_i}\\
				&\hskip 1cm= \frac{1}{2n}\norm{\bw-\bv}^2\sum_{i=1}^n\p{\frac{(\bw-\bv)^{\top}}{\norm{\bw-\bv}}\tilde{\bx}_i}^2\cdot\one{\bw^{\top}\tilde{\bx}_i\in(0,1)}\cdot\one{\bv^{\top}\tilde{\bx}_i\in(0,1)}.
			\end{align*}
			By \lemref{lem:rademacher_uniform_convergence}, with probability at least $1-\delta$, the above is lower bounded uniformly for all $\bw$ by
			\begin{align}
				&\frac12\norm{\bw-\bv}^2\left(\E_{\tilde{\bx}}\pcc{\p{\frac{(\bw-\bv)^{\top}}{\norm{\bw-\bv}}\tilde{\bx}_i}^2\cdot\one{\bw^{\top}\tilde{\bx}\in(0,1)}\cdot\one{\bv^{\top}\tilde{\bx}\in(0,1)}}\right. \nonumber\\
				&~~~~~~~~~~~~~~~~~~~~ \left.- 4r\sqrt{\frac{(8r+8)\log_2(n)}{n}} - \sqrt{\frac{2\log\p{2/\delta}}{n}}\right).\label{eq:expectation_lower_bound}
			\end{align}
			Let $P$ denote the orthogonal projection matrix projecting onto the subspace spanned by $\bw,\bv$. Since the expectation above is non-negative, we can lower bound it by
			\begin{equation}\label{eq:gilad_bound}
				\E_{\tilde{\bx}}\pcc{\p{\frac{(\bw-\bv)^{\top}}{\norm{\bw-\bv}}\tilde{\bx}}^2\cdot\one{\norm{P\tilde{\bx}}\le\alpha}\cdot\one{\bw^{\top}\tilde{\bx}\in(0,1)}\cdot\one{\bv^{\top}\tilde{\bx}\in(0,1)}}.
			\end{equation}
			We now argue that $\norm{\bw-\bv}^2\le\norm{\bv}^2$ implies that $\theta_{\bw,\bv}\le\frac{\pi}{2}$. This follows from
			\[
				\norm{\bw}^2-2\inner{\bw,\bv}+\norm{\bv}^2=\norm{\bw-\bv}^2\le\norm{\bv}^2,
			\]
			from which we have $\inner{\bw,\bv}\ge\frac{\norm{\bw}^2}{2}$, i.e.\ the dot product between $\bw$ and $\bv$ is strictly positive and therefore
			\begin{equation}\label{eq:angle_bound}
				\theta_{\bw,\bv}\le\frac{\pi}{2}.
			\end{equation}
			Next, we have from the triangle inequality and our assumption that $\norm{\bw}\le\norm{\bw-\bv}+\norm{\bv}\le2\norm{\bv}\le1$. Since $P$ is an orthogonal projection that keeps $\bw,\bv$ unchanged, we have that $\bw^{\top}\tilde{\bx}=P\bw^{\top}\tilde{\bx}=\bw^{\top}P\tilde{\bx}$ and $\bv^{\top}\tilde{\bx}=P\bv^{\top}\tilde{\bx}=\bv^{\top}P\tilde{\bx}$. We thus have for any $\tilde{\bx}$ in \eqref{eq:gilad_bound} by using Cauchy-Schwartz and \thmref{thm:well-behaved_random_features} which guarantees $\alpha\le1$, that $\abs{\bw^{\top}\tilde{\bx}} \le \norm{\bw}\cdot\norm{P\tilde{\bx}} \le \alpha\le\frac19$. Letting $\hat{\bw},\hat{\bv}\in\reals^2$ denote the projections of $\bw,\bv$ onto the two-dimensional subspace spanned by them, we have that \eqref{eq:gilad_bound} equals
			\begin{align}
				&\E_{\tilde{\bx}}\pcc{\p{\frac{(\bw-\bv)^{\top}}{\norm{\bw-\bv}}\tilde{\bx}}^2\cdot\one{\norm{P\tilde{\bx}}\le\alpha}\cdot\one{\bw^{\top}\tilde{\bx}>0}\cdot\one{\bv^{\top}\tilde{\bx}>0}}\nonumber\\
				=& \E_{\hat{\bx}\sim\tilde{\Dcal}_{\bw,\bv}}\pcc{\p{\frac{(\hat{\bw}-\hat{\bv})^{\top}}{\norm{\hat{\bw}-\hat{\bv}}}\hat{\bx}}^2\cdot\one{\norm{\hat{\bx}}\le\alpha}\cdot\one{\hat{\bw}^{\top}\hat{\bx}>0}\cdot\one{\hat{\bv}^{\top}\hat{\bx}>0}}\label{eq:marginal}\\
				\ge& \inf_{\bu\in\reals^2:\norm{\bu}=1}\E_{\hat{\bx}\sim\tilde{\Dcal}_{\bw,\bv}}\pcc{\p{\bu^{\top}\hat{\bx}}^2\cdot \one{\norm{\hat{\bx}}\le\alpha} \cdot\one{\hat{\bw}^{\top}\hat{\bx}>0}\cdot \one{\hat{\bv}^{\top}\hat{\bx}>0}}\nonumber\\
				\ge& \beta\inf_{\bu\in\reals^2:\norm{\bu}=1}\int_{\reals^2}\p{\bu^{\top}\hat{\bx}}^2\cdot \one{\norm{\hat{\bx}}\le\alpha} \cdot\one{\hat{\bw}^{\top}\hat{\bx}>0}\cdot \one{\hat{\bv}^{\top}\hat{\bx}>0}d\hat{\bx}\label{eq:beta_lower_bound}\\
				\ge&  \frac{\alpha^4\beta}{8\sqrt{2}}\sin^3\p{\frac{\pi}{8}}\ge \frac{\alpha^4\beta}{210} \label{eq:gilad_lemma}.
			\end{align}
			In the above, \eqref{eq:marginal} follows from the fact that all the dot products are in the subspace spanned by $\bw,\bv$, so we can take the expectation over the marginal distribution $\tilde{\Dcal}_{\bw,\bv}$; \eqref{eq:beta_lower_bound} follows from \thmref{thm:well-behaved_random_features}, since the marginal density of $\tilde{\Dcal}_{\bw,\bv}$ is lower bounded by $\beta$ for all $\hat{\bx}$ with norm at most $1/9$; and the inequalities in \eqref{eq:gilad_lemma} are by \citet[Lemma~B.1]{yehudai2020learning} applied to the vectors $\bw$ and $\bv$ and by \eqref{eq:angle_bound}. Plugging the lower bound on the expectation we obtained above in \eqref{eq:expectation_lower_bound}, the proposition follows.
		\end{proof}
	
		\subsection{Proof of \propref{prop:low_loss_attained}}\label{app:proof_of_low_loss_attained}
		
		\begin{proof}
			We begin with upper bounding the loss of making predictions using $\bw$ when the target values are determined by $\bv$:
			\begin{equation}\label{eq:pred_v_with_w}
				\frac1n\sum_{i=1}^{n}\p{\crelu{\bw^{\top}\tilde{\bx}_i} - \crelu{\bv^{\top}\tilde{\bx}_i}}^2,
			\end{equation}
			in terms of the following surrogate loss
			\begin{equation}\label{eq:surragate_loss}
				\frac1n\sum_{i=1}^{n}\p{\crelu{\bw^{\top}\tilde{\bx}_i} - \crelu{\bv^{\top}\tilde{\bx}_i}}^2\cdot\one{\bw^{\top}\tilde{\bx}_i\in(0,1)}.
			\end{equation}
			To this end, we partition the sums into $9$ parts, depending on the values the dot products take in the following manner: Define the intervals $I'_1=(-\infty,0]$, $I'_2=(0,1)$ and $I'_3=[1,\infty)$, and for $j_1,j_2\in\{1,2,3\}$ define the sets $A_{j_1,j_2}=I'_{j_1}\times I'_{j_2}$. We now wish to upper bound
			\[
				\frac1n\sum_{i=1}^{n}\p{\crelu{\bw^{\top}\tilde{\bx}_i} - \crelu{\bv^{\top}\tilde{\bx}_i}}^2 = \frac1n\sum_{j_1,j_2\in\{1,2,3\}} ~\sum_{i:\p{\bw^{\top}\tilde{\bx}_i,\bv^{\top}\tilde{\bx}_i}\in A_{j_1,j_2}} \p{\crelu{\bw^{\top}\tilde{\bx}_i} - \crelu{\bv^{\top}\tilde{\bx}_i}}^2.
			\]
			We will do so by analyzing three different cases:
			\begin{itemize}
				\item
				We begin with the observation that the sums are identical over the sets $A_{2,1},A_{2,2},A_{2,3}$ and $A_{1,1},A_{3,3}$. This is because for instances $\tilde{\bx}_i$ with dot products in $A_{1,1},A_{3,3}$, we have that $\crelu{\bw^{\top}\tilde{\bx}_i} = \crelu{\bv^{\top}\tilde{\bx}_i}$, and therefore all such summands are identically zero in both \eqref{eq:pred_v_with_w} and \eqref{eq:surragate_loss}; and for instances with dot products in $A_{2,1},A_{2,2},A_{2,3}$ we have that $\one{\bw^{\top}\tilde{\bx}_i\in(0,1)}=1$, so each summand in both equations is identical.
				
				\item
				Turning to analyze the sums for $A_{1,2},A_{3,2}$, we use \thmref{thm:random_features} with a confidence of $\delta/2$ to obtain
				\[
					\abs{\set{i:\p{\bw^{\top}\tilde{\bx}_i,\bv^{\top}\tilde{\bx}_i}\in A_{1,2}\cup A_{3,2}}} \le \frac{3600q(d)+1000c_2\sqrt{\ln\p{16n/\delta}}}{r^{0.25}}n.
				\]
				This is true since $\bv^{\top}\tilde{\bx}_i\in (0,1)$ implies that $\bv$ misclassifies  $\tilde{\bx}_i$ which can only happen on the above fraction of the data instances. This allows us to derive the following upper bound
				\begin{align*}
					&\frac1n\sum_{i:\p{\bw^{\top}\tilde{\bx}_i,\bv^{\top}\tilde{\bx}_i}\in A_{1,2}\cup A_{3,2}}\p{\crelu{\bw^{\top}\tilde{\bx}_i} - \crelu{\bv^{\top}\tilde{\bx}_i}}^2 \le \frac1n\sum_{i:\p{\bw^{\top}\tilde{\bx}_i,\bv^{\top}\tilde{\bx}_i}\in A_{1,2}\cup A_{3,2}} 1\\
					&\hskip 1cm
					\le \frac{3600q(d)+1000c_2\sqrt{\ln\p{16n/\delta}}}{r^{0.25}}.
				\end{align*}
				\item
				Lastly, assuming that $\theta_{\bw,\bv}>0$ we have uniformly for any $\bw\in\reals^r$ with probability at least $1-\frac{\delta}{2}$, the sums over the sets $A_{1,3},A_{3,1}$ are at most
				\begin{align*}
					&\frac1n\sum_{i:\p{\bw^{\top}\tilde{\bx}_i,\bv^{\top}\tilde{\bx}_i}\in A_{1,3}\cup A_{3,1}}\p{\crelu{\bw^{\top}\tilde{\bx}_i} - \crelu{\bv^{\top}\tilde{\bx}_i}}^2
					=
					\frac1n\sum_{i=1}^{n}\one{i:\p{\bw^{\top}\tilde{\bx}_i,\bv^{\top}\tilde{\bx}_i}\in A_{1,3}\cup A_{3,1}}\\
					&\hskip 0.6cm\le
					\E_{\tilde{\bx}}\pcc{ \one{\p{\bw^{\top}\tilde{\bx}_i,\bv^{\top}\tilde{\bx}_i}\in A_{1,3}\cup A_{3,1}} } + 4\sqrt{\frac{(8r+8)\log_2(n)}{n}} + \sqrt{\frac{2\ln\p{4/\delta}}{n}}\\
					&\hskip 0.6cm\le \frac1r + 4\sqrt{\frac{(8r+8)\log_2(n)}{n}} + \sqrt{\frac{2\ln\p{4/\delta}}{n}}.
				\end{align*}
				where the equality is due to $\crelu{\bw^{\top}\tilde{\bx}_i} - \crelu{\bv^{\top}\tilde{\bx}_i}=1$ for any $\tilde{\bx}_i$ with dot products in $A_{1,3}\cup A_{3,1}$, the first inequality is due to \lemref{lem:rademacher_uniform_convergence} using a confidence of $\delta/2$, and the last inequality is due to \lemref{lem:region_expectation_inequality}.
			\end{itemize}
			We can now combine the above three cases using a union bound to deduce that with probability at least $1-\delta$
			\begin{align}
				&\frac1n\sum_{i=1}^{n}\p{\crelu{\bw^{\top}\tilde{\bx}_i}- \crelu{\bv^{\top}\tilde{\bx}_i}}^2\nonumber\\
				&\hskip 2cm  - \frac1r - 4\sqrt{\frac{(8r+8)\log_2(n)}{n}} - \sqrt{\frac{2\log\p{4/\delta}}{n}} - \frac{3600q(d)+1000c_2\sqrt{\ln\p{16n/\delta}}}{r^{0.25}}\nonumber\\
				&\hskip 1cm \le  \frac1n\sum_{i=1}^{n}\p{\crelu{\bw^{\top}\tilde{\bx}_i} - \crelu{\bv^{\top}\tilde{\bx}_i}}^2\cdot\one{\bw^{\top}\tilde{\bx}_i\in(0,1)}\nonumber\\
				&\hskip 1cm \le  \frac1n\sum_{i=1}^{n}\p{\crelu{\bw^{\top}\tilde{\bx}_i} - \crelu{\bv^{\top}\tilde{\bx}_i}}\cdot\one{\bw^{\top}\tilde{\bx}_i\in(0,1)} \cdot \p{\bw^{\top}\tilde{\bx}_i-\bv^{\top}\tilde{\bx}_i}, \label{eq:w_v_alpha_beta_bound}
			\end{align}
			where in the last inequality we used that fact that $\crelu{\cdot}$ is $1$-Lipschitz. We remark that in the case where $\theta_{\bw,\bv}=0$, the only possible non-empty dot product sets are $A_{1,1},A_{2,2},A_{3,3}$ and possibly also at most one of either $A_{2,3},A_{3,2}$, therefore the upper bound we derived above is also valid in the case where $\theta_{\bw,\bv}=0$. This holds true since we can simply skip the third case which analyzes the sums with dot products in $A_{1,3},A_{3,1}$ which requires that $\theta_{\bw,\bv}>0$ to invoke \lemref{lem:region_expectation_inequality}.
			
			Turning to upper bound the expression 
			\[
				\frac 2n\sum_{i=1}^n \p{\crelu{\bv^{\top}\tilde{\bx}_i} - y_i} \cdot \one{\bw^{\top}\tilde{\bx}_i\in(0,1)} \cdot\p{\bw^{\top}\tilde{\bx}_i-\bv^{\top}\tilde{\bx}_i},
			\]
			we have from \thmref{thm:random_features} that the above loss can be decomposed into the loss over four intervals $I_1,\ldots,I_4$ and upper bounded as follows (note that we previously assumed that the theorem implications hold so we don't need to further evaluate the confidence over which they do)
			\begin{align*}
				&\frac 2n\sum_{j=1}^4\sum_{i:\norm{\bx_i}\in I_j} \p{\crelu{\bv^{\top}\tilde{\bx_i}} - y_i} \cdot \one{\bw^{\top}\tilde{\bx}_i\in(0,1)} \cdot\p{\bw^{\top}\tilde{\bx}_i-\bv^{\top}\tilde{\bx}_i}\\
				&\hskip 1cm 
				= \frac 2n\sum_{i:\norm{\bx_i}\in I_2\cup I_4} \p{\crelu{\bv^{\top}\tilde{\bx}_i} - y_i} \cdot \one{\bw^{\top}\tilde{\bx}_i\in(0,1)} \cdot\p{\bw^{\top}\tilde{\bx}_i-\bv^{\top}\tilde{\bx}_i}\\
				&\hskip 1cm 
				\le \frac 2n\sum_{i:\norm{\bx_i}\in I_2\cup I_4} 4 \le 8\frac{3600q(d)+1000c_2\sqrt{\ln\p{16n/\delta}}}{r^{0.25}},
			\end{align*}
			where the first inequality is due to $\crelu{\bv^{\top}\tilde{\bx}_i} - y_i \le 1$, $\one{\bw^{\top}\tilde{\bx}_i\in(0,1)}\le1$ and $\abs{\bw^{\top}\tilde{\bx}_i-\bv^{\top}\tilde{\bx}_i}\le4$ for all $i$ such that $\norm{\bx_i}\in I_2\cup I_4$. Combining the above with \eqref{eq:w_v_alpha_beta_bound} and the inequality in the proposition statement, we have arrived at the following bound
			\begin{align}
				&\frac1n\sum_{i=1}^{n}\p{\crelu{\bw^{\top}\tilde{\bx}_i}- \crelu{\bv^{\top}\tilde{\bx}_i}}^2 \le 9\cdot\frac{3600q(d)+1000c_2\sqrt{\ln\p{16n/\delta}}}{r^{0.25}}  \nonumber\\ &\hskip 6.5cm
				+\frac1r + 4\sqrt{\frac{(8r+8)\log_2(n)}{n}} + \sqrt{\frac{2\ln\p{4/\delta}}{n}}. \label{eq:w_v_loss_bound}
			\end{align}
			We now use \thmref{thm:random_features} one last time to upper bound the squared loss of $\bv$ and obtain
			\[
				\frac1n\sum_{i=1}^{n}\p{\crelu{\bv^{\top}\tilde{\bx}_i} - y_i}^2 
				\le
				\frac1n \sum_{i:\norm{\bx_i}\in I_2\cup I_4}1
				\le
				\frac{3600q(d)+1000c_2\sqrt{\ln\p{16n/\delta}}}{r^{0.25}}.
			\]
			Combining the above with \eqref{eq:w_v_loss_bound} using the inequality $(a+b)^2\le2a^2+2b^2$ with $a=\crelu{\bw^{\top}\tilde{\bx}_i}- \crelu{\bv^{\top}\tilde{\bx}_i}$ and $b=\crelu{\bv^{\top}\tilde{\bx}_i} - y_i$, the proposition follows.
		\end{proof}

		\section{Proofs of \thmref{thm:gd_convergence} and \thmref{thm:generalization}}
		
		\subsection{Proof of \thmref{thm:generalization}}\label{app:gen_proof}
		\begin{proof}
			Define the function class
			\[
				\Fcal\coloneqq\set{\tilde{\bx}\mapsto\bw^{\top}\tilde{\bx}:\norm{\bw}\le1},
			\]
			and define the loss
			\[
				\ell(y,y')=\p{\crelu{y} - \crelu{y'}}^2.
			\]
			Using standard Rademacher complexity arguments \citep[Thm.~26.5]{shalev2014understanding} and the fact that $\ell$ is upper bounded by $1$, we have with probability at least $1-\delta$ that
			\[
				F(\bw) \le \hat{F}(\bw) + 2R_n\p{\ell\circ\Fcal\p{\tilde{\bx}_1,\ldots,\tilde{\bx}_n}} + 4\sqrt{\frac{2\ln(4/\delta)}{n}}.
			\]
			where $R_n\p{\Fcal\p{\tilde{\bx}_1,\ldots,\tilde{\bx}_n}} \coloneqq \E\pcc{\sup_{f_{\bw}\in\Fcal}\abs{\frac{1}{n} \sum_{i=1}^{n}\xi_i f_{\bw}(\tilde{\bx}_i) }}$ is the empirical Rademacher complexity of $\Fcal$, and the expectation is over $\xi_1,\ldots,\xi_n$ which are i.i.d.\ Rademacher random variables. To bound the empirical Rademacher complexity, we have from \citet[Lemma~26.10]{shalev2014understanding} that
			\[
				R_n\p{\Fcal\p{\tilde{\bx}_1,\ldots,\tilde{\bx}_n}} \le \frac{\max_i\norm{\tilde{\bx}_i}}{\sqrt{n}} \le \sqrt{\frac{r}{n}}.
			\]
			From the above, the contraction lemma \citep[Lemma~26.9]{shalev2014understanding} and since $\ell$ is $2$-Lipschitz, we conclude
			\[
				F(\bw) \le \hat{F}(\bw) + 4\sqrt{\frac rn} + 4\sqrt{\frac{2\ln(4/\delta)}{n}}.
			\]
		\end{proof}

		\subsection{Proof of \thmref{thm:gd_convergence}}\label{app:proof_gd_convergence}
		
		\begin{proof}
			Before we delve into the proof of the theorem, we will first establish several technical inequalities that are implied by the assumptions in the theorem statement and that will be used throughout the proof. Let $\alpha,\beta\in(0,1)$ denote the constants guaranteed by applying \thmref{thm:well-behaved_random_features} which is justified by Assumption~\ref{asm:d_dist_assumption}, and denote 
			\[
				p\coloneqq \frac{1800q(d)+500c_2\sqrt{\ln\p{64n/\delta}}}{r^{0.25}}.
			\]
			Then we have that
			\begin{align}
				&r\ge\max\set{12000^4,500^4\max\set{c_1,10}^4\ln^2(64n/\delta)},\label{eq:r_lower_bound}\\
				&\sqrt{\frac{\ln(64/\delta)}{2n}} \le p \le \frac{\epsilon}{16},\label{eq:p_bound}\\
				&4r\sqrt{\frac{(8r+8)\log_2(n)}{n}} +  \sqrt{\frac{2\ln\p{8/\delta}}{n}} \le \frac{\alpha^4\beta}{420},\label{eq:alpha_beta}\\
				&4\sqrt{\frac rn} + 4\sqrt{\frac{2\ln(16/\delta)}{n}} \le \frac{\epsilon}{2},\label{eq:rademacher_gen_bound}\\
				&40p + \frac2r + 8\sqrt{\frac{(8r+8)\log_2(n)}{n}} + 2\sqrt{\frac{2\ln\p{16/\delta}}{n}} \le \frac{\epsilon}{2}.\label{eq:9_regions_bound}
			\end{align}
			We first note that \eqref{eq:9_regions_bound} implies that $40p\le\epsilon/2$ and in particular $p\le\epsilon/16$, which establishes the second inequality in \eqref{eq:p_bound}. For the first inequality in \eqref{eq:p_bound}, we compute
			\[
				\sqrt{\frac{\ln(64/\delta)}{2n}} \le \frac{1}{r^{0.25}} \cdot \frac{\sqrt{\ln(64/\delta)}}{r^{0.25}} \le \frac{1800q(d)+500c_2\sqrt{\ln\p{64n/\delta}}}{r^{0.25}} = p,
			\]
			where in the above, the first inequality is due to $2n\ge r$ and the second inequality is due to our assumption in the theorem statement which implies $r^{0.25}\ge \max\{q(d),c_2\}$ if $c>0$ is sufficiently large. Next, we compute
			\[
				\frac{\ln(64n/\delta)}{\sqrt{r}} = \frac{1}{r^{0.4}} \p{\frac{\ln(64c)}{r^{0.1}} + \frac{\ln(r^3\log_2^2(r))}{r^{0.1}}} + \frac{\ln(1/\delta)}{\sqrt{r}}.
			\]	
			In the above we first have from the assumption in the theorem statement that $r^{-0.4}\le c^{-0.4}\epsilon^2$. We can therefore bound the parenthesized expression by an absolute constant to establish that it can be made arbitrarily small. Upper bound the term
			\[
				\frac{\ln(64c)}{r^{0.1}}\le \frac{\ln(64c)}{c^{0.1}}\sqrt{\epsilon} \le 6,
			\]
			which follows from $\sup_x \ln(64x)/x^{0.25}\le6$ and $\epsilon\le1$. For the second term in the parentheses, we compute
			\[
				\frac{\ln(r^3\log_2^2(r))}{r^{0.1}} = \frac{3\ln(r)}{r^{0.1}} + \frac{2\ln(\log_2(r))}{r^{0.1}} \le 12 + 3 = 15,
			\]
			which follows from $\sup_x 3\ln(x)/x^{0.1}\le12$ and $\sup_x 2\ln(\log_2(x))/x^{0.1}\le3$. The bound $\ln(1/\delta)/\sqrt{r}\le \frac{\epsilon^2}{\sqrt{c}}$ is immediate from the assumption in the theorem statement that $r\ge c\epsilon^{-5}\ln^2(1/\delta) \ge c\epsilon^{-4}\ln^2(1/\delta)$. Combining the bounds on the three terms we have shown that
			\begin{equation}\label{eq:ln_n_over_r}
				\frac{\ln(64n/\delta)}{\sqrt{r}} \le \epsilon^2\p{\frac{21}{c^{0.4}} + \frac{1}{\sqrt{c}}}.
			\end{equation}
			Squaring the above, rearranging and scaling $c$ to be sufficiently large (while recalling that $\epsilon\le1$) implies \eqref{eq:r_lower_bound}.
			
			To establish the remaining inequalities in Eqs.~(\ref{eq:alpha_beta}-\ref{eq:9_regions_bound}), we would first need to upper bound the following two terms:
			\begin{equation}\label{eq:constant_or_epsilon}
				2\sqrt{\frac{2\ln\p{16/\delta}}{n}} \le \frac2r\sqrt{\frac{2\ln\p{16/\delta}}{r}} \le \frac{2\epsilon}{c},
			\end{equation}
			where in the above, the first inequality is by $n\ge r^3$ for $c>1$ and the second inequality is by our assumption that $r\ge2\ln(16/\delta)$ when $c$ is sufficiently large, and that $r\ge c\epsilon^{-5} \ge c\epsilon^{-1}$. Next, we upper bound the term
			\begin{align*}
				4r\sqrt{\frac{(8r+8)\log_2(n)}{n}} &= 4\sqrt{\frac{(8r+8)\log_2\p{cr^3\log_2^2(r)}}{cr\log_2^2(r)}}\\
				&\le \frac{16}{\sqrt{c}}\sqrt{\frac{\log_2\p{cr^3\log_2^2(r)}}{\log_2^2(r)}}\\
				&= \frac{16}{\sqrt{c}}\sqrt{\frac{\log_2^2(c) + 3\log_2(r) + 2\log_2(\log_2(r)) }{\log_2^2(r)}}.
			\end{align*}
			By the change of variables $x=\log_2(r)$ we have that the above equals
			\[
				\frac{16}{\sqrt{c}}\sqrt{\frac{\log_2^2(c) + 3x + 2\log_2(x) }{x^2}} \le \frac{16}{\sqrt{c}}\sqrt{\log_2^2(c) + 3 + 1 } \le \frac{16}{c^{0.25}}\sqrt{5+3+1} \le \frac{48}{c^{0.25}},
			\]
			where the first inequality follows by our assumption $r\ge2$ which implies $\log_2(r)\ge1$ and thus $3/x\le3$, and since $\sup_x 2\log_2(x)/x^2\le1$; and the second inequality follows from $\sup_x \log_2^2(x)/\sqrt{x}\le5$. Combining this with \eqref{eq:constant_or_epsilon}, we can thus take $c>0$ sufficiently large so that the left-hand side of \eqref{eq:alpha_beta} is smaller than any constant, thus satisfying the inequality in the aforementioned equation. Moreover, in the above we have also shown (when combined with \eqref{eq:constant_or_epsilon} again) that
			\[
				8\sqrt{\frac{(8r+8)\log_2(n)}{n}} + 2\sqrt{\frac{2\ln\p{16/\delta}}{n}} \le \frac{96}{c^{0.25}r} + \frac{2\epsilon}{c} \le \frac{96\epsilon}{c^{1.25}} + \frac{2\epsilon}{c},
			\]
			where the second inequality uses our assumption in the theorem statement that $r\ge c\epsilon^{-1}$. Thus, for sufficiently large $c>0$ we have that the above is at most $\epsilon/4$ which upper bounds the last two summands in the left-hand side of \eqref{eq:9_regions_bound} and also implies \eqref{eq:rademacher_gen_bound}. It only remains to show that $40p+2/r\le\epsilon/4$ to establish the last remaining inequality in \eqref{eq:9_regions_bound}. To this end, compute
			\[
				p = \frac{1800q(d)+500c_2\sqrt{\ln\p{64n/\delta}}}{r^{0.25}} = \frac{1800q(d)}{r^{0.25}} + 500c_2\sqrt{\frac{\ln\p{64n/\delta}}{\sqrt{r}}}.
			\]
			Bounding the term $1800q(d)/r^{0.25}$, we have from our assumption in the theorem statement $r\ge c\epsilon^{-4}q^4(d)$ that this is at most $1800c^{-0.25}\epsilon$. Using \eqref{eq:ln_n_over_r} to bound the term inside the square root, we obtain
			\[
				p\le 1800c^{-0.25}\epsilon + 500c_2\sqrt{\frac{21}{c^{0.4}} + \frac{1}{\sqrt{c}}}\cdot\epsilon,
			\]
			which for a sufficiently large $c>0$ is smaller than $\epsilon/320$ and therefore $40p\le\epsilon/8$. The $2/r$ term is also at most $\epsilon/8$ by our assumption, concluding the derivation of the inequalities in Eqs.~(\ref{eq:r_lower_bound}-\ref{eq:9_regions_bound}).
			
			Turning to the proof of the theorem, we can apply Thms.~(\ref{thm:random_features},\ref{thm:generalization}) and Propositions~(\ref{prop:gradient_dot_product_bound},\ref{prop:low_loss_attained}) with a confidence parameter of $\delta/4$ in each, by using Eqs.~(\ref{eq:r_lower_bound},\ref{eq:p_bound}), and further assume that their implications hold throughout the remainder of the proof. Note that this happens with probability of at least $1-\delta$ by a union bound.
			
			We first show that for some $t\in\{0,1,\ldots,T\}$, GD finds a point $\bw_t$ such that 
			\begin{equation}\label{eq:half_epsilon}
				\hat{F}(\bw_t)\le \frac{\epsilon}{2}.
			\end{equation}
			If the assumption in \propref{prop:gradient_dot_product_bound} is violated by $\bw_0$, then we have from \propref{prop:low_loss_attained} and \eqref{eq:9_regions_bound} that $\hat{F}(\bw_0)\le\frac{\epsilon}{2}$. Otherwise, we have from \thmref{thm:well-behaved_random_features} that $\nu=\frac{\alpha^4\beta}{840}\le1$, and thus from \propref{prop:gradient_dot_product_bound} and \eqref{eq:alpha_beta} we get
			\[
				\inner{\nabla F(\bw_0),\bw_0-\bv} \ge \nu\norm{\bw_0-\bv}^2,
			\]
			With the above inequality, we can follow a similar approach as in \citet[Thm.~5.3(2)]{yehudai2020learning} and deduce that for $t=0$,
			\begin{align*}
				\norm{\bw_{t+1} - \bv}^2 &= \norm{\bw_t-\eta\nabla F(\bw_t)-\bv}^2\\
				&= \norm{\bw_{t} - \bv}^2-2\eta\inner{\nabla F(\bw_t),\bw_t-\bv} +\eta^2\norm{\nabla F(\bw_t)}^2\\
				&\le\norm{\bw_t-\bv}^2(1-\eta\nu) + \eta^2\norm{\nabla F(\bw_t)}^2.
			\end{align*}
			We shall now derive a bound on the norm of the gradient in the above expression as follows
			\begin{align*}
				\norm{\nabla F(\bw_t)} &=
				\norm{\frac2n\sum_{i=1}^n\p{\crelu{\bw_t^{\top}\tilde{\bx}_i} - y_i}\cdot \one{\bw_t^{\top}\tilde{\bx}_i\in(0,1)} \cdot\tilde{\bx}_i}\\
				&\le\frac2n\sum_{i=1}^n\abs{\p{\crelu{\bw_t^{\top}\tilde{\bx}_i} - y_i}\cdot \one{\bw_t^{\top}\tilde{\bx}_i\in(0,1)}} \cdot\norm{\tilde{\bx}_i} \le 2\sqrt{r}.
			\end{align*}
			By our assumption $\eta<\frac{\nu}{8r}$ we have
			\[
				1-\eta\nu+4\eta^2r < 1- \frac{\eta\nu}{2}<1,
			\]
			which implies
			\begin{equation}\label{eq:iter}
				\norm{\bw_{t+1}-\bv}^2 \le (1-\eta\nu + 4\eta^2r)\norm{\bw_{t}-\bv}^2 \le \p{1-\frac{\eta\nu}{2}}\norm{\bw_{t}-\bv}^2.
			\end{equation}
			In particular $\norm{\bw_{t+1}-\bv}^2\le\norm{\bw_{t}-\bv}^2<\norm{\bv}^2$, and therefore
			\[
				\norm{\bw_{t+1}}\le\norm{\bw_{t+1}-\bv}+\norm{\bv} \le 2\norm{\bv} \le 2r^{-0.25}.
			\]
			Now, if the inequality in \eqref{eq:small_noise} is violated, then by \propref{prop:low_loss_attained} and \eqref{eq:9_regions_bound} we have that $\hat{F}(\bw_1)\le \frac{\epsilon}{2}$ and thus we have shown that \eqref{eq:half_epsilon} holds for $t=1$. Otherwise, we have that all the conditions in \propref{prop:gradient_dot_product_bound} hold for $\bw_{t+1}$, and we can therefore apply both propositions iteratively until \eqref{eq:small_noise} is violated and \eqref{eq:half_epsilon} holds for some $t<T$, or until $T$ iterations of GD have been performed. In the latter case, we can apply \eqref{eq:iter} iteratively to obtain
			\[
				\norm{\bw_T-\bv}^2 \le 	\p{1-\frac{\eta\nu}{2}}^{T}\norm{\bw_0-\bv}^2\le \exp\p{-\ln\p{\frac{r}{8\epsilon}}}\norm{\bw_0-\bv}^2 \le \frac{\epsilon}{8r},
			\]
			where the second inequality is due to $T=2\eta^{-1}\nu^{-1}\ln\p{r/8\epsilon}$ and the inequality $(1-1/x)^x\le\exp(-1)$ which holds for all $x\ge1$, and the last inequality is due to $\norm{\bw_t},\norm{\bv}\le0.5$ and the triangle inequality. Using the above, we can compute
			\begin{align*}
				\frac1n\sum_{i=1}^{n}\p{\crelu{\bw_T^{\top}\tilde{\bx}_i}- \crelu{\bv^{\top}\tilde{\bx}_i}}^2 &\le \frac1n\sum_{i=1}^{n}\p{\bw_T^{\top}\tilde{\bx}_i - \bv^{\top}\tilde{\bx}_i}^2\\
				& \le \frac1n\sum_{i=1}^{n}\norm{\bw_T-\bv}^2\cdot\norm{\tilde{\bx}_i}^2 \le \frac{\epsilon}{8},
			\end{align*}
			where the first inequality uses the fact that $\crelu{\cdot}$ is $1$-Lipschitz and the second inequality is by Cauchy-Schwartz. We can now bound the squared loss of $\bv$ by using \thmref{thm:random_features} to get
			\[
				\frac1n\sum_{i=1}^{n}\p{\crelu{\bv^{\top}\tilde{\bx}_i} - y_i}^2 
				\le
				\frac1n \sum_{i:\norm{\bx_i}\in I_2\cup I_4}1
				\le
				\frac{3600q(d)+1000c_2\sqrt{\ln\p{32n/\delta}}}{r^{0.25}}\le\frac{\epsilon}{8},
			\]
			where the last inequality is implied by \eqref{eq:p_bound}. Combining the last two displayed inequalities using the inequality $(a+b)^2\le2a^2+2b^2$ with $a=\crelu{\bw_T^{\top}\tilde{\bx}_i}- \crelu{\bv^{\top}\tilde{\bx}_i}$ and $b=\crelu{\bv^{\top}\tilde{\bx}_i} - y_i$, we have that
			\[
				\hat{F}(\bw_T) \le \frac{\epsilon}{2}.
			\]
			We finish the proof by showing that this results in a generalization bound. Using \thmref{thm:generalization} and Eqs.~(\ref{eq:rademacher_gen_bound},\ref{eq:half_epsilon}), we have for some $t\in\{0,1,\ldots,T\}$ that
			\[
				F(\bw_t) \le \hat{F}(\bw_t) + 4\sqrt{\frac rn} + 4\sqrt{\frac{2\ln(16/\delta)}{n}} \le \frac{\epsilon}{2} + \frac{\epsilon}{2} = \epsilon.
			\]
			The proof of the theorem is complete.
		\end{proof}

		\section{Learning Rate Stability is Insufficient for Efficient Learning}\label{app:stability}
		
		In this appendix, we show that although the stability of the learning rate is sufficient to guarantee the asymptotic convergence of GD, it is not sufficient for convergence in polynomial time. For an $L$-smooth function $f$, it is well-known that GD with a fixed and stable step size $\eta$, meaning that $\eta<\frac2L$, converges to a local minimum asymptotically (e.g.\ \citep[Thm.~2.1.14]{nesterov2018lectures}). Moreover, there are examples for which no non-trivial guarantee can be given if $\eta\ge\frac2L$. We now provide an example of a $2$-smooth function where $T\ge4$ iterations of GD with a step size of $\eta>\frac2L-\frac1T$ result in only constant progress towards the global minimum.
		
		Consider the objective $f(x)=x^2$ with the global minimum $x^*=0$. Then the gradient step update for $\eta>\frac2L-\frac1T=1-\frac1T\ge\frac34$ is given by
		\[
			x_{t+1}=x_t-2\eta x_t=(1-2\eta)x_t.
		\]
		Iteratively applying the above recursion we obtain
		\[
			\abs{x_T-x^*}=\abs{x_T}=\abs{(1-2\eta)^Tx_0} = (2\eta-1)^T\abs{x_0}\ge \p{1-\frac2T}^T\abs{x_0} \ge \frac{1}{16}\abs{x_0}.
		\]
		In the above, the third equality is due to $\eta\ge\frac34$ which guarantees that $2\eta-1>0$, the first inequality follows from the fact that $(2\eta-1)^T$ is increasing for all $\eta>0.5$, and the last inequality is due to $(1-2/x)^x\ge\frac{1}{16}$ for all $x\ge4$.
		
		It follows from the above that GD with a step size larger than $\frac2L-\frac1T$ cannot get to less than a constant fraction of its initialization distance from the global minimizer. Thus, even if we allow the step size to be just a small margin away from the edge of stability $2/L$, convergence cannot be guaranteed in general. This results in the necessary condition that $\eta\le\frac2L-\frac1T$ which is a stronger assumption than merely requiring that the learning rate is stable. We also remark that while the example given in this appendix is of a strongly convex function, our analysis holds locally and therefore also applies in the non-convex regime, given that the objective possesses a local minimum with a finite smoothness parameter $L$ in its neighborhood.
		
		

\end{document}